\newcommand{\alg}{\mathcal{A}}
\newcommand{\real}{\mathbb{R}}
\newcommand{\bnu}{\ensuremath{\boldsymbol{\nu}}}
\newcommand{\horizon}{T}
\newcommand{\episode}{\ell}
\newcommand{\arms}{K}
\newcommand \pol {\ensuremath{\pi}}
\newcommand \model {\ensuremath{\nu}}
\newcommand \defn {\mathrel{\triangleq}}
\newcommand \dd {\,\mathrm{d}}
\DeclareMathOperator*{\argmax}{arg\,max}
\DeclareMathOperator*{\argmin}{arg\,min}
\newcommand \expect {\mathop{\mbox{\ensuremath{\mathbb{E}}}}\nolimits}
\newcommand \onenorm[1]{\left\|#1\right\|_1}
\newcommand \setof[1] {\left\{#1\right\}}
\newcommand \ind[1] {\mathds{1}\left\{#1\right\}}
\newcommand \KL[2] {\mathrm{KL}\left( #1 ~\middle\|~ #2\right)}
\newcommand \TV[2] {\mathrm{TV}\left( #1 ~\middle\|~ #2\right)}
\newcommand \privmean[2]  {\textcolor{red}{\tilde \mu_{#1, #2}}}
\newcommand{\dham}{d_{\text{Ham}}}
\newtheorem{theorem}{Theorem}
\newtheorem{corollary}{Corollary}
\newtheorem{lemma}{Lemma}
\newtheorem{proposition}{Proposition}
\newtheorem{definition}{Definition}
\newtheorem{remark}{Remark}
\newtheorem{example}{Example}
\newtheorem*{rep@theorem}{\rep@title}
\newcommand{\newreptheorem}[2]{%
	\newenvironment{rep#1}[1]{%
		\def\rep@title{\textbf{#2} \ref{##1}}%
		\begin{rep@theorem}}%
		{\end{rep@theorem}}}
\DeclareRobustCommand{\bigO}{\text{\usefont{OMS}{cmsy}{m}{n}O}}
\newif\ifdoublecol
\DeclareRobustCommand{\bigO}{%
  \text{\usefont{OMS}{cmsy}{m}{n}O}%
}
\tikzset{
   container/.style = {rectangle, rounded corners, draw=yellow, dashed,
fit=#1, inner sep=6mm, node contents={}},
circle-label/.style = {circle, draw}
        }
\tikzset{box/.style={draw, diamond, thick, text centered, minimum height=0.5cm, minimum width=1cm, text width=0.9cm}}
\tikzset{line/.style={draw, thick, -latex'}}
\newcommand{\dpse}{{\ensuremath{\mathsf{DP\text{-}SE}}}}
\newcommand{\adaptt}{{\ensuremath{\mathsf{AdaP\text{-}TT}}}}
\newcommand{\adapucb}{{\ensuremath{\mathsf{AdaP\text{-}UCB}}}}
\newcommand{\indi}[1]{\mathds{1}\left(#1\right)}
\newcommand{\R}{\mathbb{R}}
\newcommand{\bE}{\mathbb{E}}
\newcommand{\bP}{\mathbb{P}}
\newcommand{\N}{\mathbb{N}}
\newcommand{\cO}{\mathcal{O}}
\newcommand{\cC}{\mathcal{C}}
\newcommand{\cE}{\mathcal{E}}
\title{On the Complexity of Differentially Private\\ Best-Arm Identification with Fixed Confidence}
\author{%
  Achraf Azize \\
  \'Equipe Scool, Univ. Lille, Inria,\\ 
  CNRS, Centrale Lille, UMR 9189- CRIStAL\\ 
  F-59000 Lille, France\\
  \texttt{achraf.azize@inria.fr} \\
  \And
  Marc Jourdan \\
  \'Equipe Scool, Univ. Lille, Inria,\\ 
  CNRS, Centrale Lille, UMR 9189- CRIStAL\\ 
  F-59000 Lille, France\\
  \texttt{marc.jourdan@inria.fr} \\
  \And
  Aymen Al Marjani \\
  UMPA, ENS Lyon \\
  Lyon, France \\
  \texttt{aymen.al\_marjani@ens-lyon.fr} \\
  \And
  Debabrota Basu \\
  \'Equipe Scool, Univ. Lille, Inria,\\ 
  CNRS, Centrale Lille, UMR 9189- CRIStAL\\ 
  F-59000 Lille, France\\
  \texttt{debabrota.basu@inria.fr} \\
}
\begin{document}

\maketitle
\doparttoc 
\faketableofcontents 
\begin{abstract}%
Best Arm Identification (BAI) problems are progressively used for data-sensitive applications, such as designing adaptive clinical trials, tuning hyper-parameters, and conducting user studies to name a few. Motivated by the data privacy concerns invoked by these applications, we study the problem of \textit{BAI with fixed confidence under $\epsilon$-global Differential Privacy (DP)}. 
First, to quantify the cost of privacy, we derive a \textit{lower bound on the sample complexity} of any $\delta$-correct BAI algorithm satisfying $\epsilon$-global DP. Our lower bound suggests the existence of two privacy regimes depending on the privacy budget $\epsilon$. In the high-privacy regime (small $\epsilon$), the hardness depends on a coupled effect of privacy and a novel information-theoretic quantity, called the \textit{Total Variation Characteristic Time}. In the low-privacy regime (large $\epsilon$), the sample complexity lower bound reduces to the classical non-private lower bound.  
Second, we propose \adaptt{}, an $\epsilon$-global DP variant of the Top Two algorithm. \adaptt{} runs in \textit{arm-dependent adaptive episodes} and adds \textit{Laplace noise} to ensure a good privacy-utility trade-off. We derive an asymptotic upper bound on the sample complexity  of \adaptt{} that matches with the lower bound up to multiplicative constants in the high-privacy regime. 
Finally, we provide an experimental analysis of \adaptt{} that validates our theoretical results.
\end{abstract}%
\section{Introduction}
We study the stochastic multi-armed \textit{bandit} problem~\citep{lattimore2018bandit}, which allows us to reflect on fundamental information-utility trade-offs involved in interactive sequential learning. Specifically, in a bandit problem, a learning \textit{agent} is exposed to interact with $K$ unknown probability distributions $\lbrace \nu_1, \ldots, \nu_K \rbrace$ with bounded expectations, referred as the \textit{reward distributions} (or \textit{arms}). $\boldsymbol{\nu} \triangleq \lbrace \nu_1, \ldots, \nu_K \rbrace$ is called \textit{a bandit instance}. At every step $t>0$, the agent chooses to interact with one of the reward distributions $\nu_{A_t}$ for an $A_t \in [K]$, and obtains a sample (or \textit{reward}) $r_t$ from it. The goal of the agent can be of two types: (a) maximise the reward accumulated over time, or equivalently to minimise the regret, and (b) to find the reward distribution (or arm) with highest expected reward. The first problem is called the regret-minimisation problem~\citep{auer2002finite}, while the second one is called the \textit{Best Arm Identification (BAI)} problem~\citep{kaufmann2016complexity}. In this paper, we focus on the BAI problem, i.e. to compute
\begin{align}\label{eq:BAI}
    a^\star \triangleq \argmax_{{\color{red}a} \in [K]} \underset{r\sim \nu_{\color{red}a}}{\mathbb{E}}[r] \triangleq \argmax_{{\color{red}a} \in [K]} \mu_{\color{red}a} .\tag{BAI}
\end{align}
With its advent in 1950s~\citep{bechhofer1954single,bechhofer1958sequential} and resurgence in last two decades~\citep{mannor2004sample,gabillon2012best,jamieson2014lil,kaufmann2016complexity,degenne2019non}, BAI has been extensively studied with different structural assumptions (Fixed-confidence:~\cite{jamieson2014best}; Fixed-budget:~\cite{carpentier2016tight}; Non-stochastic:~\cite{jamieson2016non}; Best-of-both-worlds:~\cite{abbasi2018best}; Linear:~\cite{soare2014best}). In this paper, we specifically investigate the \textit{Fixed Confidence BAI problem}, in brief FC-BAI, that yields a $\delta$-correct recommendation $\hat{a} \in [K]$ satisfying $\Pr(\hat{a} \neq a^\star) \leq \delta$. 
FC-BAI is increasingly deployed for different applications, such as clinical trials~\citep{aziz2021multi}, hyper-parameter tuning~\citep{li2017hyperband}, communication networks~\citep{lindstaahl2022measurement}, online advertisement~\citep{chen2014combinatorial}, crowd-sourcing~\citep{zhou2014optimal}, user studies~\citep{losada2022day}, and pandemic mitigation~\citep{libin2019bayesian} to name a few. All of these applications often involve the sensitive and personal data of users, which raises serious data privacy concerns~\citep{tucker2016protecting}, as illustrated in Example~\ref{ex:1}. 


\begin{example}[Adaptive dose finding trial]\label{ex:1}
In a dose-finding trial, one physician decides $K$ possible dose levels of a medicine based on preliminary studies $($Typically, $K \in \{3, \ldots, 10\}$ in practice~\citep{aziz2021multi}$)$. 
At each step $t$, a patient is chosen from a local pool of volunteers and a dose level $a_t \in [K]$ is applied to the patient. Following that, the effectiveness of the dose on the patient, i.e. $r_t \in \R$ is observed. 
The goal of the physician is to recommend after the trial, which dose level is most effective on average, i.e. the dose level $a^*$ that maximises the expected reward.
Here, \emph{every application of a dose level and the patient's reaction to it exposes information regarding the medical conditions of the patient}. 
Additionally, at each step $t$ of an adaptive sequential trial, the physician can use an FC-BAI algorithm that observes the previous history of dose levels $\{a_s\}_{s< t}$ and their effectiveness $\{r_s\}_{s< t}$ to decide on the next dose level $a_{t}$ to test. 
When releasing the experimental findings of the trial to health authorities, the physician should thoroughly detail the experimental protocol. This includes the dose allocated to each patient $\{a_s\}_{s\leq t}$ and the final recommended dose level $a^\star$. 
Thus, even if the sequence of reactions to doses $\{r_s\}_{s\leq t}$ is kept secret, \emph{publishing the sequence of chosen dose levels $\{a_s\}_{s\leq t}$ and the final recommended dose level $a^\star$ computed using the history can leak information regarding patients involved in the trial}. 
\end{example} 
This example demonstrates the need for privacy in best-arm identification. In this paper, we investigate \textit{privacy-utility trade-offs for a privacy-preserving algorithm in FC-BAI}. 
Specifically, we use the celebrated Differential Privacy (DP)~\citep{dwork2014algorithmic} as the framework to preserve data privacy. DP ensures that an algorithm's output is unaffected by changes in input by a single data point. By limiting the amount of sensitive information that an adversary can deduce from the output, DP renders an individual corresponding to a data point `\textit{indistinguishable}'. 
A popular way to achieve DP is to inject a calibrated amount of noise, from a Laplace~\citep{dwork2014algorithmic} or Gaussian distribution~\citep{dong2022gaussian}, into the algorithm. 
The scale of the noise is set to be proportional to the algorithm’s sensitivity and inversely proportional to the privacy budget $\epsilon$. 
Specifically, we study \textit{$\epsilon$-global DP}, where users trust the centralised decision-maker with access to the raw sensitive rewards. For example, in an adaptive dose-finding trial, the patients trust the physician conducting the trial. Thus, at any time $t$, she has access to all the true history $\{a_s, r_s\}_{s < t}$, and it is her duty to design an algorithm such that \textit{publishing $\{a_s\}_{s \leq t}$ and the recommended optimal dose $a^\star$ obeys $\epsilon$-DP given the sensitive input}, i.e. the effectiveness of the dose levels on the patients $\{r_s\}_{s \leq t}$. We define the notion of \textit{$\epsilon$-global DP} for BAI rigorously in Section~\ref{sec:background}.

For different settings of bandits, the cost of $\epsilon$-global DP and optimal algorithm design techniques are widely studied in the regret-minimization problem~\citep{Mishra2015NearlyOD,tossou2016algorithms,dpseOrSheffet,shariff2018differentially,neel2018mitigating,basu2019differential,azize2022privacy}. Recently, a problem-dependent lower bound on regret of stochastic multi-armed bandits with $\epsilon$-global DP and an algorithm matching the regret lower bound is proposed by~\citep{azize2022privacy}. 
In contrast, DP is meagerly studied in the FC-BAI problem of bandits~\citep{dpseOrSheffet,kalogerias2020best}.
Though \textit{efficient} algorithm design in FC-BAI literature is traditionally propelled by deriving tight lower bounds, we do not have any explicit sample complexity lower bound for FC-BAI satisfying $\epsilon$-global DP. 
Here, by an `efficient' algorithm, we refer to the FC-BAI algorithms that aim to minimise the expected number of samples required (alternatively called, expected \textit{stopping time}) to find a $\delta$-correct recommendation.
Presently, we know neither the minimal cost in terms of sample complexity for ensuring DP in FC-BAI, nor the feasibility of efficient algorithm design to achieve the minimal cost.

Motivated by this gap in the literature, we aim to address two questions:
\begin{enumerate}[leftmargin=*]
    \item \textit{What is the fundamental hardness of ensuring Differential Privacy in the Best-Arm Identification with fixed confidence problem ($\epsilon$-DP-FC-BAI) in terms of a lower bound on the sample complexity? }
    \item \textit{How to design an efficient $\epsilon$-DP-FC-BAI algorithm that achieves the lower bound order-optimally?}
\end{enumerate}
\noindent\textbf{Our contributions.} These questions have led to the following contributions:

1. \textit{Hardness as lower bounds:} We commence our study by deriving the lower bound on sample complexity (or expected stopping time) of any FC-BAI algorithm ensuring $\epsilon$-global DP, in brief \textbf{$\epsilon$-DP-FC-BAI} (Section~\ref{sec:lower_bound}). In Theorem~\ref{thm:lower_bound}, we prove that the sample complexity of $\epsilon$-DP-FC-BAI depends on the minimum of two information-theoretic quantities one that depends on privacy, and the other that originates from the classical sample complexity of non-private FC-BAI~\citep{kaufmann2016complexity}. The term dependent on privacy depends on the privacy budget $\epsilon$, and a novel information-theoretic quantity, referred to as the \textit{Total Variation Characteristic Time} $T^*_{\mathrm{TV}}$. $T^*_{\mathrm{TV}}$ depends on the Total Variation (TV) distance between the reward distributions in the bandit problem and corresponding most confusing instance. The lower bound also indicates that, in a similar spirit as the regret minimisation with $\epsilon$-global DP~\citep{azize2022privacy}, there are two regimes of hardness for $\epsilon$-DP-FC-BAI. For lower level of privacy (i.e. higher $\epsilon$), the sample complexity of $\epsilon$-DP-FC-BAI is identical to that of the non-private FC-BAI. But for higher level of privacy (i.e. lower $\epsilon$), the sample complexity depends on $\epsilon$ and $T^*_{\mathrm{TV}}$.

2. \textit{Algorithm design:} Following the lower bounds, we aim to design an efficient $\epsilon$-DP-FC-BAI algorithm that simultaneously achieves the lower bound order-optimally and is computationally efficient (Section~\ref{sec:algorithms}). Due to the superior empirical performance and computational efficiency of Top Two algorithms, we design an $\epsilon$-DP variant of Top Two algorithms, named \adaptt{}. Specifically, we show two simple design techniques, i.e. adaptive episodes for each arm and Laplacian mechanism, if properly used, lead to \adaptt{} from the non-private TTUCB~\citep{jourdan2022non}. We further derive an asymptotic (as $\delta \rightarrow 0$) upper bound on the sample complexity of \adaptt{} (Theorem~\ref{thm:sample_complexity}). In the high-privacy regime, the sample complexity upper bound of \adaptt{} coincides with the lower bound up to multiplicative constants. Thus, it is an order-optimal $\epsilon$-DP-FC-BAI algorithm in this regime, whereas the looseness in the low-privacy regime is as same as the looseness of the non-private Top Two algorithm. In Section~\ref{sec:experiments}, we experimentally show that \adaptt{} is more sample efficient than the existing $\epsilon$-DP-FC-BAI algorithm, i.e. DP-SE~\citep{dpseOrSheffet}. We also show that its sample complexity is independent of the privacy budget in the low-privacy regime, as already indicated by the lower bound.

3. \textit{Technical tools:} (a) To derive the lower bound, we provide an $\epsilon$-global DP version of the transportation lemma~\cite{kaufmann2013information} (Lemma~\ref{lem:chg_env_dp}), which we prove using a sequential coupling argument. We also define and study the TV characteristic time ($T^{\star}_{\text{TV}}$) quantifying the hardness of FC-BAI in high privacy regimes (Proposition~\ref{prop:tv}). (b) To design the algorithm, we propose a \textit{generic wrapper, which adapts the existing FC-BAI algorithm to tackle DP-FC-BAI}. It builds on two components: (i) Adaptive episodes with per-arm doubling and forgetting, (ii) A private GLR stopping rule obtained by plugging in private empirical means in the non-private GLR stopping rule used of FC-BAI with Gaussian distributions. To use this proposed wrapper, one can choose among the numerous existing sampling rules to tackle FC-BAI. In this work, we consider the Top Two algorithms since they have good theoretical guarantees and empirical performance. To provide the sample complexity upper bound, we study step-by-step the effects of doubling, forgetting, and adding noise on the performance of the algorithm. Building on~\cite{jourdan2022top}, we provide a generic analysis of the class of Top Two algorithms when combined with our wrapper.

\subsection{Related works}
\noindent\textbf{Lower bound.} Efficient algorithm design in BAI literature is propelled by the derivation of lower bounds on sample complexity. \citep{kaufmann2016complexity} derive the first lower bounds for classical fixed-confidence BAI setting without privacy, which is further improved in~\citep{garivier2016optimal} by introducing KL characteristic time $T^{\star}_{\text{KL}}$ (Corollary~\ref{crl:lb}). Motivated by this, \textit{we prove the first-known lower bound on sample complexity for FC-BAI with $\epsilon$-global DP} (Theorem~\ref{thm:lower_bound}). The proof employs a similar sequential coupling argument as in the regret lower bound for bandits with $\epsilon$-global DP~\cite{shariff2018differentially, azize2022privacy}. This similarity is also reflected in the existence of two privacy regimes depending on the privacy budget $\epsilon$. And for both lower bounds, 
the Total Variation (TV) appears to be the information-theoretic measure that captures the hardness in the high privacy regime. Specifically, the TV characteristic time ($T^{\star}_{\text{TV}}$, Corollary~\ref{crl:lb}) serves as the BAI counterpart to the TV-distinguishability gap ($t_{\inf}$) in the problem-dependent regret lower bound for bandits with $\epsilon$-global DP as in \cite[Theorem 3]{azize2022privacy}. 

\textbf{Algorithms for BAI with fixed confidence (FC-BAI).} 
The optimal sample complexity for the non-private FC-BAI problem (i.e., $T^{\star}_{\text{KL}}$) is well-understood~\cite{garivier2016optimal}, and algorithms are proposed with the aim to achieve this lower bound. 
Early approaches involved Successive Elimination (SE) based algorithms~\cite{even2006action} with uniform sampling to find the optimal arm. 
Inspired by the success of Upper Confidence Bound (UCB) algorithms in the regret setting, the Lower Upper Confidence Bound (LUCB) algorithm was proposed~\cite{gabillon2012best}.
However, neither SE nor LUCB algorithms achieve asymptotic optimality.
The Track-and-Stop (TnS) algorithm introduced in~\cite{garivier2016optimal} was the first to asymptotically achieve the exact optimal sample complexity $T^{\star}_{\text{KL}}$. 
TnS attains asymptotic optimality by solving a plug-in estimate of the lower bound optimization problem at each step. 
The game-based approach presented in~\cite{degenne2019non} relaxes this requirement by casting the optimization problem as an unknown game and proposing sampling rules based on iterative strategies to estimate and converge to its saddle point. 
Finally, the Top Two algorithms arose as an identification strategy based on the praised Thompson Sampling algorithm for regret minimization~\cite{russo2016simple}.
In recent years, numerous variants have been analyzed and shown to be asymptotically near optimal~\cite{jourdan2022top}.
At every step, a Top Two sampling rule selects the next arm to sample from among two candidate arms, a leader and a challenger. 
In addition to their great empirical performance, and easy implementation compared to TnS and Game-based algorithms, the Top Two algorithms achieve near asymptotic optimality. 
In Sec.~\ref{sec:algorithms}, \textit{we derive an $\epsilon$-global DP version of a Top Two algorithm: \adaptt{}}.

\noindent\textbf{$\epsilon$-DP BAI algorithms.} DP-SE~\cite{dpseOrSheffet} is an $\epsilon$-global DP version of the Successive Elimination algorithm. Although the algorithm was proposed and analysed for the regret minimisation setting in~\cite{dpseOrSheffet}, it is possible to derive a sample complexity from the analysis in~\cite{dpseOrSheffet}. We compare in-depth DP-SE and \adaptt{}, both theoretically (Section~\ref{sec:algorithms}) and experimentally (Section~\ref{sec:experiments}). \textit{In both aspects, our proposed algorithm \adaptt{} outperforms DP-SE}. Another adaptation of DP-SE, namely DP-SEQ, is proposed in~\cite{kalogerias2020best} for the problem of privately finding the arm with the highest quantile at a fixed level. But this is a different setting of interest than the present paper.  
\citep{rio2023multi} also studies privacy for BAI under fixed confidence but with multiple agents. 
They propose and analyse the sample complexity of DP-MASE, a multi-agent version of DP-SE. 
They show that multi-agent collaboration leads to better sample complexity than independent agents, even under privacy constraints.
While the multi-agent setting with federated learning allows tackling large-scale clinical trials taking place at several locations simultaneously, we study the single-agent setting, which is relevant for many small-scale clinical trials (see Example~\ref{ex:1}).


\section{Differential privacy and best arm identification}\label{sec:background} 

\noindent\textbf{Background: Differential Privacy (DP).} DP ensures protection of an individual's sensitive information when her data is used for analysis. A randomised algorithm satisfies DP if the output of the algorithm stays almost the same, regardless of whether any single individual's data is included in or excluded from the input. This is achieved by adding controlled noise to the algorithm's output.

\begin{definition}[$(\epsilon, \delta)$-DP~\citep{dwork2014algorithmic}]
A randomised algorithm $\alg$ satisfies $(\epsilon, \delta)$-Differential Privacy (DP) if for any two neighbouring datasets $\mathcal D$ and $\mathcal D'$ that differ only in one entry, i.e. $\dham(\mathcal D, \mathcal D') = 1$, and for all sets of output $\mathcal{O} \subseteq \mathrm{Range}(\alg)$,
\begin{equation}\label{eq:dp}
\operatorname{Pr}[\alg(\mathcal D) \in \mathcal{O}] \leq e^{\epsilon} \operatorname{Pr}\left[\alg\left(\mathcal D'\right) \in \mathcal{O}\right] + \delta,
\end{equation}
where the probability space is over the coin flips of the mechanism $\alg$, and for some $(\epsilon, \delta)\in \real^{\geq0} \times \real^{\geq0}$. If $\delta = 0$, we say that $\alg$ satisfies \emph{$\epsilon$-DP}. A lower privacy budget $\epsilon$ implies higher privacy.
\end{definition}

The Laplace mechanism~\citep{dwork2010differential, dwork2014algorithmic} ensures $\epsilon$-DP by injecting controlled random noise into the output of the algorithm, which is sampled from a calibrated Laplace distribution (as specified in Theorem~\ref{thm:laplace}). We use $Lap(b)$ to denote the Laplace distribution with mean 0 and variance $2b^2$. 

\begin{theorem}[$\epsilon$-DP of Laplace mechanism (Theorem 3.6,~\cite{dwork2014algorithmic})]\label{thm:laplace}
Let us consider an algorithm $f: \mathcal{X} \rightarrow \real^d$ with sensitivity $s(f) \defn \underset{\substack{\mathcal{D}, \mathcal{D'} \text{ s.t }|\mathcal{D} - \mathcal{D'}|_{\mathrm{Hamming}} = 1}}{\max} | f(\mathcal{D}) - f(\mathcal{D'})|{1}$. Here, $\onenorm{\cdot}$ is the $L_1$ norm on $\real^d$. If $d$ noise samples $\setof{N_i}_{i=1}^d$ are generated independently from $Lap\left(\frac{s(f)}{\epsilon}\right)$, then the output injected with the noise, i.e. $f(\mathcal{D}) + [N_1, \ldots, N_d]$, satisfies $\epsilon$-DP.
\end{theorem}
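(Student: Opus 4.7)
The plan is to follow the classical density-ratio argument for the Laplace mechanism. First, I would fix two neighbouring datasets $\mathcal{D}, \mathcal{D}'$ with $\dham(\mathcal{D}, \mathcal{D}') = 1$ and consider the output random variables $Y \defn f(\mathcal{D}) + N$ and $Y' \defn f(\mathcal{D}') + N$, where $N = (N_1,\ldots,N_d)$ has independent coordinates $N_i \sim Lap(s(f)/\epsilon)$. Since $N$ has a (product) density on $\real^d$, so do $Y$ and $Y'$, and it suffices to upper bound the pointwise ratio $p_Y(z)/p_{Y'}(z)$ by $e^{\epsilon}$ for every $z\in\real^d$; integrating this inequality over an arbitrary measurable set $\mathcal{O}\subseteq\real^d$ then yields the DP condition~\eqref{eq:dp} with $\delta=0$.

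The core computation is an explicit ratio of Laplace densities. Writing $b \defn s(f)/\epsilon$, one has $p_Y(z) = \prod_{i=1}^d \frac{1}{2b}\exp(-|z_i - f(\mathcal{D})_i|/b)$ and analogously for $p_{Y'}(z)$. The normalising constants cancel, giving
\[
\frac{p_Y(z)}{p_{Y'}(z)} \;=\; \exp\!\left(\frac{\epsilon}{s(f)} \sum_{i=1}^d \bigl(|z_i - f(\mathcal{D}')_i| - |z_i - f(\mathcal{D})_i|\bigr)\right).
\]
The main step is to control the argument of the exponential. I would apply the reverse triangle inequality coordinate-wise, yielding $|z_i - f(\mathcal{D}')_i| - |z_i - f(\mathcal{D})_i| \leq |f(\mathcal{D})_i - f(\mathcal{D}')_i|$, and then sum over $i$ to obtain $\sum_{i=1}^d\bigl(|z_i - f(\mathcal{D}')_i| - |z_i - f(\mathcal{D})_i|\bigr) \leq \onenorm{f(\mathcal{D}) - f(\mathcal{D}')} \leq s(f)$ by definition of the $L_1$ sensitivity. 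This immediately gives $p_Y(z)/p_{Y'}(z) \leq e^{\epsilon}$.

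Finally, I would integrate the pointwise bound: for any measurable $\mathcal{O}\subseteq\real^d$,
\[
\Pr[Y\in\mathcal{O}] \;=\; \int_{\mathcal{O}} p_Y(z)\,\dd z \;\leq\; e^{\epsilon}\int_{\mathcal{O}} p_{Y'}(z)\,\dd z \;=\; e^{\epsilon}\Pr[Y'\in\mathcal{O}],
\]
which is exactly $\epsilon$-DP as defined in~\eqref{eq:dp}. There is essentially no obstacle here: the proof is a direct calculation once one has (i) the product-density structure from independence of the $N_i$, and (ii) the reverse triangle inequality combined with the sensitivity definition. The only point that requires mild care is the direction of the inequality (so that we bound $p_Y/p_{Y'}$ from above, not below), and the symmetry of the construction—swapping $\mathcal{D}$ and $\mathcal{D}'$—ensures both directions automatically hold.
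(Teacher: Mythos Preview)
Your proof is correct and is precisely the standard density-ratio argument for the Laplace mechanism (the one given in the cited reference, Theorem~3.6 of~\cite{dwork2014algorithmic}). The paper itself does not provide a proof of this statement---it is stated as a known result from the differential privacy literature and used as a black box---so there is nothing further to compare.
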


\noindent\textbf{Background: BAI with fixed confidence.}
Now, we describe the canonical best-arm identification problem with fixed confidence (\textit{FC-BAI}). BAI is a variant of pure exploration, where the goal is to identify the optimal arm. In FC-BAI, the learner is provided with a confidence level $1-\delta \in (0,1)$~\footnote{We remind not to confuse risk level $\delta$ with the $\delta$ of $(\epsilon, \delta)$-DP. Hereafter, we consider $\epsilon$-global DP as the privacy definition, and $\delta$ always represents the risk (or probability of mistake) of the BAI strategy.}. Learner aims to recommend an arm that is optimal with probability at least $1 - \delta$, while using as few samples as possible. To achieve this, the learner defines a FC-BAI strategy to interact with the bandit instance $ \boldsymbol{\nu} = \{\nu_a: a \in [\arms] \} $. We denote the action played at step $t$ by $a_t$, and the corresponding observed reward by $r_t \sim \nu_{a_t}$. $\mathcal{H}_t=\left(a_1, r_1, \ldots, a_t, r_t\right)$ is the history of actions played and rewards collected until time $t$. We augment the action set by a \textit{stopping action} $\top$, and write $a_t=\top$ to denote that the algorithm has stopped before step $t$. A FC-BAI strategy $\pi$ is composed of


\textbf{i. A pair of sampling and stopping rules} $\left(\mathrm{S}_t: \mathcal{H}_{t-1} \rightarrow \mathcal{P}([|1, K|] \cup\{\top\})\right)_{t \geq 1}$. For an action $a \in$ $[K],$ $ \mathrm{S}_t\left(a \mid \mathcal{H}_{t-1}\right)$ denotes the probability of playing action $a$ given history $\mathcal{H}_{t-1}$. On the other hand, $\mathrm{S}_t\left(\top \mid \mathcal{H}_{t-1}\right)$ is the probability of the algorithm halting given $\mathcal{H}_{t-1}$. For any history $\mathcal{H}_{t-1}$, a consistent sampling and stopping rule $\mathrm{S}_t$ satisfies $\mathrm{S}_t\left(\top \mid \mathcal{H}_{t-1}\right)=1$ if $\top$ has been played before $t$.

\textbf{ii. A recommendation rule} $\left(\operatorname{Rec}_t: \mathcal{H}_{t-1} \rightarrow \mathcal{P}([|1, K|])\right)_{t>1}$. A recommendation rule dictates $\operatorname{Rec}_t\left(a \mid \mathcal{H}_{t-1}\right)$, i.e. the probability of returning action $a$ as a guess for the best action given $\mathcal{H}_{t-1}$.


We denote by $\tau$ the \textbf{stopping time} of the algorithm, i.e. the first step $t$ demonstrating $a_t = \top$.
A BAI strategy $\pi$ is called \textbf{$\delta$-correct} for a class of bandit instances $\mathcal{M}$, if for every instance $\boldsymbol{\nu} \in \mathcal{M}$, $\pi$ recommends the optimal action $a^\star (\boldsymbol{\nu}) = \argmax_{a \in [\arms]} \mu_a$ with probability at least $1-\delta$, i.e. $\mathbb{P}_{\boldsymbol{\nu}, \pi} ( \tau < \infty, \widehat{a} = a^\star(\boldsymbol{\nu}) ) \geq 1 - \delta$.

\noindent\textbf{FC-BAI with $\epsilon$-global DP ($\epsilon$-DP-FC-BAI).} Now, we formally define $\epsilon$-global DP for FC-BAI, where the BAI strategy (a.k.a. the centralised decision maker) is trusted with all the intermediate rewards. We represent each user $u_t$ by the vector $\textbf{x}_t \defn (x_{t, 1}, \dots, x_{t, \arms}) \in \real^\arms$, where $x_{t, a}$ represents the \textbf{potential} reward observed, if action $a$ was recommended to user $u_t$. Due to the bandit feedback, only $r_t = x_{t, a_t} \sim \nu_{a_t}$ is observed at step $t$. We use an underline to denote any sequence. Thus, we denote the sequence of sampled actions until $\horizon$ as $\underline{a}^\horizon = (a_1, \dots, a_\horizon)$. We further represent a set of users $\lbrace u_t\rbrace_{t=1}^{\horizon}$ until $\horizon$ by \textbf{the table of potential rewards} $ \underline{\textbf{d}}^\textbf{\horizon} \defn \lbrace \textbf{x}_1, \dots, \textbf{x}_\horizon\rbrace \in (\real^\arms)^\horizon$.
First, we observe that $\underline{\textbf{d}}^\horizon$ is the sensitive input generated through interaction with the users, and $(\underline{a}^\horizon, \widehat{a}, \horizon)$ is the output of the BAI strategy. Hence, we define the probability that the BAI strategy $\pi$ samples the action sequence $\underline{a}^\horizon$, recommends the action $\widehat{a}$, and halts at time $\horizon$, as
\begin{equation}\label{eq:prob_event}
    \pi(\underline{a}^\horizon, \widehat{a}, \horizon \mid \underline{\textbf{d}}^\horizon) \defn \operatorname{Rec}_{T+1}\left(\widehat{a} \mid \mathcal{H}_{\horizon}\right) \mathrm{S}_{T+1}\left(\top \mid \mathcal{H}_{\horizon} \right) \prod_{t=1}^T \mathrm{~S}_t\left(a_t \mid \mathcal{H}_{t - 1}\right)
\end{equation}
where $\horizon$ users under interaction are represented by the table of potential rewards $\underline{\textbf{d}}^\horizon$ 

Thus, a BAI strategy satisfies $\epsilon$-global DP if the probability defined in Eq.~\eqref{eq:prob_event} is similar when the BAI strategy interacts with two neighbouring tables of rewards differing by a user (i.e. a row in $\underline{\textbf{d}}^\horizon$).
\setlength{\textfloatsep}{8pt}%
\begin{algorithm}[t!]
\caption{Sequential interaction between a BAI strategy and users }\label{prot:bai}
\begin{algorithmic}[1]
\State {\bfseries Input:} A BAI strategy $\pi = (S_t, \operatorname{Rec}_t)_{t \geq 1}$ and Users $\{u_t\}_{t \geq 1}$ represented by the table $\underline{\textbf{d}}$
\State {\bfseries Output:} A stopping time $\tau$, a sequence of samples actions $\underline{a}^\tau = (a_1, \dots, a_\tau)$ and a recommendation $\hat{a}$ satisfying $\epsilon$-global DP
\For{$t = 1, \dots$} 
\State $\pi$ recommends action $a_t \sim S_t(. \mid a_1, r_1, \dots, a_{t - 1}, r_{t - 1})$
\If{$a_t = \top$}
    \State Halt. Return $\tau = t$ and $ \hat{a} \sim \operatorname{Rec}_t(. \mid a_1, r_1, \dots, a_{t - 1}, r_{t - 1})$
\Else
\State $u_t$ sends the \textbf{sensitive} reward $r_t \defn \underline{\textbf{d}}_{t, a_t}$ to $\pi$
\EndIf
\EndFor
\end{algorithmic}
\end{algorithm}

\begin{definition}[$\epsilon$-global DP for BAI]\label{def:global_dp}
A BAI strategy satisfies \textbf{$\epsilon$-global DP}, if for all $\horizon \geq 1$, all neighbouring table of rewards $\underline{\textbf{d}}^\horizon$ and $\underline{\textbf{d}'}^\horizon$, i.e. $\dham(\underline{\textbf{d}}^\horizon, \underline{\textbf{d}'}^\horizon) = 1$, all sequences of sampled actions $\underline{a}^\horizon \in [\arms]^\horizon$ and recommended actions $\widehat{a} \in [\arms]$ we have that
\begin{equation*}
    \pi(\underline{a}^\horizon, \widehat{a}, \horizon \mid \underline{\textbf{d}}^\horizon)  \leq e^\epsilon \pi(\underline{a}^\horizon, \widehat{a}, \horizon \mid \underline{\textbf{d}'}^\horizon).
\end{equation*}
\end{definition}

Definition~\ref{def:global_dp} can be seen as a BAI counterpart of the $\epsilon$-global DP definition proposed in~\cite{azize2022privacy} for regret minimization. We demonstrate the BAI strategy-Users interaction in Algorithm~\ref{prot:bai}.

\begin{remark}
    It is possible to consider that the output of a BAI strategy is \textit{only} the final recommended action $\hat{a}$, i.e. not publishing the intermediate actions $\underline{a}^\horizon$. This gives a weaker definition of privacy compared to Definition~\ref{def:global_dp}, since the latter defends against adversaries that may look inside the execution of the BAI strategy, i.e. pan-privacy~\cite{dwork2010pan}. In addition, Definition~\ref{def:global_dp} is needed in practice. For example, in the case of dose-finding (Example~\ref{ex:1}), the experimental protocol, i.e. the intermediate actions, needs to be published too.
\end{remark}

\noindent \textbf{The goal} in $\epsilon$-DP-FC-BAI is to design a $\delta$-correct $\epsilon$-global DP algorithm, with $\tau$ as small as possible. 
\section{Lower bound on sample complexity for FC-BAI with $\epsilon$-global DP}\label{sec:lower_bound}\vspace*{-.5em}
The central question that we address in this section is\\
\centerline{\textit{How many additional samples a BAI strategy must select for ensuring $\epsilon$-global DP?}}

In response, we prove a lower bound on the sample complexity of any $\delta$-correct $\epsilon$-DP BAI strategy. 
Our lower bound features problem-dependent characteristic times reminiscent of the FC-BAI setting.

Let $\boldsymbol{\nu} \triangleq \{\nu_a : a \in [\arms]\}$ be a bandit instance, consisting of $\arms$ arms with finite means $\{\mu_a\}_{a \in [\arms]}$. Now, we define the set of alternative instances as $\operatorname{Alt}(\boldsymbol{\nu})\triangleq \left\{\boldsymbol{\lambda}: a^{\star}(\boldsymbol{\lambda}) \neq a^{\star}(\boldsymbol{\nu})\right\}$, i.e. the bandit instances with a different optimal arm than $\boldsymbol{\nu}$. For two probability distributions $\mathbb{P}, \mathbb{Q}$ on the same measurable space $(\Omega, \mathcal{F})$, the Total Variation (TV) distance is defined as $ \TV{\mathbb{P}}{\mathbb{Q}} \triangleq \sup_{A \in \mathcal{F}} \{\mathbb{P}(A) - \mathbb{Q}(A)\}\textbf{}$, while the KL divergence (or relative entropy) is $\KL{\mathbb{P}}{\mathbb{Q}} \triangleq \int \log\left(\frac{\dd\mathbb{P}}{\dd\mathbb{Q}} (\omega) \right) \dd \mathbb{P} (\omega)$, when $\mathbb{P} \ll \mathbb{Q}$, and $\infty$ otherwise. We denote the probability simplex by $\Sigma_K \triangleq \{ \omega \in [0,1]^\arms: \sum_{a=1}^\arms \omega_a = 1 \} $.

First, we derive an $\epsilon$-global DP variant of the ‘transportation’ lemma, i.e. Lemma 1 in~\cite{kaufmann2016complexity}.
\begin{lemma}[Transportation lemma under $\epsilon$-global DP]\label{lem:chg_env_dp}
    Let $\delta \in (0,1) $ and $\epsilon>0$. Let $\boldsymbol{\nu}$ be a bandit instance and $\lambda \in \operatorname{Alt}(\boldsymbol{\nu})$. For any $\delta$-correct $\epsilon$-global DP BAI strategy, we have that
    \begin{equation*}
        6 \epsilon  \sum_{a = 1}^\arms \expect_{\boldsymbol{\nu}, \pi} \left [N_a(\tau) \right] \TV{\nu_{a}}{\lambda_{a}}  \geq \mathrm{kl}(1-\delta, \delta),
    \end{equation*}
    where $\mathrm{kl}(1-\delta, \delta) \defn x \log\frac{x}{y} + (1 - x) \log \frac{1 - x}{1 - y} \;$ for $x,y \in (0,1)$.
\end{lemma}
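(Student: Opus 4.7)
The plan is to adapt the classical non-private transportation lemma of \citet{kaufmann2016complexity} to the $\epsilon$-global DP setting by replacing the standard chain-rule KL decomposition with a sequential coupling argument reminiscent of the regret-minimisation lower bound in \citet{shariff2018differentially,azize2022privacy}. The proof will proceed in three stages: a data-processing reduction to the KL divergence between the algorithm's output distributions, a coupling construction relating this divergence to per-arm pull counts and TV distances, and a privacy-specific inequality that converts expected Hamming distance into an $\epsilon$-scaled KL bound.

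I would first apply the standard reduction. Let $E = \{\widehat{a} = a^\star(\boldsymbol{\nu})\}$. $\delta$-correctness of $\pi$ combined with $\boldsymbol{\lambda} \in \operatorname{Alt}(\boldsymbol{\nu})$ gives $\mathbb{P}_{\boldsymbol{\nu}, \pi}(E) \geq 1-\delta$ and $\mathbb{P}_{\boldsymbol{\lambda}, \pi}(E) \leq \delta$, so by monotonicity of the binary relative entropy,
\[
\mathrm{kl}(1-\delta, \delta) \leq \mathrm{kl}\bigl(\mathbb{P}_{\boldsymbol{\nu}, \pi}(E),\, \mathbb{P}_{\boldsymbol{\lambda}, \pi}(E)\bigr).
\]
I would then build a sequential coupling of the interaction processes under $\boldsymbol{\nu}$ and $\boldsymbol{\lambda}$: as long as the two histories agree, both instances select the same arm $a_t$ and the rewards $(r_t, r_t')$ are drawn from the maximal coupling of $\nu_{a_t}$ and $\lambda_{a_t}$, so that $r_t \neq r_t'$ occurs with probability $\TV{\nu_{a_t}}{\lambda_{a_t}}$. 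Letting $Z$ count the total number of such reward disagreements up to $\tau$, a Wald-type identity yields
\[
\mathbb{E}_{\boldsymbol{\nu}, \pi}[Z] = \sum_{a=1}^K \mathbb{E}_{\boldsymbol{\nu}, \pi}[N_a(\tau)] \, \TV{\nu_a}{\lambda_a}.
\]
Under this coupling, the two realised tables of potential rewards differ at queried positions in exactly $Z$ rows, so by group privacy the pointwise output log-ratio is bounded by $Z\epsilon$; combining joint convexity of KL with a standard $\epsilon$-DP KL inequality as used in \citet{azize2022privacy} then gives $\mathrm{kl}\bigl(\mathbb{P}_{\boldsymbol{\nu}, \pi}(E), \mathbb{P}_{\boldsymbol{\lambda}, \pi}(E)\bigr) \leq 6\epsilon \, \mathbb{E}_{\boldsymbol{\nu}, \pi}[Z]$, with the constant $6$ inherited from that inequality. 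Chaining the three steps concludes the proof.

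The main obstacle will be the last step. Since Definition~\ref{def:global_dp} is stated for each fixed horizon $T$ with full-row neighbours, the DP argument must either be carried out horizon-by-horizon and then passed to the limit $T \to \infty$, or integrated carefully with $\tau$ treated as part of the algorithm's output. In addition, rows of the sensitive table are $K$-dimensional while $\pi$ reads only one entry per row, so the coupling must be tailored so that disagreement on non-pulled entries does not inflate the effective Hamming distance; without this care one only obtains the cruder bound $\mathbb{E}_{\boldsymbol{\nu}, \pi}[\tau] \sum_a \TV{\nu_a}{\lambda_a}$ instead of the weighted $\sum_a \mathbb{E}_{\boldsymbol{\nu}, \pi}[N_a(\tau)] \TV{\nu_a}{\lambda_a}$ required for the lemma.
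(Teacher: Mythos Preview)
Your proposal is correct and follows essentially the same route as the paper: lower-bound via $\delta$-correctness and data-processing, then upper-bound the output divergence by $6\epsilon\sum_a \mathbb{E}_{\boldsymbol{\nu},\pi}[N_a(\tau)]\,\TV{\nu_a}{\lambda_a}$ through the sequential Karwa--Vadhan coupling of \citet{azize2022privacy}. The one presentational difference is that the paper bounds the full $\KL{\mathbb{P}_{\boldsymbol{\nu},\pi}}{\mathbb{P}_{\boldsymbol{\lambda},\pi}}$ by controlling the log-ratio \emph{pointwise for each fixed output} $(\underline{a}^T,\hat a,T)$---which makes $N_a(T)$ deterministic and automatically dissolves both obstacles you flag (the stopping time and the $K$-dimensional-row issue, the latter via the paper's remark that $\epsilon$-DP on reward tables implies $\epsilon$-DP on observed reward sequences)---whereas you work with the binary $\mathrm{kl}$ and an explicit interaction-level coupling plus joint convexity.
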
\vspace*{-.5em}
\noindent \textit{Proof sketch.} We use Sequential Karwa-Vadhan Lemma~\citep[Lemma 2]{azize2022privacy} with a data-processing inequality in the BAI canonical model. Extra care is needed \textit{to deal with the stopping times in the coupling, compared to a fixed horizon $T$ in regret minimization}. The proof is deferred to Appendix~\ref{app:lb}.

Leveraging Lemma~\ref{lem:chg_env_dp}, we derive a sample complexity lower bound for any $\epsilon$-DP-FC-BAI strategy.

\begin{theorem}[Sample complexity lower bound for $\epsilon$-DP-FC-BAI]\label{thm:lower_bound}
Let $\delta \in (0,1) $ and $\epsilon>0$. For any $\delta$-correct $\epsilon$-global DP BAI strategy, we have that
\begin{align}\label{eq:lower_bound}
\mathbb{E}_{\boldsymbol{\nu}}[\tau] &\geq T^{\star}\left(\boldsymbol{\nu} ; \epsilon\right) \log (1 / 3 \delta),
\end{align}
where $\left (T^{\star}\left(\boldsymbol{\nu} ; \epsilon\right) \right )^{-1 } \defn \sup\limits_{\omega \in \Sigma_K} \inf\limits_{\boldsymbol{\lambda} \in \operatorname{Alt}(\boldsymbol{\nu})} \min  \left(\sum_{a=1}^K \omega_a \KL{\nu_a}{\lambda_a}, 6 \epsilon \sum_{a=1}^K \omega_a  \TV{\nu_a}{\lambda_a} \right)$.
\end{theorem}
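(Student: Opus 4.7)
\textbf{Proof plan for Theorem~\ref{thm:lower_bound}.} The plan is to combine two transportation-style inequalities, one coming from the classical (non-private) analysis and one coming from the $\epsilon$-global DP Lemma~\ref{lem:chg_env_dp}, and then optimize over the allocation vector induced by the algorithm.

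First, I would recall the classical transportation inequality from~\cite{kaufmann2016complexity}: for any $\delta$-correct BAI strategy and any $\boldsymbol{\lambda} \in \operatorname{Alt}(\boldsymbol{\nu})$,
\begin{equation*}
    \sum_{a=1}^{K} \mathbb{E}_{\boldsymbol{\nu},\pi}[N_a(\tau)] \, \KL{\nu_a}{\lambda_a} \;\geq\; \mathrm{kl}(1-\delta,\delta).
\end{equation*}
Since $\pi$ is moreover $\epsilon$-global DP, Lemma~\ref{lem:chg_env_dp} gives a second inequality
\begin{equation*}
    6\epsilon \sum_{a=1}^{K} \mathbb{E}_{\boldsymbol{\nu},\pi}[N_a(\tau)] \, \TV{\nu_a}{\lambda_a} \;\geq\; \mathrm{kl}(1-\delta,\delta).
\end{equation*}
Both bounds hold simultaneously, so the minimum of their left-hand sides is also at least $\mathrm{kl}(1-\delta,\delta)$, for every $\boldsymbol{\lambda} \in \operatorname{Alt}(\boldsymbol{\nu})$.

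Next, I would set $T_{\pi}(\boldsymbol{\nu}) \triangleq \mathbb{E}_{\boldsymbol{\nu},\pi}[\tau] = \sum_{a} \mathbb{E}_{\boldsymbol{\nu},\pi}[N_a(\tau)]$ (dealing with the $\tau = \infty$ case trivially, since $\delta$-correctness implies finite expected stopping time in all non-degenerate cases) and introduce the empirical allocation $\omega^{\pi}_a \triangleq \mathbb{E}_{\boldsymbol{\nu},\pi}[N_a(\tau)] / T_{\pi}(\boldsymbol{\nu}) \in \Sigma_K$. Factoring $T_{\pi}(\boldsymbol{\nu})$ out of each sum yields, for every $\boldsymbol{\lambda} \in \operatorname{Alt}(\boldsymbol{\nu})$,
\begin{equation*}
    T_{\pi}(\boldsymbol{\nu}) \cdot \min\!\Bigl(\sum_{a} \omega^{\pi}_a \KL{\nu_a}{\lambda_a},\; 6\epsilon \sum_{a} \omega^{\pi}_a \TV{\nu_a}{\lambda_a}\Bigr) \geq \mathrm{kl}(1-\delta,\delta).
\end{equation*}
Taking the infimum over $\boldsymbol{\lambda} \in \operatorname{Alt}(\boldsymbol{\nu})$ on the left and then using that $\omega^{\pi}$ is one particular point of $\Sigma_K$, so the inf is upper-bounded by the sup-inf over $\Sigma_K$, gives
\begin{equation*}
    T_{\pi}(\boldsymbol{\nu}) \cdot \bigl(T^{\star}(\boldsymbol{\nu};\epsilon)\bigr)^{-1} \;\geq\; T_{\pi}(\boldsymbol{\nu}) \cdot \inf_{\boldsymbol{\lambda} \in \operatorname{Alt}(\boldsymbol{\nu})}\min(\ldots) \;\geq\; \mathrm{kl}(1-\delta,\delta),
\end{equation*}
i.e.\ $\mathbb{E}_{\boldsymbol{\nu}}[\tau] \geq T^{\star}(\boldsymbol{\nu};\epsilon)\, \mathrm{kl}(1-\delta,\delta)$. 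A standard elementary bound $\mathrm{kl}(1-\delta,\delta) \geq \log(1/(3\delta))$ for $\delta \in (0,1)$ finishes the proof.

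The two genuinely non-routine ingredients have already been handled: the $\epsilon$-DP transportation inequality (Lemma~\ref{lem:chg_env_dp}) and the simultaneous validity of both inequalities under the same $(\boldsymbol{\nu},\pi)$-measure. The main conceptual step here is realizing that one can combine them via a pointwise minimum \emph{before} taking the inf over alternatives, so that the resulting characteristic time $T^{\star}(\boldsymbol{\nu};\epsilon)$ is defined with the min inside the sup-inf. The remaining obstacle to watch for is the measure-theoretic care needed when $\mathbb{E}_{\boldsymbol{\nu},\pi}[\tau] = \infty$ or when some $\KL{\nu_a}{\lambda_a} = \infty$; in those cases the inequalities hold vacuously or trivially, and the statement of the theorem remains valid.
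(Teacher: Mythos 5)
Your proposal is correct and follows essentially the same route as the paper: apply Lemma~\ref{lem:chg_env_dp} alongside the classical transportation lemma of \cite{kaufmann2016complexity}, take the pointwise minimum before infimizing over $\boldsymbol{\lambda}$, factor out $\mathbb{E}_{\boldsymbol{\nu},\pi}[\tau]$ via the empirical allocation $\omega^\pi \in \Sigma_K$, upper-bound by the sup-inf defining $T^\star(\boldsymbol{\nu};\epsilon)^{-1}$, and finish with $\mathrm{kl}(1-\delta,\delta) \geq \log(1/3\delta)$. No gaps to report.
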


\noindent \textbf{Comments on the lower bound.} Similar to the lower bound for the non-private BAI \cite{garivier2016optimal}, the lower bound of Theorem 2 is the value of a two-player zero-sum game between a MIN player and MAX player. MIN plays an alternative instance $\lambda$ close to $\nu$ in order to confuse MAX. The latter plays an allocation $\omega \in \Sigma_K$ to explore the different arms, with the purpose of maximising the divergence between $\nu$ and the confusing instance $\lambda$ that MIN played. On top of the KL divergence present in the non-private lower bound, our bound features the TV distance that appears naturally when incorporating the $\epsilon$-global DP constraint. The proof is deferred to Appendix~\ref{app:lb}. In order to compare the lower bound of an $\epsilon$-global BAI strategy with the non-private lower bound of~\cite{garivier2016optimal}, we relax Theorem~\ref{thm:lower_bound} to further derive a simpler bound, as in Corollary~\ref{crl:lb}.


\begin{corollary}\label{crl:lb}
    For any $\delta$-correct $\epsilon$-global DP BAI strategy, we have that
$$
\begin{gathered}
\mathbb{E}_{\boldsymbol{\nu}}[\tau] \geq \max\left( T^{\star}_{\mathrm{KL}}(\boldsymbol{\nu}) , \frac{1}{6 \epsilon}  T^{\star}_{\mathrm{TV}}(\boldsymbol{\nu})\right) \log (1 / 3 \delta),
\end{gathered}
$$
where $\left(T^{\star}_{\textbf{d}}(\boldsymbol{\nu}) \right)^{-1} \defn \sup _{\omega \in \Sigma_K} \inf _{\boldsymbol{\lambda} \in \operatorname{Alt}(\boldsymbol{\nu})}  \sum_{a=1}^K \omega_a \textbf{d}(\nu_a, \lambda_a)$, and $\textbf{d}$ is either $\mathrm{KL}$ or $\mathrm{TV}$.
\end{corollary}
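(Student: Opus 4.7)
The plan is to derive Corollary~\ref{crl:lb} as a direct relaxation of Theorem~\ref{thm:lower_bound} by exploiting the trivial bound $\min(A,B) \le A$ and $\min(A,B) \le B$ inside the sup-inf game, and then pushing the $6\epsilon$ factor outside the optimisation over the TV term.

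First I would start from the inequality $\mathbb{E}_{\boldsymbol{\nu}}[\tau] \geq T^{\star}(\boldsymbol{\nu};\epsilon)\log(1/3\delta)$ given by Theorem~\ref{thm:lower_bound} and reduce the problem to showing
\[
T^{\star}(\boldsymbol{\nu};\epsilon) \;\geq\; \max\!\left( T^{\star}_{\mathrm{KL}}(\boldsymbol{\nu}),\; \tfrac{1}{6\epsilon}\, T^{\star}_{\mathrm{TV}}(\boldsymbol{\nu})\right),
\]
or equivalently, by inverting,
\[
\bigl(T^{\star}(\boldsymbol{\nu};\epsilon)\bigr)^{-1} \;\leq\; \min\!\left( \bigl(T^{\star}_{\mathrm{KL}}(\boldsymbol{\nu})\bigr)^{-1},\; \tfrac{6\epsilon}{T^{\star}_{\mathrm{TV}}(\boldsymbol{\nu})}\right).
\]

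Second, for any fixed $\omega \in \Sigma_K$ and any alternative $\boldsymbol{\lambda} \in \operatorname{Alt}(\boldsymbol{\nu})$, using $\min(a,b)\le a$ and $\min(a,b)\le b$ pointwise, I would write
\[
\min\!\Bigl(\sum_{a}\omega_a \KL{\nu_a}{\lambda_a},\, 6\epsilon\sum_{a}\omega_a \TV{\nu_a}{\lambda_a}\Bigr) \;\leq\; \sum_{a}\omega_a \KL{\nu_a}{\lambda_a},
\]
and similarly with the TV term. Taking the infimum over $\boldsymbol{\lambda} \in \operatorname{Alt}(\boldsymbol{\nu})$ on both sides and then the supremum over $\omega \in \Sigma_K$ preserves these inequalities (by monotonicity of $\inf$ and $\sup$), yielding
\[
\bigl(T^{\star}(\boldsymbol{\nu};\epsilon)\bigr)^{-1} \;\leq\; \sup_{\omega}\inf_{\boldsymbol{\lambda}} \sum_{a}\omega_a \KL{\nu_a}{\lambda_a} \;=\; \bigl(T^{\star}_{\mathrm{KL}}(\boldsymbol{\nu})\bigr)^{-1},
\]
and
\[
\bigl(T^{\star}(\boldsymbol{\nu};\epsilon)\bigr)^{-1} \;\leq\; 6\epsilon \sup_{\omega}\inf_{\boldsymbol{\lambda}} \sum_{a}\omega_a \TV{\nu_a}{\lambda_a} \;=\; \tfrac{6\epsilon}{T^{\star}_{\mathrm{TV}}(\boldsymbol{\nu})},
\]
where in the second display I pull the positive constant $6\epsilon$ out of the sup-inf. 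Combining these two bounds gives the required inequality for $(T^{\star}(\boldsymbol{\nu};\epsilon))^{-1}$, and inverting concludes.

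There is no real obstacle here: the corollary is a soft relaxation of Theorem~\ref{thm:lower_bound} using the elementary fact that $\min(a,b)$ is dominated by each of its arguments, together with the monotonicity of $\sup$ and $\inf$ under pointwise inequalities. The only thing to be careful about is that $6\epsilon > 0$ so it can be factored out of the sup-inf without flipping any inequalities, and that both characteristic times $T^{\star}_{\mathrm{KL}}$ and $T^{\star}_{\mathrm{TV}}$ are well-defined (positive, possibly infinite) so that inversion preserves the order of inequalities.
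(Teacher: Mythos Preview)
Your proof is correct and follows exactly the same approach as the paper, which simply states that the result is direct from $T^{\star}(\boldsymbol{\nu};\epsilon) \geq T^{\star}_{\mathrm{KL}}(\boldsymbol{\nu})$ and $T^{\star}(\boldsymbol{\nu};\epsilon) \geq \frac{1}{6\epsilon} T^{\star}_{\mathrm{TV}}(\boldsymbol{\nu})$. Your write-up is in fact more explicit than the paper's one-line justification, correctly spelling out the monotonicity of $\sup$/$\inf$ and the factoring of the positive constant $6\epsilon$.
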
\vspace*{-1em}
\begin{proof}
    The proof is direct by observing that $T^{\star}\left(\boldsymbol{\nu} ; \epsilon\right) \geq T^{\star}_{\text{KL}}(\boldsymbol{\nu})$ and $ T^{\star}\left(\boldsymbol{\nu} ; \epsilon\right) \geq \frac{1}{6 \epsilon} T^{\star}_{\mathrm{TV}}(\boldsymbol{\nu})$.
\end{proof}\vspace*{-.5em}

\noindent \textbf{Comparison with the non-private lower bound.} 
$T^{\star}_{\text{KL}}$ is the characteristic time in the non-private lower bound~\cite{garivier2016optimal}, and we refer to Section 2.2 of~\cite{garivier2016optimal} for a detailed discussion on its properties.
The sample complexity lower bound suggests the existence of \textit{two hardness regimes depending on $\epsilon$, $T^{\star}_{\text{KL}}$ and $T^{\star}_{\mathrm{TV}}$}. (1) \textit{Low-privacy regime}: When $\epsilon > T^{\star}_{\mathrm{TV}(\boldsymbol{\nu})}/(6 T^{\star}_{\text{KL}}(\boldsymbol{\nu}))$, the lower bound retrieves the non-private lower bound, i.e. $T^{\star}_{\text{KL}}(\boldsymbol{\nu})$, and thus, \textbf{privacy can be achieved for free}. 
(2) \textit{High-privacy regime:} When $\epsilon < T^{\star}_{\mathrm{TV}}(\boldsymbol{\nu})/(6 T^{\star}_{\text{KL}}(\boldsymbol{\nu}))$, the lower bound becomes $ T^{\star}_{\mathrm{TV}} / (6 \epsilon)$ and $\epsilon$-global DP $\delta$-BAI requires more samples than non-private ones. 

In the following proposition, we characterise $T^{\star}_{\mathrm{TV}}$ for Bernoulli instances.

\begin{proposition}[TV characteristic time for Bernoulli instances]\label{prop:tv} Let $\nu$ be a bandit instance, i.e. such that $\nu_a = \text{Bernoulli}(\mu_a)$ and $\mu_1 > \mu_2 \geq \dots \geq \mu_\arms$. Let $\Delta_a \defn \mu_1 - \mu_a$ and $\Delta_\text{min} \defn \min_{a \neq 1} \Delta_a $. We have that \vspace*{-.5em}
\begin{align*}
    T^{\star}_{\mathrm{TV}}(\boldsymbol{\nu}) = \frac{1}{\Delta_\text{min}} + \sum_{a=2}^\arms \frac{1}{\Delta_a}, &&\text{and}  &&\frac{1}{\Delta_{\text{min}}} \leq T^{\star}_{\mathrm{TV}}(\boldsymbol{\nu}) \leq  \frac{\arms}{\Delta_{\text{min}}}.
\end{align*}
\end{proposition}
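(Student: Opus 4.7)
The plan is to reduce the $\sup$-$\inf$ defining $T^{\star}_{\mathrm{TV}}(\boldsymbol{\nu})^{-1}$ to an explicit scalar optimization, solved by a standard water-filling argument.

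\textbf{Step 1: Decompose the alternative set.} I would write $\operatorname{Alt}(\boldsymbol{\nu}) = \bigcup_{a \neq 1} \mathcal{C}_a$, where $\mathcal{C}_a \defn \{\boldsymbol{\lambda} : \lambda_a \geq \lambda_1\}$. Since $\mathrm{TV}(\mathrm{Bern}(p),\mathrm{Bern}(q)) = |p-q|$ is minimized at $q=p$, for any $\boldsymbol{\lambda} \in \mathcal{C}_a$ we can freely set $\lambda_b = \mu_b$ for $b \notin \{1,a\}$ without increasing $\sum_b \omega_b \mathrm{TV}(\nu_b, \lambda_b)$. Thus
\begin{equation*}
    \inf_{\boldsymbol{\lambda} \in \operatorname{Alt}(\boldsymbol{\nu})} \sum_{b=1}^K \omega_b \, \mathrm{TV}(\nu_b, \lambda_b) \;=\; \min_{a \neq 1} \, \inf_{\substack{(p_1, p_a) \in [0,1]^2 \\ p_a \geq p_1}} \bigl( \omega_1 |\mu_1 - p_1| + \omega_a |\mu_a - p_a| \bigr).
\end{equation*}

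\textbf{Step 2: Solve the two-variable subproblem.} For each $a \neq 1$, since $\mu_1 > \mu_a$, the optimum must have $p_1 \leq \mu_1$ and $p_a \geq \mu_a$. Any feasible pair with $p_1 < p_a$ can be improved by pushing them together (reducing one of the absolute values strictly), so the infimum is attained on the line $p_1 = p_a = x$ with $x \in [\mu_a, \mu_1]$. The objective $\omega_1(\mu_1 - x) + \omega_a(x - \mu_a)$ is linear in $x$, minimized at $x = \mu_1$ if $\omega_a \leq \omega_1$ and at $x = \mu_a$ otherwise. In both cases the minimal value equals $\min(\omega_1, \omega_a)\, \Delta_a$. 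Hence
\begin{equation*}
    T^{\star}_{\mathrm{TV}}(\boldsymbol{\nu})^{-1} = \sup_{\omega \in \Sigma_K} \, \min_{a \neq 1} \, \min(\omega_1, \omega_a) \, \Delta_a.
\end{equation*}

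\textbf{Step 3: Solve the max-min via water-filling.} Let $v$ denote any achievable value of the inner minimum. Then for every $a \neq 1$ we must have $\omega_1 \Delta_a \geq v$ and $\omega_a \Delta_a \geq v$. The first set of inequalities is tightest at $a = \argmin \Delta_a$, yielding $\omega_1 \geq v/\Delta_{\min}$, while the second gives $\omega_a \geq v/\Delta_a$. Summing and using $\sum_a \omega_a = 1$ produces the upper bound
\begin{equation*}
    v \leq \left( \frac{1}{\Delta_{\min}} + \sum_{a=2}^{K} \frac{1}{\Delta_a} \right)^{-1}.
\end{equation*}
Equality is attained by the allocation $\omega_1^\star = v^\star / \Delta_{\min}$ and $\omega_a^\star = v^\star / \Delta_a$ for $a \geq 2$, where $v^\star$ is the right-hand side above. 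Since $\Delta_{\min} \leq \Delta_a$, this allocation satisfies $\omega_1^\star \geq \omega_a^\star$, so $\min(\omega_1^\star, \omega_a^\star) = \omega_a^\star$ and each inner term equals $v^\star$. This establishes the claimed identity for $T^{\star}_{\mathrm{TV}}(\boldsymbol{\nu})$.

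\textbf{Step 4: Conclude the two-sided bound.} The lower bound $T^{\star}_{\mathrm{TV}}(\boldsymbol{\nu}) \geq 1/\Delta_{\min}$ is immediate by dropping the sum. The upper bound uses $\Delta_a \geq \Delta_{\min}$ for all $a \neq 1$ to get $\sum_{a=2}^K 1/\Delta_a \leq (K-1)/\Delta_{\min}$, yielding $T^{\star}_{\mathrm{TV}}(\boldsymbol{\nu}) \leq K/\Delta_{\min}$. I do not anticipate real obstacles here; the only subtlety is the reduction in Step~2, where one must justify that shrinking the free components $\lambda_b$ ($b \neq 1, a$) to $\mu_b$ and collapsing $p_1 = p_a$ on the feasibility boundary are both without loss. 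Both are straightforward monotonicity arguments on absolute values.
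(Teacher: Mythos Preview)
Your proposal is correct and follows essentially the same approach as the paper: decompose $\operatorname{Alt}(\boldsymbol{\nu})$ over $a\neq 1$, reduce each sub-infimum to $\min(\omega_1,\omega_a)\Delta_a$, and then solve the resulting max--min by summing the constraints $\omega_1 \ge v/\Delta_{\min}$ and $\omega_a \ge v/\Delta_a$. The only cosmetic difference is that the paper first performs the change of variable $x_a = \omega_a/\omega_1$ before carrying out the same summation argument, whereas you work directly on the simplex; the optimizing allocation and the verification of attainment are identical.
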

\textit{Proof sketch.}
    The proof is direct by solving the optimisation problem defining $T^{\star}_{\mathrm{TV}}$ and using that $\TV{\text{Bernoulli}(p)}{\text{Bernoulli}(q)} = | p - q|$. We refer to Appendix~\ref{app:lb} for details.

\noindent \textit{Comment.} The aforementioned bound on TV characteristic time for Bernoulli instances is $\epsilon$-global DP parallel of the KL-characteristic time bound $T^\star_{\text{KL}}(\boldsymbol{\nu}) \leq \sum_{a=1}^\arms \Delta_a^{-2} $ ~\cite{garivier2016optimal}. 
Using Pinsker's inequality, one can connect the TV and KL characteristic times by $T^\star_{\text{TV}}(\boldsymbol{\nu}) \ge \sqrt{2T^\star_{\text{KL}}(\boldsymbol{\nu})}$.
\vspace*{-.5em}\section{Algorithm design: Private Top Two with adaptive episodes (\adaptt{})}\label{sec:algorithms}\vspace*{-.5em}
In this section, we propose \adaptt{}, an $\epsilon$-global DP version of the TTUCB algorithm~\cite{jourdan2022non}. We show that \adaptt{} satisfies $\epsilon$-global DP, is $\delta$-correct, and has an asymptotic sample complexity that matches the high privacy lower bounds up to multiplicative constants.

\textbf{TTUCB} belongs to the family of Top Two algorithms~\cite{russo2016simple,shang2020fixed,jourdan2022top}, which selects at each time two arms called leader and challenger, and sample among them. 
After initialisation, TTUCB uses a UCB-based leader and a Transportation Cost (TC) challenger, expressed by
\[
    B_{n} = \argmax_{a \in [K]} \{\hat \mu_{n,a} + \sqrt{6 \log (n) /N_{n,a}}  \}, \quad \text{and} \quad C_{n} =  \argmin_{a \ne B_{n}} \frac{\hat \mu_{n,B_{n}} - \hat \mu_{n,a}}{\sqrt{1/N_{n, B_n}+ 1/N_{n,a}}} \: .
\]
Here, $(\hat \mu_{n},N_{n})$ are the empirical means and counts on the whole history.
The theoretical motivation behind the TC challenger comes from the theoretical lower bound in FC-BAI, which involves the KL-characteristic time $T^{\star}_{\mathrm{KL}}(\boldsymbol{\mu}) = \min_{\beta \in (0,1)} T^{\star}_{\mathrm{KL},\beta}(\boldsymbol{\mu})$. 
For Gaussian distributions, we have
\begin{align*}\label{eq:kl_beta}
    2T^{\star}_{\mathrm{KL},\beta}(\boldsymbol{\mu})^{-1} = \max_{\omega \in \Sigma_K, \omega_{a^\star} = \beta} \frac{(\mu_{a^\star} - \mu_{a})^2}{1/\beta + 1/\omega_{a}} \quad \text{and} \quad T^{\star}_{\mathrm{KL},1/2}(\boldsymbol{\mu}) \le 2 T^{\star}_{\mathrm{KL}}(\boldsymbol{\mu}) \: ,
\end{align*}
The maximiser of the above equation is denoted by $\omega^{\star}_{\mathrm{KL},\beta}(\boldsymbol{\mu})$, and is further referred to as the $\beta$-optimal allocation as it is unique. 
Let $N_{n,b}^{a}$ denote the number of times arm $b$ was pulled when $a$ was the leader, and $L_{n,a}$ denotes the number of times arm $a$ was the leader.
In order to select the next arm to sample $I_n$, TTUCB relies on $K$ tracking procedures, i.e. set $I_n = B_n$ if $N_{n,B_n}^{B_n} \le  \beta L_{n+1,B_n}$, else $I_n = C_n$.
This ensures that $\max_{a \in [K], n>K}|N_{n,a}^{a} - \beta L_{n,a}| \le 1$.
Standing on this premise, we now describe how we design an $\epsilon$-global DP extension of TTUCB.

\noindent\textbf{Private algorithm design.} 
As illustrated in Algorithm~\ref{algo:AdaPTT}, \adaptt{} relies on three ingredients: \textit{adaptive episodes with doubling}, \textit{forgetting}, and \textit{adding calibrated Laplacian noise}. 
(1) \adaptt{} maintains $K$ episodes, i.e. one per arm. 
The private empirical estimate of the mean of an arm is only updated at the end of an episode, that means when the number of times that a particular arm was played doubles (Line 5). 
(2) For each arm $a$, \adaptt{} forgets rewards from previous phases of arm $a$, i.e. the private empirical estimate of arm $a$ is only computed using the rewards collected in the last phase of arm $a$ (Line 8). 
This assures that the means of each arm are estimated using a non-overlapping sequence of rewards. 
(3) Thanks to this \textit{doubling} and \textit{forgetting}, \adaptt{} is $\epsilon$-global DP as soon as each empirical mean (Line 9) is made $\epsilon$-DP, and thus, avoiding any use of privacy composition. 
This is achieved by adding Laplace noise. 
We formalise this intuition in Lemma~\ref{lem:privacy} of Appendix~\ref{sec:privacy_proof}.

\begin{remark}
The aforementioned generic wrapper can be used to construct a near-optimal differentially private version of any existing FC-BAI algorithm that deploys a sampling rule with the empirical means of rewards. In this work, we consider and rigorously analyse the Top Two algorithms since they demonstrate both good theoretical guarantees and empirical performance.
\end{remark}

\begin{theorem}[Privacy analysis]\label{thm:privacy}
For rewards in $[0,1]$, \adaptt{} satisfies $\epsilon$-global DP.
\end{theorem}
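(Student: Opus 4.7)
\textbf{Proof plan for Theorem~\ref{thm:privacy}.}
The strategy is the standard ``parallel composition then post-processing'' argument for DP, with the key novelty being that AdaP-TT's doubling-plus-forgetting structure eliminates the need to invoke advanced composition. I would start by fixing two neighbouring tables $\underline{\mathbf{d}}^T$ and $\underline{\mathbf{d}}'^T$ with $\dham(\underline{\mathbf{d}}^T, \underline{\mathbf{d}}'^T) = 1$, and let $t^\star$ be the unique row in which they differ. From the BAI protocol of Algorithm~\ref{prot:bai}, the only reward that can depend on whether the table is $\underline{\mathbf{d}}^T$ or $\underline{\mathbf{d}}'^T$ (for a fixed prefix of actions) is the one produced at step $t^\star$, namely $r_{t^\star} = \underline{\mathbf{d}}_{t^\star, a_{t^\star}}$.

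Next, I would isolate the data-dependent component of the algorithm. Because each per-arm episode ends exactly when the pull count for that arm doubles, and the forgetting rule discards all rewards from prior episodes of the same arm, every reward $r_t$ enters the computation of \emph{exactly one} noisy empirical mean throughout the execution: the mean of the episode of arm $a_t$ that contains step $t$. In particular, $r_{t^\star}$ affects only the single Laplace-noised mean $\tilde\mu_{a_{t^\star}, \ell^\star}$ corresponding to its episode $\ell^\star$. All other noisy means are computed from rewards that are identical under $\underline{\mathbf{d}}^T$ and $\underline{\mathbf{d}}'^T$, hence have exactly the same distribution under the two tables.

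For the one affected mean, I would appeal to Theorem~\ref{thm:laplace}. Rewards lie in $[0,1]$, so the empirical mean $\hat\mu$ over the $n$ rewards of an episode has sensitivity $1/n$; equivalently, the sum has sensitivity $1$. AdaP-TT's Laplace noise is calibrated so that the released $\tilde\mu$ is $\epsilon$-DP with respect to the episode's reward vector (e.g.\ by adding $Lap(1/\epsilon)$ to the sum and dividing by $n$). Hence the release of $\tilde\mu_{a_{t^\star}, \ell^\star}$ is an $\epsilon$-DP function of $r_{t^\star}$.

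Finally, I would invoke post-processing. The output $(\underline{a}^\tau, \hat a, \tau)$ is a randomised function of the full collection of released noisy means together with internal randomness that is independent of $\underline{\mathbf{d}}^T$. Decomposing $\pi(\underline{a}^\tau, \hat a, \tau \mid \underline{\mathbf{d}}^T)$ along the chain rule from Eq.~\eqref{eq:prob_event} and factoring through the noisy means makes this explicit: each sampling, stopping, and recommendation probability depends on the rewards only through the $\tilde\mu$'s. Combining the indistinguishability of the affected mean with the identical distribution of all other means then yields $\pi(\underline{a}^\tau, \hat a, \tau \mid \underline{\mathbf{d}}^T) \le e^\epsilon \pi(\underline{a}^\tau, \hat a, \tau \mid \underline{\mathbf{d}}'^T)$, as required by Definition~\ref{def:global_dp}. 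The main obstacle I anticipate is the bookkeeping needed to formalise ``each reward enters at most one mean''; this must be stated as an invariant of the per-arm doubling-and-forgetting schedule, and is presumably what Lemma~\ref{lem:privacy} in Appendix~\ref{sec:privacy_proof} is meant to record.
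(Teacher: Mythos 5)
Your approach is essentially the paper's: doubling and forgetting guarantee that a single changed reward contaminates exactly one Laplace-noised per-episode mean, that release is $\epsilon$-DP by the Laplace mechanism (with the calibration you describe), and the output is a post-processing. One subtlety your write-up glosses over but the paper (Appendix~\ref{sec:privacy_proof}, Steps 1--4) treats explicitly: after the contaminated episode ends, the \emph{identity} of subsequent episodes (their boundaries) depends adaptively on the contaminated noisy mean via the sampling rule, so ``all other noisy means have exactly the same distribution'' only holds after conditioning on the action prefix; the paper handles this by decomposing $\pi(\underline{a}^T,\hat a,T\mid\underline{\textbf{d}}^T)$ at the end $\xi_j$ of the contaminated episode, writing the remainder as a randomised post-processing $f_{\underline{\textbf{d}}_{>\xi_j}}\circ L_j$ of the sufficient statistic $L_j$ (noisy means released up to $\xi_j$), and then applying Lemma~\ref{lem:privacy} to $L_j$ alone.
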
\vspace*{-.6em}
\noindent\textit{Proof sketch.}
A change in one user \textit{only affects} the empirical mean calculated at one episode of an arm, which is made private using the Laplace Mechanism and Lemma~\ref{lem:privacy}. 
Since the sampled actions, recommended action, and stopping time are computed only using the private empirical means, \adaptt{} satisfies $\epsilon$-global DP thanks to post-processing lemma. 
We refer to Appendix \ref{sec:privacy_proof} for details.

\noindent\textbf{Private GLR stopping rule.}
We consider the private GLR stopping rule based on the private means and on the pulling counts from the last phase (Line 12), and recommend the arm with the highest private mean (Line 11).
Lemma~\ref{thm:delta_correct_private_threshold} yields a threshold function ensuring that any sampling rule is $\delta$-correct, when using the private GLR stopping rule.
\begin{theorem}[$\delta$-correctness]\label{thm:delta_correct_private_threshold} 
    Let $\delta \in (0,1)$, $\epsilon > 0$.
    Let $s > 1$ and $\zeta$ be the Riemann $\zeta$ function. 
    Let $ c_{k}(n, m,\delta) = 2 \cC_{G} ( \log((K-1)\zeta (s)k^s/\delta )/2) + 2 \log(4 + \log n ) + 2 \log(4 + \log m)$ be the threshold without privacy.
    Given any sampling rule, the following threshold
	\begin{equation} \label{eq:private_threshold}
		c_{\epsilon, k_{1}, k_{2}}(n, m,\delta) = 2 c_{k_{1}k_{2}}(n, m,\delta/2) + \frac{1}{n \epsilon^2} \log \left(\frac{2 K k_{1}^{s} \zeta(s)}{\delta} \right)^2 + \frac{1}{m \epsilon^2} \log \left(\frac{2 K k_{2}^{s} \zeta(s)}{\delta} \right)^2
	\end{equation}
    with the GLR stopping rule yields a $\delta$-correct algorithm for sub-Gaussian distributions. 
    The function $\cC_{G}$ is defined in~\eqref{eq:def_C_gaussian_KK18Mixtures}. 
    It satisfies $\cC_{G}(x) \approx x + \ln(x)$.
\end{theorem}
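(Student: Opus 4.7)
The plan is to show that the mistake event $\{\tau < \infty,\, \widehat a \ne a^\star\}$ is contained in the complement of a carefully designed ``good event'' $\cE$, where the deviations between private means and true means are well controlled, and then bound $\Pr_{\boldsymbol\nu}(\cE^c) \le \delta$ via a union bound.

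First, I would characterise the mistake event. If $\widehat a \ne a^\star$ at stopping, then the private GLR statistic for the pair $(\widehat a, a^\star)$ built from the last phases of each arm, with sample counts $n$ and $m$ and episode indices $k_1$ and $k_2$ respectively, exceeds the threshold $c_{\epsilon, k_1, k_2}(n, m, \delta)$. I would decompose the private mean $\tilde \mu_a^{(k)} = \hat \mu_a^{(k)} + Y_a^{(k)}/(n_a^{(k)}\epsilon)$ into its empirical part and the Laplace noise part. Using $(x+y)^2 \le 2x^2 + 2y^2$ on the GLR statistic splits it into a ``sub-Gaussian'' piece involving only $\hat\mu_a^{(k)} - \mu_a$, and a ``privacy'' piece involving only the noise terms $Y_a^{(k_i)}/(n\epsilon)$, $Y_{a^\star}^{(k_2)}/(m\epsilon)$. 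This motivates the factor $2$ in front of $c_{k_1 k_2}(n,m,\delta/2)$ and the additive Laplace terms $(n\epsilon^2)^{-1}\log^2(\cdot) + (m\epsilon^2)^{-1}\log^2(\cdot)$ in~\eqref{eq:private_threshold}.

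Second, I would define $\cE = \cE^{\mathrm{emp}} \cap \cE^{\mathrm{priv}}$ where $\cE^{\mathrm{emp}}$ asserts that for every arm $a$ and every episode index $k$, the non-private GLR deviation term is controlled by $c_{k}(n,m,\delta/2)$, and $\cE^{\mathrm{priv}}$ asserts that for every arm $a$ and every episode $k$, $|Y_a^{(k)}| \le \log(2K k^s \zeta(s)/\delta)$. Because doubling and forgetting guarantee that the samples within a phase are i.i.d., the non-private GLR concentration reduces to the standard Gaussian-mixture martingale bound of Kaufmann--Koolen, giving the function $\cC_G$ with threshold inflated by a union bound across arms (factor $K-1$) and across phases (factor $\zeta(s) k^s$ so that $\sum_k k^{-s} < \infty$). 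The Laplace part $\cE^{\mathrm{priv}}$ follows directly from the tail bound $\Pr(|Y| \ge t) = e^{-t}$ for $Y \sim \mathrm{Lap}(1)$, again union-bounded over arms and over episode indices using $\zeta(s)$.

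Third, I would verify that on $\cE$ the private GLR statistic for any wrong pair cannot exceed $c_{\epsilon, k_1, k_2}(n,m,\delta)$: the sub-Gaussian part is bounded by $2 c_{k_1 k_2}(n,m,\delta/2)$ after combining the per-arm deviations (the product $k_1 k_2$ inside $c_{k_1 k_2}$ absorbs the two independent phase indices), and the Laplace part is bounded by $(n\epsilon^2)^{-1}\log^2(2Kk_1^s\zeta(s)/\delta) + (m\epsilon^2)^{-1}\log^2(2Kk_2^s\zeta(s)/\delta)$ after squaring the Laplace tail bounds and dividing by $1/n+1/m$. Summing the failure probabilities of $\cE^{\mathrm{emp}}$ and $\cE^{\mathrm{priv}}$ (each $\le \delta/2$) yields $\Pr(\cE^c) \le \delta$, which implies $\delta$-correctness.

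The main obstacle is the clean algebraic decomposition in the third step: one must track how the ``$1/n + 1/m$'' normaliser in the GLR interacts with the decomposition $(a+b)^2 \le 2a^2 + 2b^2$, how the sub-Gaussian and Laplace pieces are isolated per arm (so that the empirical part retains the $\cC_G$-mixture form needed to invoke the Kaufmann--Koolen inequality verbatim), and how the union bound over the two independent phase counters $k_1, k_2$ compounds so that $\zeta(s) k_1^s$ and $\zeta(s) k_2^s$ appear separately in the Laplace terms while their product $k_1 k_2$ is enough for the non-private term. Everything else is routine concentration and a post-processing-style union bound.
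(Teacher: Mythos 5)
Your proposal is correct and follows essentially the same route as the paper's proof: decomposing the private GLR statistic via $(x+y)^2 \le 2x^2 + 2y^2$ into a non-private (sub-Gaussian, Kaufmann--Koolen mixture martingale) part controlled by $2c_{k_1 k_2}(n,m,\delta/2)$ and a Laplace-noise part controlled by the exponential tail of $\tilde N_{k,a}|Y_{k,a}|\sim\cE(\epsilon)$, each with the $\zeta(s)k^s$ union bound over phases, then budgeting $\delta/2 + \delta/2$. The only cosmetic difference is that you phrase this via explicit good events $\cE^{\mathrm{emp}}\cap\cE^{\mathrm{priv}}$ rather than directly bounding the failure probability through the chain of inequalities, but the concentration lemmas, decomposition, and union-bound bookkeeping (product $k_1k_2$ for the non-private term, separate $k_1^s$, $k_2^s$ for the Laplace terms) are identical to the paper's.
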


\noindent \textit{Remark.} We observe that approximately $c_{\epsilon, k_{1}, k_{2}}(n, m,\delta) \approx 2\log (1/\delta) + ( 1/n + 1/m ) \log (1/\delta)^2/\epsilon^2$.

\noindent\textit{Proof sketch.} Proving $\delta$-correctness of a GLR stopping rule is done by leveraging concentration results. Specifically, we start by decomposing the failure probability $\bP_{\mu}\left( \tau_{\delta} < + \infty, \hat a \neq  a^\star \right)$ into a non-private and a private part using the basic property of $\bP(X + Y  \ge a+b) \le \bP(X \ge a) + \bP(Y \ge b)$. 
The two-factor in front of $c_{k_{1}k_{2}}$ originates from the looseness of this decomposition.
To remove it, we would need a tighter stopping threshold that jointly controls both the non-private and the private parts.
We conclude using concentration results from sub-Gaussian random variables for the non-private part, and Laplace random variables for the private part. 

\begin{algorithm}[t!]
         \caption{\adaptt{}. \textit{Private statistics are in {\color{red}red}. Changes due to privacy are in {\color{blue}blue}.}}
         \label{algo:AdaPTT}
	\begin{algorithmic}[1]
 	\State {\bfseries Input:} $\beta \in (0,1)$, risk $\delta \in (0,1)$, privacy budget $\textcolor{blue}{\epsilon}$, thresholds $\textcolor{blue}{c_{\epsilon,k_1, k_2}}  : \N^2 \times (0,1) \to \R^+$ 
        \State {\bfseries Output:} Recommendation $\hat{a}$ and Stopping time $\tau$ satisfying $\epsilon$-global DP
        \State {\bfseries Initialization:} $\forall a \in [K]$, pull arm $a$, set $k_{a}=1$, $T_{1}(a) = \arms+1$, $L_{n,a} = 0$, $N_{n, a} = 1$, $n = K+1$.
        \For{$n > K$}
            \If{there exists $a \in [K]$ such that $N_{n,a} \ge 2 N_{T_{k_{a}}(a),a}$} \Comment{Per-arm doubling}
                \State Change phase $k_{a} \gets k_{a} + 1$ for this arm $a$
                \State Set $T_{k_{a}}(a) = n$ and $\tilde N_{k_{a}, a} =  N_{T_{k_{a}}(a), a} - N_{T_{k_{a} - 1}(a), a}$ \Comment{Pulls of $a$ in its last phase}
                \State Set $\hat \mu_{k_{a},a} = \tilde N_{k_{a},a}^{-1} \sum_{s = T_{k_{a}-1}(a)}^{T_{k_{a}}(a) - 1} X_{s} \ind{I_s = a}$ \Comment{Empirical mean of $a$ in its last phase}
                \State Set $\privmean{k_{a}}{a} =  \hat \mu_{k_{a},a} +  \textcolor{blue}{Y_{k_{a},a}}$ where $Y_{k_{a},a} \sim \text{Lap}((\epsilon\tilde N_{k_{a},a} )^{-1} )$ \Comment{Make it private}
						\EndIf
              \State Set $\hat a_{n} = \argmax_{b \in [K]} \privmean{k_{b}}{b}$ \Comment{Arm with highest private mean}
                \If{ $\frac{(\privmean{k_{\hat a_{n}}}{\hat a_{n}}  - \privmean{k_{b}}{b})^2}{1/\tilde N_{k_{\hat a_{n}},\hat a_{n}}+ 1/\tilde N_{k_{b},b}} \ge 2\textcolor{blue}{c_{\epsilon, k_{\hat a_{n}},k_{b}}}(\tilde N_{k_{\hat a_{n}}, \hat a_{n}}, \tilde N_{k_{b},b},\delta)$ for all $b \ne \hat a_{n}$}
                    \State{\bfseries return} ($\hat a_{n}$, $n$) \Comment{If GLR stopping condition is met, recommend the empirical best arm}
            \EndIf
                \State Set $B_n = \argmax_{a \in [K]} \{\privmean{k_{a}}{a} + \sqrt{k_{a}/\tilde N_{k_{a},a}} + k_{a}/(\epsilon \tilde N_{k_{a},a}) \}$ \Comment{Private UCB leader}
                \State Set $C_n = \argmin_{a \ne B_n} \frac{\privmean{k_{B_n}}{B_n} - \privmean{k_{a}}{a} }{\sqrt{1/N_{n, B_n}+ 1/N_{n,a}}}$ \Comment{Private TC challenger}
                \State Set $I_n = B_n$ if $N_{n,B_n}^{B_n} \le  \beta L_{n+1,B_n}$, else $I_n = C_n$ \Comment{Tracking}
                \State Pull $I_n$ and observe $X_{n} \sim \nu_{I_n}$
                \State Set $N_{n+1, I_n} \gets N_{n, I_n} + 1$, $N^{B_n}_{n+1, I_n} \gets N^{B_n}_{n, I_n} + 1$ and $L_{n+1, B_n} \gets L_{n, B_n} + 1$. Set $n \gets n + 1 $
        \EndFor
	\end{algorithmic}
\end{algorithm}


    

\begin{theorem}[Asymptotic upper bound on expected sample complexity]\label{thm:sample_complexity}
	Let $(\delta, \beta) \in (0,1)^2$ and $\epsilon > 0$.
        Combined with the private GLR stopping rule using threshold as in~\eqref{eq:private_threshold}, \adaptt{} is $\delta$-correct and satisfies that, for all $\mu \in \R^{K}$ such that $\min_{a \ne b}|\mu_{a} - \mu_{b}| > 0$,
	\[
		\limsup_{\delta \to 0} \frac{\bE_{\mu}[\tau_{\delta}]}{\log(1/\delta)} \le 4 T_{\mathrm{KL},\beta}^\star(\boldsymbol{\mu}) \left( 1+ \sqrt{1 +  \frac{\Delta^2_{\max}}{2\epsilon^2}} \right) \: .
	\] 
\end{theorem}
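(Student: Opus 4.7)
The plan is to leverage the generic Top Two asymptotic analysis of~\cite{jourdan2022top} and adapt it to the three private wrapper ingredients of \adaptt{}: per-arm doubling, forgetting, and injected Laplace noise. Since $\delta$-correctness follows from Theorem~\ref{thm:delta_correct_private_threshold}, the goal reduces to exhibiting a sufficient condition under which the private GLR stopping rule fires at a time asymptotically matching the claimed bound. I would first define a high-probability concentration event $\cE_{T}$ on which, for every arm $a$ and every phase $k_{a}$, the deviation $|\privmean{k_{a}}{a} - \mu_{a}|$ is dominated by the sum of a sub-Gaussian term of order $\sqrt{\log(1/\delta)/\tilde N_{k_{a},a}}$ and a Laplace term of order $\log(1/\delta)/(\epsilon\, \tilde N_{k_{a},a})$. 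A weighted union bound across arms and phases, using the $\zeta$-series $\sum_{k}k^{-s}$ that already appears in the threshold~\eqref{eq:private_threshold}, makes $\bP_{\mu}(\cE_{T}^{c})$ summable in $T$, so the failure branch contributes only $\cO(1)$ to $\bE_{\mu}[\tau_{\delta}]$.

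Next, on $\cE_{T}$, I would adapt the TTUCB tracking lemma to this lazy, noisy setting. The private UCB bonus $\sqrt{k_{a}/\tilde N_{k_{a},a}} + k_{a}/(\epsilon\, \tilde N_{k_{a},a})$ is explicitly calibrated to dominate both noise sources, which forces the leader $B_{n}$ to coincide with $a^{\star}$ eventually. The tracking invariant $|N_{n,B_{n}}^{B_{n}} - \beta L_{n,B_{n}}| \le 1$ is preserved because it operates on cumulative counts rather than on private estimates. Combined with the challenger analysis of~\cite{jourdan2022top}, suitably extended to accommodate the phase-boundary quantization introduced by doubling, this yields $N_{n,a}/n \to \omega^{\star}_{\mathrm{KL},\beta,a}(\boldsymbol{\mu})$ up to vanishing terms. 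Substituting this asymptotic allocation into the private GLR condition together with the doubling relation $\tilde N_{k_{a},a} = \Theta(N_{T,a})$ (the forgetting factor) transforms the stopping criterion into an inequality of the form
\[
    \frac{T}{C_{1}\, T^{\star}_{\mathrm{KL},\beta}(\boldsymbol{\mu})} \;\ge\; C_{2}\, \log(1/\delta) \;+\; \frac{C_{3}\, \Delta_{\max}^{2}\, \log(1/\delta)^{2}}{\epsilon^{2}\, T} \;+\; o\!\left(\frac{\log(1/\delta)^{2}}{T}\right) \: .
\]
Multiplying by $T$ yields a quadratic $T^{2} - A\,T - B \ge 0$ with $A = \Theta(T^{\star}_{\mathrm{KL},\beta}(\boldsymbol{\mu})\log(1/\delta))$ and $B = \Theta(T^{\star}_{\mathrm{KL},\beta}(\boldsymbol{\mu})\, \Delta_{\max}^{2}\, \log(1/\delta)^{2}/\epsilon^{2})$; its positive root $(A + \sqrt{A^{2} + 4B})/2$ reduces, after tracking the non-private constants (factor $2$ from the threshold multiplier in the stopping rule and factor $2$ from the forgetting step, combining to the $4$ in front of $T^{\star}_{\mathrm{KL},\beta}$), to the claimed bound $4\, T^{\star}_{\mathrm{KL},\beta}(\boldsymbol{\mu})\,\log(1/\delta)\,(1 + \sqrt{1 + \Delta_{\max}^{2}/(2\epsilon^{2})})$.

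The most delicate step will be the tracking argument: porting the TTUCB analysis to a regime where the private means $\privmean{k_{a}}{a}$ are updated only at geometrically spaced phase boundaries and are corrupted by independent Laplace draws at each phase. Standard Top Two proofs lean on martingale arguments and on almost-tight concentration at the current step, both of which are disrupted by the lazy updates. The key technical input is to verify that the two-component bonus in \adaptt{}'s leader rule remains a valid upper confidence bound throughout an entire phase (while $\tilde N_{k_{a},a}$ is frozen and $N_{n,a}$ can grow up to a factor of two), so that the leader keeps pointing at $a^{\star}$ without artificially inflating exploration and distorting the limit allocation. A secondary challenge is to quantify how the per-phase Laplace noise interacts with the $\zeta$-weighted union bound in the GLR threshold so that the private and non-private $\delta$-correctness margins from Step~1 combine cleanly without spurious logarithmic factors that would spoil the $\log(1/\delta)$-asymptotic rate.
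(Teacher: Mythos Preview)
Your outline follows the right skeleton (convergence of the allocation, then inversion of the private GLR stopping rule into a quadratic in $T$), and the quadratic-root mechanism producing $1+\sqrt{1+\Delta_{\max}^2/(2\epsilon^2)}$ is exactly what the paper does in Lemma~\ref{lem:inversion_GLR_stopping_rule}. Two concrete points, however, are off.

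\textbf{The origin of the factor $4$.} You attribute the $4$ in front of $T^{\star}_{\mathrm{KL},\beta}$ to ``factor $2$ from the threshold multiplier in the stopping rule and factor $2$ from the forgetting step''. In the paper both factors of $2$ come purely from the phase mechanism: one from forgetting (only half the samples are used in $\tilde N_{k_a,a}$) and one from doubling (the stopping rule is only checked at phase boundaries, so one must wait up to a doubling of counts). Formally this is the $(2+\eta)^2$ in Lemma~\ref{lem:bounded_phase_length_ratio}, applied twice in the chain $T^{+}_{k_\delta+2}\le(2+\eta)^2 T^{+}_{k_\delta+1}$. The multiplier $2c_{k_1k_2}$ in the private threshold~\eqref{eq:private_threshold} does \emph{not} feed the outer $4$; it sits inside the quadratic and is precisely why the non-private limit $\epsilon\to\infty$ gives $4\cdot 2=8$ rather than $4$.

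\textbf{A missing $T^{\star}_{\mathrm{KL},\beta}$ in your quadratic.} Your displayed stopping inequality has privacy term $C_3\Delta_{\max}^2\log(1/\delta)^2/(\epsilon^2 T)$, and accordingly you write $B=\Theta(T^{\star}_{\mathrm{KL},\beta}\Delta_{\max}^2\log(1/\delta)^2/\epsilon^2)$. But at the $\beta$-optimal allocation, the equilibrium condition~\eqref{eq:equality_equilibrium} gives $1/\tilde N_{k_{a^\star},a^\star}+1/\tilde N_{k_a,a}\approx (c/T)\,T^{\star}_{\mathrm{KL},\beta}\,\Delta_a^2$, so the privacy term in the threshold scales like $T^{\star}_{\mathrm{KL},\beta}\Delta_{\max}^2\log(1/\delta)^2/(\epsilon^2 T)$, not $\Delta_{\max}^2\log(1/\delta)^2/(\epsilon^2 T)$. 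With your stated $A,B$ the root is $\Theta\bigl(T^{\star}_{\mathrm{KL},\beta}\log(1/\delta)\bigr)\bigl(1+\sqrt{1+c\Delta_{\max}^2/(T^{\star}_{\mathrm{KL},\beta}\epsilon^2)}\bigr)$, which is not the claimed bound. The correct scaling is $B=\Theta\bigl((T^{\star}_{\mathrm{KL},\beta})^2\Delta_{\max}^2\log(1/\delta)^2/\epsilon^2\bigr)$.

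\textbf{Approach difference.} The paper does not work on a high-probability event $\cE_T$ with summable complement. It uses almost-sure concentration via random variables $W_\mu$ (sub-Gaussian) and $W_\epsilon$ (sub-exponential) from Lemma~\ref{lem:W_concentration_gaussian}, and then proves that various random times (sufficient exploration $N_0$, leader identification $N_1$, convergence time $T_{\boldsymbol{\mu},\gamma}$) have \emph{finite expectation}. Your sentence ``this yields $N_{n,a}/n\to\omega^\star_{\mathrm{KL},\beta,a}$ up to vanishing terms'' is the right limit but not enough: the crux is $\bE_{\bnu}[T_{\boldsymbol{\mu},\gamma}]<\infty$ (Lemma~\ref{lem:convergence_others_arms_to_beta_allocation}), without which you cannot convert the inversion step into a bound on $\bE_{\bnu}[\tau_\delta]$. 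A summable-$\bP(\cE_T^c)$ route can in principle be made to work, but you would still need a convergence-time argument on $\cE_T$, and your concentration event should not carry a $\sqrt{\log(1/\delta)}$ scale for the sampling-rule analysis (the leader/challenger bonuses scale in $k_a\approx\log n$, not in $\log(1/\delta)$).
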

\noindent\textit{Proof sketch.} 
We adapt the asymptotic proof of the TTUCB algorithm, which is based on the unified analysis of Top Two algorithms from~\citep{jourdan2022top}. 
Below, we present high-level ideas of the proof and specify the effect of different elements of \adaptt{} on the expected sample complexity.

\textit{Consequences of Theorem 5.} (1) The \textbf{non-private TTUCB algorithm}~\cite{jourdan2022non} achieves a sample complexity of $T_{\mathrm{KL},\beta}^\star(\boldsymbol{\mu})$ for sub-Gaussian random variables.
The proof relies on showing that the empirical pulling counts are converging towards the $\beta$-optimal allocation $\omega^{\star}_{\mathrm{KL},\beta}(\boldsymbol{\mu})$. 
(2) The \textbf{effect of doubling and forgetting} is a multiplicative four-factor, i.e. $4 T_{\mathrm{KL},\beta}^\star(\boldsymbol{\mu})$. 
The first two-factor is due to forgetting since we throw away half of the samples.
The second two-factor is due to doubling since we have to wait for the end of an episode to evaluate the stopping condition. 
(3) The \textbf{Laplace noise} only affects the empirical estimate of the mean.
Since the Laplace noise has no bias and a sub-exponential tail, the private means will still converge towards their true values.
Therefore, the empirical counts of \adaptt{} will also converge to $\omega^{\star}_{\mathrm{KL},\beta}(\boldsymbol{\mu})$ asymptotically.
(4) While the \textbf{Laplace noise has little effect on the sampling rule} itself, it \textbf{changes drastically the dependency in $\log(1/\delta)$ of the threshold} used in the GLR stopping rule.
The private threshold $c_{\epsilon, k_1, k_2}$ has an extra factor $\bigO(\log^2( 1/\delta ))$ compared to the non-private one $c_{k}$. 
Using the convergence towards $\omega^{\star}_{\mathrm{KL},\beta}(\boldsymbol{\mu})$, the stopping condition is met as soon as $\frac{n}{T^{\star}_{\mathrm{KL},\beta}(\boldsymbol{\mu})} \lesssim 2\log ( 1/\delta ) + \frac{{\Delta}^2_{\max}}{2\epsilon^2} \frac{T^{\star}_{\mathrm{KL},\beta}(\boldsymbol{\mu})}{n} \log^2( 1/\delta )$.
Solving the inequality for $n$ concludes the proof while adding a multiplicative four-factor.

\noindent \textit{Discussion.} 
For $\beta = 1/2$, it is well known that $T^\star_{\mathrm{KL},1/2}(\boldsymbol{\mu} ) \leq 2 T^\star_{\mathrm{KL}}(\boldsymbol{\mu}) \leq 8\sum_{a \ne a^\star} \Delta_a^{-2}$.
We consider Bernoulli instances ($0 < \Delta_{\min} \le \Delta_{\max} < 1$), where the gaps have the same order of magnitude, i.e. \textit{Condition 1}: there exists a constant $C \ge 1$ such that $\Delta_{\max}/\Delta_{\min} \leq C$. For such instances, there exists a universal constant $c$, such that
\[
    \limsup_{\delta \to 0} \frac{\bE_{\boldsymbol{\mu}}[\tau_{\delta}]}{\log(1/\delta)} \le~ c~ \max  \Big\lbrace T^\star_{\mathrm{KL},1/2}(\boldsymbol{\mu} ), C \epsilon^{-1}  \sum\nolimits_{a \neq a^\star } \Delta_{a}^{-1}   \Big\rbrace \: . 
\]
Without privacy, i.e. $\epsilon \to + \infty$, \adaptt{} yields a multiplicative eight-factor.
On top of the four-factor due to doubling and forgetting, another multiplicative two comes from $2c_{k_{1}k_{2}}$ in Equation~\eqref{eq:private_threshold}.

\noindent \textbf{Comparison to the lower bound.} 
For Bernoulli bandits verifying \textit{Condition 1}, the upper bound of Theorem~\ref{thm:sample_complexity} matches the $T^\star_{\text{TV}}(\boldsymbol{\mu})/\epsilon$ lower bound of Corollary~\ref{crl:lb} up to constants in the high-privacy regime, i.e. when $\epsilon \preceq T^\star_{\text{TV}}(\boldsymbol{\mu})/T^\star_{\text{KL}}(\boldsymbol{\mu})$. 
In the low-privacy regime, the upper bound reduces to $T^\star_{\mathrm{KL},1/2}(\boldsymbol{\mu})$.
In Appendix~\ref{app:ssec_liminitation_and_open_problem}, we discuss in-depth why this difference is necessary for private BAI algorithms based on the GLR stopping rule, which poses an interesting open problem.

\noindent \textbf{Comparison to DP-SE.} DP-SE is a private version of the successive-elimination algorithm studied in~\citep{dpseOrSheffet} for the regret minimisation setting. 
The algorithm samples active arms uniformly during phases of geometrically increasing length. 
Based on the private confidence bounds, DP-SE eliminates provably sub-optimal arms at the end of each phase. 
Due to its phased-elimination structure, DP-SE can be easily converted into an $\epsilon$-DP-FC-BAI algorithm, where we stop once there is only one active arm left. 
In particular, the proof of Theorem 4.3 of~\cite{dpseOrSheffet} shows that with high probability any sub-optimal arm $a\neq a^\star$ is sampled no more than $\bigO (\Delta_a^{2} + (\epsilon\Delta_a)^{-1})$. 
From this result, it is straightforward to extract a sample complexity upper bound for DP-SE, i.e. $\bigO ( \sum_{a\ne a^\star} \Delta_a^{-2}  +  \sum_{a\ne a^\star} (\epsilon\Delta_a)^{-1}).$ This shows that DP-SE too achieves (ignoring constants) the high-privacy lower bound $T^\star_{\textrm{TV}}(\boldsymbol{\mu})/\epsilon$ for Bernoulli instances. 
However, due to its uniform sampling within the phases, DP-SE is less adaptive than \adaptt{}. 
Inside a phase, DP-SE continues to sample arms that might already be known to be bad, while \adaptt{} adapts its sampling rule based on the transportation costs that reflect the amount of evidence collected in favour of the hypothesis that the leader is the best arm. 
Finally, \adaptt{} has the advantage of being anytime, i.e. its sampling strategy does not depend on the risk $\delta$.

\vspace*{-1em}\section{Experimental analysis}\label{sec:experiments}\vspace*{-.5em}
We perform experiments\textbf{} to show that: (i) \adaptt{} has better empirical performance compared to \dpse{}, and (ii) the transition between high and low-privacy regimes is reflected empirically. 

\begin{figure}[t!]\vspace*{-.5em}
	\centering
	\includegraphics[width=0.49\linewidth]{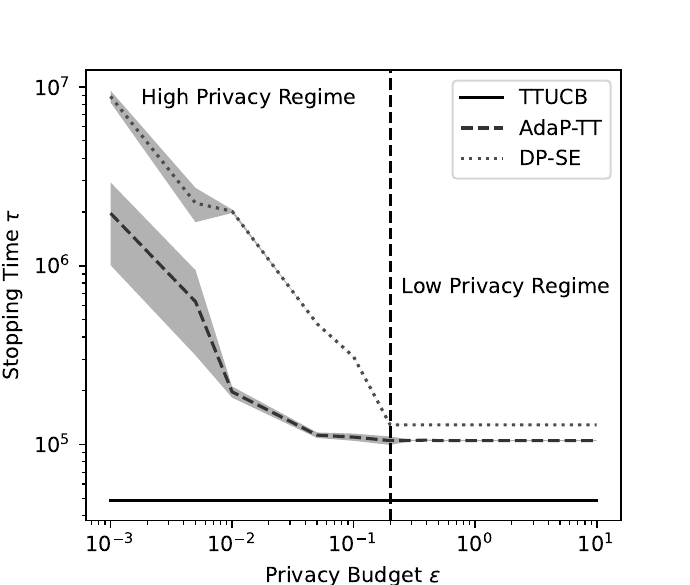}
	\includegraphics[width=0.49\linewidth]{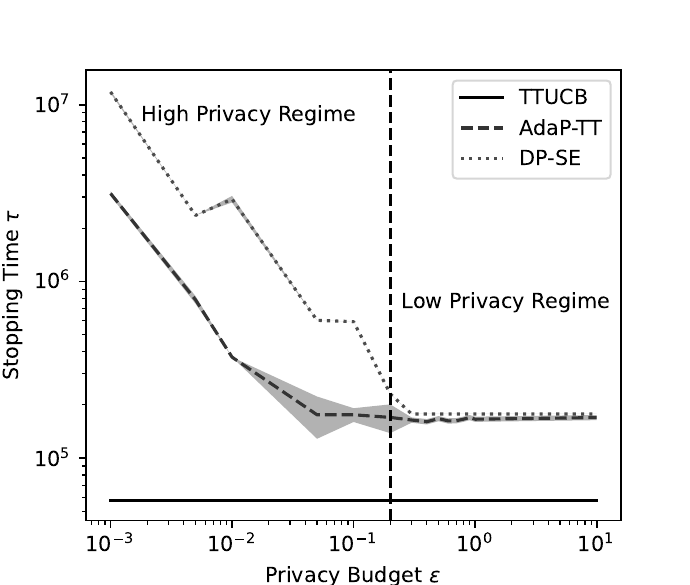}
	\caption{Evolution of the stopping time $\tau$ (mean $\pm$ std. over 100 runs) of \adaptt{}, DP-SE, and TTUCB with respect to the privacy budget $\epsilon$ for $\delta = 10^{-2}$ on Bernoulli instance $\mu_{1}$ (left) and $\mu_{2}$ (right). The shaded vertical line separates the two privacy regimes. \adaptt{} outperforms DP-SE.}
	\label{fig:experiments}\vspace*{-.5em}
\end{figure}

\noindent\textbf{Experimental setup.} We compare the performances of \adapucb{} and DP-SE for FC-BAI in different Bernoulli instances as in~\cite{dpseOrSheffet}. 
The first instance has means $\mu_{1} = (0.95, 0.9, 0.9, 0.9, 0.5)$ and the second instance has means $\mu_{2} = (0.75, 0.7, 0.7, 0.7, 0.7)$.
As a benchmark, we also compare to the non-private TTUCB. We set the risk $\delta=10^{-2}$. We implement all the algorithms in Python (version $3.8$) and on an 8-core 64-bits Intel i5@1.6 GHz CPU. 
We run each algorithm $100$ times, and plot corresponding average and standard deviations of stopping times in Figure~\ref{fig:experiments}. 
We also test the algorithms on other Bernoulli instances and report the results in Appendix~\ref{app:experiment}.

\noindent\textbf{Result analysis.}
\noindent\textit{a. Efficiency in performance.} \adaptt{} requires less samples than DP-SE to provide a $\delta$-correct answer. In the high privacy regime, i.e. small $\epsilon$, \adaptt{} outperforms DP-SE in all the instances tested. In the low privacy regimes, i.e. large $\epsilon$, both algorithms have similar performance that in the worst case is four times the samples required of TTUCB, as shown theoretically.\\
\noindent\textit{b. Impact of privacy regimes.} As indicated by the theoretical sample complexity lower bounds and upper bounds, the experimental performance of \adaptt{} demonstrates two regimes: a high-privacy regime (for $\epsilon < 0.2$), where the stopping time of \adaptt{} depends on the privacy budget $\epsilon$, and a low privacy regime (for $\epsilon > 0.2$), where the performance of \adaptt{} does not depend on $\epsilon$. 

\vspace*{-1em}
\section{Conclusion and future works}\label{sec:conclusion}\vspace*{-.5em}
We study FC-BAI with $\epsilon$-global DP. We derive a sample complexity lower bound that quantifies the additional samples needed by a $\delta$-correct BAI strategy in order to ensure $\epsilon$-global DP. The lower bound further suggests the existence of two privacy regimes. In the \textit{low-privacy regime}, no additional samples are needed, and \textit{privacy can be achieved for free}. For the \textit{high-privacy regime}, the lower bound reduces to $\Omega({\epsilon}^{-1} T^\star_{\text{TV}})$, and \textit{more samples are required}. We also propose \adaptt{}, an $\epsilon$-global DP variant of the Top Two algorithms, that runs in adaptive phases and adds Laplace noise. \adaptt{} achieves the high privacy regime lower bound up to multiplicative constants.

The upper bound matches the lower bound by a multiplicative constant in the high privacy regime, and is also loose in some instances in the low privacy regime, due to the mismatch between the KL divergence of Bernoulli distributions and that of Gaussian. It would be an interesting technical challenge to merge this gap. One possible direction to solve this issue is to use transportation costs tailored to Bernoulli for both the Top Two Sampling and the stopping. Another interesting direction would be to extend the proposed technique to other variants of pure DP, namely $(\epsilon, \delta)$-DP and R\'enyi-DP~\citep{mironov2017renyi}, or other trust models, namely local DP~\citep{duchi2013local} and shuffle DP~\citep{cheu2021differential, girgis2021renyi}.
\vspace*{-1em}
\begin{ack}\vspace*{-1em}
This work has been partially supported by the THIA ANR program ``AI\_PhD@Lille". A. Al-Marjani acknowledges the support of the Chaire SeqALO (ANR-20-CHIA-0020). D. Basu acknowledges the Inria-Kyoto University Associate Team ``RELIANT'' for supporting the project, and the ANR JCJC for the REPUBLIC project (ANR-22-CE23-0003-01). We thank Emilie Kaufmann and Aurélien Garivier for the interesting conversations. We also thank Philippe Preux for his support.
\end{ack}

\bibliographystyle{apalike}
\bibliography{references}

\newpage
\appendix
\part{Appendix}
\parttoc


\clearpage

\section{Outline} \label{app:outline}

The appendices are organized as follows:
\begin{itemize}
    \item The proof of our lower bound is detailed in Appendix~\ref{app:lb} (Theorem~\ref{thm:lower_bound}).
    \item The proof of the privacy of \adaptt{} is done in Appendix~\ref{sec:privacy_proof} (Theorem~\ref{thm:privacy}).
    \item The \adaptt{} algorithm is presented in more details in Appendix~\ref{app:private_top_two}, and we show the $\delta$-correctness of the non-private GLR stopping rule (Theorem~\ref{thm:delta_correct_private_threshold}).
    \item The asymptotic upper bound on the expected sample complexity of \adaptt{} is proven in Appendix~\ref{app:adaptt_ana} (Theorem~\ref{thm:sample_complexity}).
    \item Extended experiments are presented in Appendix~\ref{app:experiment}.
\end{itemize}

\section{Lower bounds on sample complexity}\label{app:lb}
In this section, we provide the proofs for the sample complexity lower bounds. First, we present the canonical model for BAI to introduce the relevant quantities. Then, we prove an $\epsilon$-global version of the transportation lemma, i.e. Lemma~\ref{lem:chg_env_dp}. Using this lemma, we prove Theorem~\ref{thm:lower_bound}. Finally, we prove the formula expressing the TV characteristic time for Bernoulli instances.

\subsection{Canonical model for BAI}
Let $\boldsymbol{\nu} \triangleq \{\nu_a : a \in [\arms]\}$ be a bandit instance, consisting of $\arms$ arms with finite means $\{\mu_a\}_{a \in [\arms]}$. Now, we recall the interaction between a BAI strategy $\pi$ and the bandit instance $\nu$ in the Protocol~\ref{prot:bai}. The BAI strategy $\pi$ halts at $\tau$, samples a sequence of actions $\underline{A}^\tau$, and recommends the action $\hat{A}$. Let $\mathbb{P}_{\boldsymbol{\nu}, \pi}$ be the probability distribution over the triplets $(\tau, \underline{A}^\tau, \hat{A})$, when the BAI strategy $\pi$ interacts with the bandit instance $\nu$.

For a fixed $\horizon > 1$, a sequence of actions $\underline{a}^\horizon = (a_1, \dots, a_\horizon) \in [\arms]^\horizon$ and a recommendation $\hat{a} \in [\arms]$, we define the event $E = \{\tau = \horizon, \underline{A}^\tau = \underline{a}^\horizon,  \hat{A} = \hat{a} \}$. We have that

\begin{align*}
    \mathbb{P}_{\boldsymbol{\nu}, \pi}(E) = \int_{\underline{r}^\horizon = (r_1, \dots, r_\horizon) \in \real^\horizon} \pol(\underline{a}^\horizon, \hat{a}, \horizon \mid  \underline{r}^\horizon ) \prod_{t = 1}^\horizon \dd \nu_{a_t}(r_t) dr_t
\end{align*}

where 
\begin{align*}
     \pi(\underline{a}^\horizon, \widehat{a}, \horizon \mid \underline{r}^\horizon) \defn \operatorname{Rec}_{T+1}\left(\widehat{a} \mid \mathcal{H}_{\horizon}\right) \mathrm{S}_{T+1}\left(\top \mid \mathcal{H}_{\horizon} \right) \prod_{t=1}^T \mathrm{~S}_t\left(a_t \mid \mathcal{H}_{t - 1}\right)
\end{align*}
and $\mathcal{H}_t = (a_1, r_1, \dots, a_t, r_t)$.

\noindent \textbf{Remark on the bandit feedback.} Let $\pi$ be an $\epsilon$-DP BAI strategy. Let $\horizon \geq 1$, $\underline{a}^\horizon \in [\arms]^\horizon$ a sequence sampled actions and $\widehat{a} \in [\arms]$ a recommended actions. This time, let $\underline{r}^\horizon = \{r_1, \dots, r_\horizon \} \in \real^\horizon$ and $\underline{r'}^\horizon \in \real^\horizon$ two neighbouring sequence of rewards, i.e. $\dham(\underline{r}^\horizon, \underline{r'}^\horizon) \defn \sum_{t = 1}^\horizon \ind{r_t \neq r'_t} = 1$ . Consider the table of rewards $\underline{d}^\horizon$ consisting of concatenating $\underline{r}^\horizon$ colon-wise $ \arms$ times, i.e. $\underline{d}^\horizon_{t, i} = \underline{r}^\horizon_t$ for all $i \in [\arms]$ and all $t \in [\horizon]$ . Define $\underline{d'}^\horizon$ similarly with respect to $\underline{r'}^\horizon$.

In this case, by definition of $\pi$, $\underline{d}^\horizon$ and $\underline{d'}^\horizon$, it is direct that 
\begin{equation*}
   \pi(\underline{a}^\horizon, \widehat{a}, \horizon \mid \underline{r}^\horizon) = \pi(\underline{a}^\horizon, \widehat{a}, \horizon \mid \underline{d}^\horizon)
\end{equation*} 
and $\dham(\underline{d}^\horizon, \underline{d'}^\horizon) = 1$.

Which means that
\begin{equation*}
    \pi(\underline{a}^\horizon, \widehat{a}, \horizon \mid \underline{r}^\horizon)  \leq e^\epsilon \pi(\underline{a}^\horizon, \widehat{a}, \horizon \mid \underline{r'}^\horizon).
\end{equation*} 

In other words, \textit{if $\pi$ is $\epsilon$-pure DP for neighbouring table of rewards $\underline{d}^\horizon$, then $\pi$ is also $\epsilon$-pure DP for neighbouring sequence of observed rewards $\underline{r}^\horizon$.}

\subsection{Transportation lemma under $\epsilon$-global DP: Proof of Lemma~\ref{lem:chg_env_dp}}
\begin{replemma}{lem:chg_env_dp}[Transportation lemma under $\epsilon$-global DP]
Let $\delta \in (0,1) $ and $\epsilon>0$. Let $\boldsymbol{\nu}$ be a bandit instance and $\lambda \in \operatorname{Alt}(\boldsymbol{\nu})$. For any $\delta$-correct $\epsilon$-global DP BAI strategy, we have that
    \begin{equation*}
        6 \epsilon  \sum_{a = 1}^\arms \expect_{\boldsymbol{\nu}, \pi} \left [N_a(\tau) \right] \TV{\nu_{a}}{\lambda_{a}}  \geq \mathrm{kl}(1-\delta, \delta),
    \end{equation*}
    where $\mathrm{kl}(x, y) \defn x \log\frac{x}{y} + (1 - x) \log \frac{1 - x}{1 - y} \;$ for $x,y \in (0,1)$.
\end{replemma}

\begin{proof}
\textbf{Step 1: Distinguishability due to $\delta$-correctness.} Let $\pi$ be a $\delta$-correct $\epsilon$-global DP BAI strategy. Let $\boldsymbol{\nu}$ be a bandit instance and $\lambda \in \operatorname{Alt}(\boldsymbol{\nu})$.

Let $\mathbb{P}_{\boldsymbol{\nu}, \pi}$ denote the probability distribution of $(\underline{A}, \widehat{A}, \tau)$ when the BAI strategy $\pi$ interacts with $\nu$. For any alternative instance $\lambda \in \operatorname{Alt}(\boldsymbol{\nu})$, the data-processing inequality gives that
\begin{align}\label{ineq:data}
\KL{\mathbb{P}_{\boldsymbol{\nu}, \pi}}{\mathbb{P}_{\boldsymbol{\lambda}, \pi}} & \geq \mathrm{kl}\left(\mathbb{P}_{\boldsymbol{\nu}, \pi}\left(\widehat{A}=a^{\star}(\boldsymbol{\nu})\right), \mathbb{P}_{\boldsymbol{\lambda}, \pi}\left(\widehat{A}=a^{\star}(\boldsymbol{\nu})\right)\right) \notag\\
& \geq \mathrm{kl}(1-\delta, \delta) .
\end{align}
where the second inequality is because $\pi$ is $\delta$-correct i.e. $\mathbb{P}_{\boldsymbol{\nu}, \pi}\left(\widehat{A}=a^{\star}(\boldsymbol{\nu})\right) \geq 1 - \delta$ and $\mathbb{P}_{\boldsymbol{\lambda}, \pi}\left(\widehat{A}=a^{\star}(\boldsymbol{\nu})\right) \leq \delta$, and the monotonicity of the $\mathrm{kl}$.

\textbf{Step 2: Connecting KL and TV under $\epsilon$-global DP.} On the other hand, by the definition of the KL, we have that

\begin{align*}
    \KL{\mathbb{P}_{\boldsymbol{\nu}, \pi}}{\mathbb{P}_{\boldsymbol{\lambda}, \pi}} = \expect_{\tau, \underline{A}^\tau, \hat{A} \sim \mathbb{P}_{\boldsymbol{\nu}, \pi} } \left[ \log \left ( \frac{\mathbb{P}_{\boldsymbol{\nu}, \pi}(\tau, \underline{A}^\tau, \hat{A})}{ \mathbb{P}_{\boldsymbol{\lambda}, \pi}(\tau, \underline{A}^\tau, \hat{A}) } \right) \right]
\end{align*}

where 
\begin{equation*}
    \mathbb{P}_{\boldsymbol{\nu}, \pi}(\tau = \horizon, \underline{A}^\tau = \underline{a}^\horizon,  \hat{A} = \hat{a}) = \int_{\underline{r} \in \real ^ \horizon} \pi(\underline{a}^\horizon,  \widehat{a}, T \mid \underline{r}) \prod_{t = 1}^\horizon  \dd \nu_{a_t}(r_t).
\end{equation*}

Since $\pi$ is $\epsilon$-global DP, using the sequential Karwa-Vadhan lemma~\citep[Lemma 2]{azize2022privacy}, we get that

\begin{align*}
    \log \left ( \frac{ \mathbb{P}_{\boldsymbol{\nu}, \pi}(\tau = \horizon, \underline{A}^\tau = \underline{a}^\horizon,  \hat{A} = \hat{a})}{  \mathbb{P}_{\boldsymbol{\lambda
    }, \pi}(\tau = \horizon, \underline{A}^\tau = \underline{a}^\horizon,  \hat{A} = \hat{a}) } \right) &\leq  6 \epsilon \sum_{t = 1}^\horizon \TV{\nu_{a_t}}{\lambda_{a_t}}\\
    &= 6 \epsilon \sum_{a = 1}^\arms N_a(\horizon) \TV{\nu_{a}}{\lambda_{a}}
\end{align*}

Which gives that
\begin{align}\label{ineq:karwa}
    \KL{\mathbb{P}_{\boldsymbol{\nu}, \pi}}{\mathbb{P}_{\boldsymbol{\lambda}, \pi}} \leq 6 \epsilon \expect_{\boldsymbol{\nu}, \pi} \left [ \sum_{a = 1}^\arms N_a(\tau) \TV{\nu_{a}}{\lambda_{a}} \right].
\end{align}

Combining Inequalities~\ref{ineq:data} and~\ref{ineq:karwa} concludes the proof.
\end{proof}

\subsection{Lower bound on sample complexity: Proof of Theorem~\ref{thm:lower_bound}}
\begin{reptheorem}{thm:lower_bound}[Sample complexity lower bound for $\epsilon$-DP-FC-BAI]
    Let $\delta \in (0,1) $ and $\epsilon>0$. For any $\delta$-correct $\epsilon$-global DP BAI strategy, we have that
\begin{align*}
\mathbb{E}_{\boldsymbol{\nu}}[\tau] &\geq T^{\star}\left(\boldsymbol{\nu} ; \epsilon\right) \log (1 / 3 \delta),
\end{align*}
where $\left (T^{\star}\left(\boldsymbol{\nu} ; \epsilon\right) \right )^{-1 } \defn \sup\limits_{\omega \in \Sigma_K} \inf\limits_{\boldsymbol{\lambda} \in \operatorname{Alt}(\boldsymbol{\nu})} \min  \left(\sum_{a=1}^K \omega_a \KL{\nu_a}{\lambda_a}, 6 \epsilon \sum_{a=1}^K \omega_a  \TV{\nu_a}{\lambda_a} \right)$.
\end{reptheorem}

\begin{proof}
Let $\pi$ be a $\delta$-correct $\epsilon$-global DP BAI strategy. Let $\boldsymbol{\nu}$ be a bandit instance and $\lambda \in \operatorname{Alt}(\boldsymbol{\nu})$.

Let $\expect$ denote the expectation under $\mathbb{P}_{\boldsymbol{\nu}, \pi}$, ie $\expect \defn \expect_{\boldsymbol{\nu}, \pi}$.

By Lemma~\ref{lem:chg_env_dp}, we have that
$6 \epsilon  \sum_{a = 1}^\arms \expect \left [N_a(\tau) \right] \TV{\nu_{a}}{\lambda_{a}}  \geq \mathrm{kl}(1-\delta, \delta).$

Lemma 1 from~\cite{kaufmann2016complexity} gives that $\sum_{a = 1}^\arms \expect \left [N_a(\tau) \right] \KL{\nu_{a}}{\lambda_{a}} \geq \mathrm{kl}(1-\delta, \delta).$

Since these two inequalities hold for all $\boldsymbol{\lambda} \in \operatorname{Alt}(\boldsymbol{\nu})$, we get

\begin{align*}
    \mathrm{kl}(1- &\delta, \delta)  \leq \inf _{\boldsymbol{\lambda} \in \operatorname{Alt}(\boldsymbol{\nu})} \min \left(6 \epsilon  \sum_{a = 1}^\arms \expect \left [N_a(\tau) \right] \TV{\nu_{a}}{\lambda_{a}} ,\sum_{a = 1}^\arms \expect \left [N_a(\tau) \right] \KL{\nu_{a}}{\lambda_{a}}\right) \\
    & \stackrel{(a)}{=} \expect[\tau] \inf _{\boldsymbol{\lambda} \in \operatorname{Alt}(\boldsymbol{\nu})} \min \left(6 \epsilon \sum_{a=1}^K \frac{\expect\left[N_a(\tau)\right]}{\expect[\tau]} \TV{\nu_{a}}{\lambda_{a}}, \sum_{a=1}^K \frac{\expect\left[N_a(\tau)\right]}{\expect[\tau]} \KL{\nu_{a}}{\lambda_{a}}\right) \\
    & \stackrel{(b)}{\leq} \expect[\tau]\left(\sup _{\omega \in \Sigma_K} \inf _{\boldsymbol{\lambda} \in \operatorname{Alt}(\boldsymbol{\nu})} \min \left( 6 \epsilon \sum_{a=1}^K \omega_a  \TV{\nu_a}{\lambda_a}, \sum_{a=1}^K \omega_a \KL{\nu_a}{\lambda_a} \right) \right) \: .
\end{align*}

(a) is due to the fact that $\expect[\tau]$ does not depend on $\boldsymbol{\lambda}$. (b) is obtained by noting that the vector $\left(\omega_a\right)_{a \in[K]} \triangleq \left(\frac{\mathbb{E}_{\nu, \pi}\left[N_a(\tau)\right]}{\mathbb{E}_{\nu, \pi}[\tau]}\right)_{a \in[K]}$ belongs to the simplex $\Sigma_\arms$. 

The theorem follows by noting that for $\delta \in(0,1), \mathrm{kl}(1-\delta, \delta) \geq \log (1 / 3 \delta)$.
\end{proof}

\subsection{TV characteristic time for Bernoulli instances: Proof of Proposition~\ref{prop:tv}}
\label{app:ssec_proof_prop_tv}
\begin{repproposition}{prop:tv}[TV characteristic time for Bernoulli instances] Let $\nu$ be a bandit instance, i.e. such that $\nu_a = \text{Bernoulli}(\mu_a)$ and $\mu_1 > \mu_2 \geq \dots \geq \mu_\arms$. Let $\Delta_a \defn \mu_1 - \mu_a$ and $\Delta_\text{min} \defn \min_{a \neq 1} \Delta_a $. We have that \vspace*{-.5em}
\begin{align*}
    T^{\star}_{\mathrm{TV}}(\boldsymbol{\nu}) = \frac{1}{\Delta_\text{min}} + \sum_{a=2}^\arms \frac{1}{\Delta_a}, &&\text{and}  &&\frac{1}{\Delta_{\text{min}}} \leq T^{\star}_{\mathrm{TV}}(\boldsymbol{\nu}) \leq  \frac{\arms}{\Delta_{\text{min}}}.
\end{align*}
\end{repproposition}

\begin{proof}
\textbf{Step 1:}    Let $\nu$ be a bandit instance, i.e. such that $\nu_a \defn \text{Bernoulli}(\mu_a)$ and $\mu_1 > \mu_2 \geq \dots \geq \mu_\arms$.

    For the alternative bandit instance $\boldsymbol{\lambda}$, we refer to the mean of arm $a$ as $\rho_a$, i.e. $\lambda_a \defn \text{Bernoulli}(\rho_a)$.
    
    By the definition of $T^{\star}_{\textbf{TV}} $, we have that
    \begin{align*}
        \left(T^{\star}_{\textbf{TV}}(\boldsymbol{\nu}) \right)^{-1} &= \sup _{\omega \in \Sigma_K} \inf _{\boldsymbol{\lambda} \in \operatorname{Alt}(\boldsymbol{\nu})}  \sum_{a=1}^K \omega_a \TV{\nu_a}{ \lambda_a}\\
        &\stackrel{(a)}{=} \sup _{\omega \in \Sigma_K} \min_{a \neq 1} \inf _{\boldsymbol{\lambda} : \rho_a > \rho_1} \omega_1 \left | \mu_1 - \rho_1 \right | + \omega_a \left | \mu_a - \rho_a  \right | \\
        &\stackrel{(b)}{=} \sup _{\omega \in \Sigma_K} \min_{a \neq 1} \min(\omega_1, \omega_a) \Delta_a \\
        &\stackrel{(c)}{=} \sup _{\omega \in \Sigma_K} \omega_1 \min_{a \neq 1} \min(1, \frac{\omega_a}{\omega_1}) \Delta_a \\
        &\stackrel{(d)}{=} \sup_{ (x_2, \dots, x_\arms) \in (\real ^+)^{\arms - 1} } \frac{\min_{a \neq 1} g_a(x_a)}{1 + x_2 + \dots + x_\arms} \: ,
    \end{align*}

where $g_a(x_a) \defn \min(1, x_a) \Delta_a$. 

Equality (a) is obtained due to the fact that $\operatorname{Alt}(\boldsymbol{\nu}) = \bigcup_{a \neq 1} \{ \boldsymbol{\lambda}: \rho_a > \rho_1 \}$, and for Bernoullis, $\TV{\nu_a}{\lambda_a} = | \mu_a - \rho_a |$.

Equality (b) is true, since $\inf _{\boldsymbol{\lambda} : \rho_a > \rho_1} \omega_1 \left | \mu_1 - \rho_1 \right | + \omega_a \left | \mu_a - \rho_a  \right | = \min(\omega_1, \omega_a) \Delta_a $.

Equality (c) holds true, since $\omega_1 \neq 1$ (if $\omega_1 = 0$, the value of the objective is 0).

Equality (d) is obtained by the change of variable $x_a \defn \frac{\omega_a}{\omega_1}$

\textbf{Step 2:} Let $(x_2, \dots, x_\arms) \in (\real ^+)^{\arms - 1}$. By the definition of $g_a$, we have that
\[g_a(x_a) \leq x_a \Delta_a \quad \text{and} \quad g_a(x_a) \leq \Delta_a.\]

This leads to the inequalities 
\[\min_{a \neq 1} g_a(x_a) \leq g_a(x_a) \leq x_a \Delta_a\quad \text{and} \quad \min_{a \neq 1} g_a(x_a) \leq \Delta_\text{min}.\]

Thus, 
\begin{align*}
    \left (\min_{a \neq 1} g_a(x_a) \right)\left ( \frac{1}{\Delta_\text{min}} + \sum_{a=2}^\arms \frac{1}{\Delta_a} \right) &= \frac{\min_{a \neq 1} g_a(x_a)}{\Delta_\text{min}} + \sum_{a=2}^\arms \frac{\min_{a \neq 1} g_a(x_a)}{\Delta_a}\\
    &\leq 1 + \sum_{a=2}^\arms x_a \: .
\end{align*}

This means that for every $(x_2, \dots, x_\arms) \in (\real ^+)^{\arms - 1}$,  
\begin{equation*}
    \frac{\min_{a \neq 1} g_a(x_a)}{1 + x_2 + \dots + x_\arms} \leq \frac{1}{\frac{1}{\Delta_\text{min}} + \sum_{a=2}^\arms \frac{1}{\Delta_a}}.
\end{equation*}

Here, the upper bound is achievable for $x^\star_a = \frac{\Delta_\text{min}}{\Delta_a}$, since $g_a(x^\star_a) = \Delta_\text{min} $ for all $a \neq 1$.

This concludes that 
\begin{align*}
    \left(T^{\star}_{\textbf{TV}}(\boldsymbol{\nu}) \right)^{-1} = \frac{1}{\frac{1}{\Delta_\text{min}} + \sum_{a=2}^\arms \frac{1}{\Delta_a}} &&\implies \left(T^{\star}_{\textbf{TV}}(\boldsymbol{\nu}) \right) = {\Delta_\text{min}} + \sum_{a=2}^\arms \frac{1}{\Delta_a} \: .
\end{align*}

\textbf{Step 3:} The lower and upper bounds on $\left(T^{\star}_{\textbf{TV}}(\boldsymbol{\nu}) \right)$ follow from the fact that $\frac{1}{\Delta_a} \geq 0$  for all $a$, and $\frac{1}{\Delta_a} \leq \frac{1}{\Delta_{\min}}$ for all $a \neq 1$.

Hence, we conclude the proof.
\end{proof}

\subsection{On the total variation distance and the hardness of privacy}

Our lower bound suggests that the hardness of the DP-FC-BAI problem is characterized by $T^\star_{TV}$, which is a total variation counterpart of the classic KL-based characteristic time $T^\star_{KL}$ in FC-BAI~\cite{garivier2016optimal}. The total variation distance appears to be the natural measure to quantify the hardness of privacy in other settings such as regret minimization~\cite{azize2022privacy}, Karwa-Vadhan lemma~\cite{KarwaVadhan} and Differentially Private Assouad, Fano, and Le Cam~\cite{acharya2021differentially}. The high-level intuition is that: Pure DP can be seen as a multiplicative stability constraint of $e^\epsilon$ when one data point changes. With group privacy, if two datasets differ in $d_{ham}$ points, then one incurs a factor $e^ {d_{ham}~\epsilon}$. Now, by sampling $n$ i.i.d points from a distribution $P$ and $n$ i.i.d points from a distribution $Q$, the Karwa-Vadhan lemma states that the incurred factor is $e^{(nTV(P,Q)) ~ \epsilon}$. This is proved by building a maximal coupling, which is the coupling that minimizes the Hamming distance in expectation. In brief, \textit{the total variation naturally appears in lower bounds since it is the quantity that characterises the hardness of the optimal transport problem minimizing the hamming distance}, i.e $TV(P, Q) = \inf_{(X,Y)\sim (P,Q)} E(1_{X \neq Y})$. However, it is possible that the problem can be characterized by other f-divergences.  Finally, one can always go from TV to KL using Pinsker's inequality, though that would always be less tight than the TV-based lower bound. 

\noindent \textbf{On the relation between $T^\star_{TV}$ and $T^\star_{KL}$} A direct application of Pinsker's inequality gives that $T^\star_{TV}(\nu) \geq \sqrt{2 T^\star_{KL}(\nu)}$. For completeness, we present here the exact calculations:

For every alternative mean parameter $\lambda$ and every arm $a$, using Pinkser's inequality, we have that $d_{TV} ( \mu_a, \lambda_a) \leq \sqrt{\frac{1}{2} d_{KL} ( \mu_a, \lambda_a)}$.
Therefore, for every allocation over arms $\omega$, we have
\[
\sum_a \omega_a d_{TV} ( \mu_a, \lambda_a) \leq \sum_a \omega_a \sqrt{\frac{1}{2} d_{KL} ( \mu_a, \lambda_a)} \leq \sqrt{\frac{1}{2}\sum_a \omega_a d_{KL} ( \mu_a, \lambda_a)} \: .
\]
Taking the supremum over the simplex and the infimum over the set of alternative mean parameters yields $(T^\star_{TV}(\nu))^{-1} \leq \sqrt{\frac{1}{2} (T^\star_{KL}(\nu))^{-1} }$.
This concludes the proof.






\section{Privacy analysis}\label{sec:privacy_proof}
In this section, we prove that \adaptt{} satisfies $\epsilon$-DP. We first provide the privacy lemma that justifies using doubling and forgetting in \adaptt{}. Using the privacy lemma and the post-processing property of DP, we conclude the privacy analysis of \adaptt{}.


\subsection{Privacy lemma for non-overlapping sequences}

\begin{lemma}[Privacy of non-overlapping sequence of empirical means]\label{lem:privacy}
Let $\mathcal M$ be a mechanism that takes a \textbf{set} as input and outputs the private empirical mean, i.e. 

\begin{align}\label{eq:priv_mean}
    \mathcal{M}(\{ r_i, \dots, r_j \}) \defn \frac{1}{j -i} \sum_{t=i}^{j} r_t + Lap\left ( \frac{1}{(j-i) \epsilon} \right ).
\end{align}

Let $\ell < \horizon$ and $t_1, \ldots t_{\ell}, t_{\ell + 1}$ be in $[1, \horizon]$ such that $1 = t_1  < \cdots < t_\ell < t_{\ell+1}  - 1 = T$.\\
Let's define the following mechanism
\begin{align}\label{eq:priv_mean_non_over}
    \mathcal G : \{ r_1, \dots, r_\horizon \} \rightarrow \bigotimes_{i=1}^\ell \mathcal{M}_{ \{r_{t_i}, \ldots, r_{t_{i + 1} - 1} \}}
\end{align}

In other words, $\mathcal{G}$ is the mechanism we get by applying $\mathcal{M}$ to the non-overlapping partition of the sequence $\{r_1, \dots, r_\horizon\}$ according to $t_1  < \cdots < t_\ell < t_{\ell+1}$, i.e.
\begin{align*}
    \begin{pmatrix}
r_1\\ 
r_2\\ 
\vdots \\
r_T
\end{pmatrix} \overset{\mathcal{G}}{\rightarrow}  \begin{pmatrix}
\mu_1\\ 
\vdots \\
\mu_\ell
\end{pmatrix}
\end{align*}
where $\mu_i \sim \mathcal{M}_{ \{r_{t_i}, \ldots, r_{t_{i + 1} - 1} \}}$.

For $r_t \in [0,1]$, the mechanism $\mathcal{G}$ is $\epsilon$-DP.
\end{lemma}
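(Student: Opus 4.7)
The plan is to exploit the fact that the partition $\{[t_i, t_{i+1}-1]\}_{i=1}^{\ell}$ is non-overlapping, so a single-entry change in the input affects exactly one of the $\ell$ sub-mechanisms. This reduces the privacy analysis to a single application of the Laplace mechanism, avoiding any composition cost.

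First, I would fix two neighbouring datasets $D = \{r_1, \dots, r_T\}$ and $D' = \{r'_1, \dots, r'_T\}$ with $d_{\text{Ham}}(D, D') = 1$, and let $t^{\star} \in [1, T]$ be the unique index where they differ. Since $t_1 < \dots < t_{\ell+1}$ and $t_1 = 1$, $t_{\ell+1} - 1 = T$, there is a unique $i^{\star} \in [\ell]$ such that $t_{i^{\star}} \le t^{\star} \le t_{i^{\star}+1} - 1$. For every $i \ne i^{\star}$, the input to the $i$-th sub-mechanism is identical under $D$ and $D'$, so the marginal distribution of the $i$-th coordinate of $\mathcal{G}(D)$ and $\mathcal{G}(D')$ coincide.

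Next, for the affected chunk of length $n_{i^{\star}} \defn t_{i^{\star}+1} - t_{i^{\star}}$, I would bound the $L_1$ sensitivity of the empirical mean $\bar r = n_{i^{\star}}^{-1} \sum_{t = t_{i^{\star}}}^{t_{i^{\star}+1} - 1} r_t$. Since rewards lie in $[0,1]$, a change in a single $r_{t^{\star}}$ changes $\bar r$ by at most $1/n_{i^{\star}}$. By Theorem~\ref{thm:laplace}, adding $\mathrm{Lap}(1/(n_{i^{\star}}\epsilon))$ noise renders the $i^{\star}$-th output $\epsilon$-DP with respect to $D, D'$.

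Finally, since the Laplace noises across the $\ell$ sub-mechanisms are independent, the joint density of $\mathcal{G}(D)$ factorises as a product over chunks. For any measurable set $\mathcal{O} = \mathcal{O}_1 \times \dots \times \mathcal{O}_\ell$, this gives
\begin{equation*}
    \Pr[\mathcal{G}(D) \in \mathcal{O}] = \prod_{i=1}^{\ell} \Pr[\mathcal{M}_i(D) \in \mathcal{O}_i] \le e^{\epsilon} \prod_{i=1}^{\ell} \Pr[\mathcal{M}_i(D') \in \mathcal{O}_i] = e^{\epsilon}\Pr[\mathcal{G}(D') \in \mathcal{O}],
\end{equation*}
where only the $i^{\star}$-th factor contributes the $e^{\epsilon}$ factor and all others contribute $1$. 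Extending from product sets to arbitrary measurable sets by a standard Carathéodory/density argument concludes that $\mathcal{G}$ is $\epsilon$-DP. The argument is essentially the parallel composition property of DP, and I do not anticipate a significant technical obstacle; the only care needed is to verify that the partition really is non-overlapping (ensured by the strict inequalities on $t_i$) so that a single-entry change propagates to at most one sub-mechanism.
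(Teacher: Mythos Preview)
Your proposal is correct and follows essentially the same approach as the paper: identify the unique chunk $i^\star$ containing the differing index, observe that all other sub-mechanisms receive identical inputs, and bound the single affected density ratio by $e^{\epsilon}$ via the Laplace mechanism (Theorem~\ref{thm:laplace}). The paper presents this slightly more tersely by working directly with the pointwise density ratio rather than product sets followed by an extension, but the underlying parallel-composition argument is identical.
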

\begin{proof}
Let $r^\horizon \defn (r_1, \dots, r_\horizon)$ and $r'^\horizon \defn (r_1', \dots, r_\horizon')$ be two neighbouring reward sequences in [0,1]. This implies that $\exists j \in [1, \horizon]$ such that $r_j \neq r_j'$ and $\forall t \neq j$, $r_t = r_t'$. 

Let $\episode'$ be such that $t_{\episode'} \leq j \leq t_{\episode'+1}-1$, and follows the convention that $t_0 = 1$ and $t_{\episode + 1} = T + 1$.

Let $\mu \defn (\mu_1, \dots, \mu_\episode)$ a fixed sequence of outcomes. Then,
\begin{align*}
    \frac{\mathbb P ( \mathcal{G} (r^\horizon) = \mu  )}{\mathbb P ( \mathcal{G} (r'^\horizon) = \mu  )} = \frac{\mathbb P \left( \mathcal{M}(\{ r_{{t_{\ell'}}}, \dots, r_{t_{\ell'+1} - 1} \}) = \mu_{\ell'}  \right)}{\mathbb P \left( \mathcal{M}(\{ r_{{t_{\ell'}}}, \dots, r_{t_{\ell'+1} - 1} \}) = \mu_{\ell'}  \right)} \leq e^\epsilon,
\end{align*}
where the last inequality holds true because $\mathcal{M}$ satisfies $\epsilon$-DP following Theorem~\ref{thm:laplace}.
\end{proof}

\subsection{Privacy analysis of \adaptt{}: Proof of Theorem~\ref{thm:privacy}}

\begin{reptheorem}{thm:privacy}[Privacy analysis]
For rewards in $[0,1]$, \adaptt{} satisfies $\epsilon$-global DP.
\end{reptheorem}

\begin{proof}
Let $\horizon \geq 1$. Let $\underline{\textbf{d}}^\horizon = \{\textbf{x}_1, \dots, \textbf{x}_\horizon\}$ and $\underline{\textbf{d}'}^\horizon = \{\textbf{d}'_1, \dots, \textbf{d}'_\horizon\}$ two neighbouring reward tables in $(\real^\arms)^\horizon$. Let $j \in [1, \horizon]$ such that, for all $t \neq j$, $d_t = d'_t$.

We also fix a sequence of sampled actions $\underline{a}^T = \{a_1, \dots, a_T\} \in [\arms]^\horizon$ and a recommended action $\hat{a}\in \arms$.

We refer to \adaptt{} BAI strategy by $\pi$.

We want to show that:
$\pi(\underline{a}^\horizon, \widehat{a}, \horizon \mid \underline{\textbf{d}}^\horizon)  \leq e^\epsilon \pi(\underline{a}^\horizon, \widehat{a}, \horizon \mid \underline{\textbf{d}'}^\horizon)$.

The main idea is that the change of reward in the $j$-th reward only affects the empirical mean computed in one episode, which is made private using the Laplace Mechanism and Lemma \ref{lem:privacy}.

\noindent \textbf{Step 1.} \underline{Sequential decomposition of the output probability}

    We observe that due to the sequential nature of the interaction, the output probability can be decomposed to a part that depends on $\underline{\textbf{d}}^{j - 1} \defn \{\textbf{x}_1, \dots, \textbf{x}_{j - 1}\}$, which is identical for both $\underline{\textbf{d}}^\horizon$ and $\underline{\textbf{d}'}^\horizon$ and a second conditional part on the history.


    Specifically, we have that
    \begin{align*}
        \pi(\underline{a}^\horizon, \widehat{a}, \horizon \mid \underline{\textbf{d}}^\horizon) &\defn \operatorname{Rec}_{T+1}\left(\widehat{a} \mid \mathcal{H}_{\horizon}\right) \mathrm{S}_{T+1}\left(\top \mid \mathcal{H}_{\horizon} \right) \prod_{t=1}^T \mathrm{~S}_t\left(a_t \mid \mathcal{H}_{t - 1}\right)\\
        &\defn \mathcal{P}^{\pol}_{ \underline{\textbf{d}}^{j - 1} } (\underline{a}^j) \mathcal{P}^{\pol}_{\underline{\textbf{d}} }(a_{> j}, \hat{a}, T \mid \underline{a}^j) 
    \end{align*}
    where 
    \begin{itemize}
        \item $ a_{> j} \defn (a_{j + 1}, \dots, a_\horizon)$
        \item $\mathcal{P}^{\pol}_{ \underline{\textbf{d}}^{j - 1} } (\underline{a}^j) \defn \prod_{t = 1}^j \mathrm{~S}_t\left(a_t \mid \mathcal{H}_{t - 1}\right) $
        \item $\mathcal{P}^{\pol}_{\underline{\textbf{d}} }(a_{> j}, \hat{a}, T \mid \underline{a}^j)  \defn \operatorname{Rec}_{T+1}\left(\widehat{a} \mid \mathcal{H}_{\horizon}\right) \mathrm{S}_{T+1}\left(\top \mid \mathcal{H}_{\horizon} \right) \prod_{t = j+1}^\horizon \mathrm{~S}_t\left(a_t \mid \mathcal{H}_{t - 1}\right) $
    \end{itemize}

    Similarly
    \begin{equation*}
        \pi(\underline{a}^\horizon, \widehat{a}, \horizon \mid \underline{\textbf{d}'}^\horizon) \defn \mathcal{P}^{\pol}_{ \underline{\textbf{d}}^{j - 1} } (\underline{a}^j) \mathcal{P}^{\pol}_{\underline{\textbf{d}'} }(a_{> j}, \hat{a}, T \mid \underline{a}^j) 
    \end{equation*}
    since $\underline{\textbf{d}'}^{j - 1} = \underline{\textbf{d}}^{j - 1}$.

    Which means that
    \begin{equation}\label{eq:deco_frac}
        \frac{\pi(\underline{a}^\horizon, \widehat{a}, \horizon \mid \underline{\textbf{d}}^\horizon)}{\pi(\underline{a}^\horizon, \widehat{a}, \horizon \mid \underline{\textbf{d}'}^\horizon)} = \frac{\mathcal{P}^{\pol}_{\underline{\textbf{d}} }(a_{> j}, \hat{a}, T \mid \underline{a}^j)}{\mathcal{P}^{\pol}_{\underline{\textbf{d}'} }(a_{> j}, \hat{a}, T \mid \underline{a}^j)}
    \end{equation}

    

    \noindent \textbf{Step 2.} \underline{The adaptive episodes are the same, before step $j$}

    Let $\ell$ such that $t_\ell \leq j < t_{\ell + 1}$ when $\pi$ interacts with $\underline{\textbf{d}}^\horizon$. Let us call it $\psi^\pol_{\underline{\textbf{d}}^\horizon}(j) \defn \ell$.\\ Similarly, let $\ell'$ such that $t_{\ell'} \leq j < t_{\ell' + 1}$ when $\pi$ interacts with $\underline{\textbf{d}'}^\horizon$. Let us call it $\psi^\pol_{\underline{\textbf{d}'}^\horizon}(j) \defn \ell'$.

    Since $\psi^\pol_{\underline{\textbf{d}}^\horizon}(j)$ only depends on $\underline{\textbf{d}}^{j - 1}$, which is identical for $\underline{\textbf{d}}^\horizon$ and $\underline{\textbf{d}'}^\horizon$, we have that $\psi^\pol_{\underline{\textbf{d}}^\horizon}(j) = \psi^\pol_{\underline{\textbf{d}'}^\horizon}(j)$ with probability $1$.

    We call $\xi_j$ the last \textbf{time-step} of the episode $\psi^\pol_{\underline{\textbf{d}}^\horizon}(j)$, i.e $\xi_j \defn t_{\psi^\pol_{\underline{\textbf{d}}^\horizon}(j) + 1} -1$.


    \noindent \textbf{Step 3.} \underline{Private sufficient statistics}
    
    Let $r_t \defn \underline{\textbf{d}}^\horizon_{t, a_t}$, be the reward corresponding to the action $a_t$ in the table $\underline{\textbf{d}}^\horizon$. Similarly, $r'_t \defn \underline{\textbf{d}'}^\horizon_{t, a_t}$ for $\underline{\textbf{d}'}^\horizon$.
    
    Let us define $L_j \defn \mathcal{G}_{\{r_1, \dots, r_{\xi_j} \} }$ and $L'_j \defn \mathcal{G}_{\{r'_1, \dots, r'_{\xi_j} \} }$, where $\mathcal{G}$ is defined as in Eq.~\ref{eq:priv_mean_non_over}, using the same episodes for $d$ and $d'$. In other words, $L_j$ is the list of private empirical means computed on a non-overlapping sequence of rewards before step $\xi_j$.
    
    Using the forgetting structure of \adaptt{}, there exists a randomised mapping $f_{\underline{\textbf{d}}_{> \xi_j}}$ such that $\mathcal{P}^{\pol}_{\underline{\textbf{d}} }(. \mid \underline{a}^j) = f_{\underline{\textbf{d}}_{> \xi_j}} \circ   L_j $ and $\mathcal{P}^{\pol}_{\underline{\textbf{d}'} }(. \mid \underline{a}^j) = f_{\underline{\textbf{d}}_{> \xi_j}} \circ   L_j' $.

    In other words, the interaction of $\pi$ with $\underline{\textbf{d}}$ and $\underline{\textbf{d}'}$ from step $\xi_j + 1$ until $\horizon$ only depends on the sufficient statistics $L_j$, which summarises what happened before $\xi_j$, and the new inputs $\underline{\textbf{d}}_{> \xi_j}$, which are the same for $\underline{\textbf{d}}$ and $\underline{\textbf{d}'}$.


   \noindent \textbf{Step 4.} \underline{Concluding with Lemma~\ref{lem:privacy} and the post-processing lemma}
   
   Since rewards are in $[0,1]$, using Lemma~\ref{lem:privacy}, we have that $\mathcal{G}$ is $\epsilon$-DP.

    Since $\mathcal{P}^{\pol}_{\underline{\textbf{d}} }(. \mid \underline{a}^j)$ is just a post-processing of the output of $\mathcal{G}$, we have that
   \begin{equation*}
       \frac{\mathcal{P}^{\pol}_{\underline{\textbf{d}} }(a_{> j}, \hat{a}, T \mid \underline{a}^j)}{\mathcal{P}^{\pol}_{\underline{\textbf{d}'} }(a_{> j}, \hat{a}, T \mid \underline{a}^j)} \leq e^\epsilon \: ,
   \end{equation*}
   and Eq.~\eqref{eq:deco_frac} concludes the proof.
\end{proof}

\section{\adaptt{}: A private Top Two algorithm with adaptive episodes}\label{app:private_top_two}

We propose a generic wrapper to adapt existing BAI algorithms to tackle private BAI (Appendix~\ref{app:ssec_wrapper_on_BAI_algorithms}).
Then, we show how to instantiate our wrapper with an instance of Top Two algorithm (Appendix~\ref{app:ssec_instantiating_with_Top_Two_sampling_rule}), namely TTUCB~\cite{jourdan2022non}.

\subsection{Generic wrapper on existing BAI algorithms}
\label{app:ssec_wrapper_on_BAI_algorithms}

We propose a generic wrapper to adapt existing BAI algorithms to tackle the $\epsilon$-DP-FC-BAI problem.
\begin{enumerate}
\item It uses \textit{\underline{adaptive episodes with doubling and forgetting}} (Appendix~\ref{app:sssec_adaptive_doubling_forgetting}).
This builds on the idea used to make the UCB algorithm private for regret minimisation~\cite{azize2022privacy}.
\item It relies on a \textit{\underline{private GLR (generalised likelihood ratio) stopping rule}} (Appendix~\ref{app:sssec_GLR_stopping_rule}).
The GLR stopping rule~\cite{garivier2016optimal} is widely used in the BAI literature, since it ensures $\delta$-correctness of the stopping rule regardless of the sampling rule.
\end{enumerate}

For each arm $a \in [K]$, we maintain a phase at time $n$ whose index will be denoted by $k_{n, a} \in \N$.
We switch phase as soon as the number of times that the arm was played is doubled.
We only evaluate the stopping condition when we switch phase for an arm.

\subsubsection{Adaptive episodes with doubling and forgetting}
\label{app:sssec_adaptive_doubling_forgetting}

As initialisation, we start by pulling each arm once, and set $k_{n,a}=1$, $T_{1}(a) = \arms+1$ and $N_{n, a} = 1$ for all $a \in [K]$.
In the following, we will consider $n > K$ and we denote the \emph{global pulling count} of arm $a$ before time $n$ by $N_{n,a} \defn \sum_{t \in [n-1]} \indi{I_t = a}$.
For each arm $a \in [K]$, the random stopping time denoting the end of $k_{a}-1$ and the beginning of phase $k_{a} > 1$ is denoted by
\begin{equation} \label{eq:phase_switch_times}
		T_{k_{a}}(a) \defn \inf \left\{ n \in \N \mid N_{n,a} \ge 2 N_{T_{k_{a}-1}(a), n} \right\} \: .
\end{equation}
At the beginning of phase $k_{a}$ for an arm $a \in [K]$, we update the empirical mean based on the observations on arm $a \in [K]$ collected during this last phase, i.e.
\begin{equation} \label{eq:non_private_estimator_mean}
	\hat \mu_{k_{a},a} \defn \frac{1}{\tilde N_{k_{a},a}} \sum_{s = T_{k_{a}-1}(a)}^{T_{k_{a}}(a) - 1} X_{s} \ind{I_s = a} \: ,
\end{equation}
where the \emph{local pulling count} pf arm $a$ is denoted by $\tilde N_{k_{a},a} \defn N_{T_{k_{a}}(a),a} -  N_{T_{k_{a}-1}(a),a}$, meaning it is the number of collected samples during the phase $k_{a} - 1$.
To ensure privacy, we add a Laplace noise to define the private empirical mean, i.e.
\begin{equation} \label{eq:private_estimator_mean}
	\tilde \mu_{k_{a},a} \defn \hat \mu_{k_{a},a} + Y_{k_{a},a} \quad \text{where} \quad Y_{k_{a},a} \sim \text{Lap}\left(\frac{1}{\epsilon\tilde N_{k_{a},a}} \right) \: .
\end{equation}
We emphasise that only the private version the estimator $\hat \mu_{k_{a},a}$, i.e. $\tilde \mu_{k_{a},a}$, is used by the algorithm until the end of phase $k_{a}$ for arm $a$.
Since $(k_{n,a})_{a \in [K]}$ denotes the current phases at time $n$, our algorithm relies on $(\hat \mu_{k_{n,a},a}, \tilde \mu_{k_{a},a}, \tilde N_{k_{n,a}, a}, N_{n,a})_{a \in [K]}$.

Due to the doubling, the growth of the global and local pulling counts is exponential (Lemma~\ref{lem:counts_at_phase_switch}).
\begin{lemma} \label{lem:counts_at_phase_switch}
	For all $a \in [K]$ and all $k \in \N$ such that $\bE_{\bnu}[T_{k}(a)] < +\infty$, we have
	\[
		N_{T_{k}(a), a} = 2^{k-1} \quad \text{and} \quad \tilde N_{k,a} = 2^{k-2} \: .
	\]
\end{lemma}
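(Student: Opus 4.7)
The plan is to prove both identities simultaneously by induction on $k$, with the key observation being that $N_{n,a}$ is an integer-valued counter that is nondecreasing in $n$ and increments by exactly one each time arm $a$ is pulled. Because of this unit increment, $N_{n,a}$ cannot jump over any integer value; in particular, the first time the counter reaches or exceeds a threshold, it reaches the threshold exactly.

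First I would establish the base case $k=1$. By the initialisation of Algorithm~\ref{algo:AdaPTT}, each arm is pulled once and $T_{1}(a) = K+1$ is set, so $N_{T_{1}(a),a} = 1 = 2^{1-1}$. For the inductive step, fix $k \ge 2$ and assume $N_{T_{k-1}(a),a} = 2^{k-2}$. By the definition of the phase-switch time in \eqref{eq:phase_switch_times}, $T_{k}(a)$ is the smallest $n \in \N$ such that $N_{n,a} \ge 2 N_{T_{k-1}(a),a} = 2^{k-1}$. Since $N_{n,a}$ increases by one whenever $I_{n} = a$ and is constant otherwise, and since it equals $2^{k-2}$ at $n = T_{k-1}(a)$, the counter passes through every integer between $2^{k-2}$ and $2^{k-1}$ before exceeding the threshold. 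Hence $N_{T_{k}(a),a} = 2^{k-1}$ exactly, as required.

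The second identity then follows by a direct telescoping using the definition of the local count,
\[
    \tilde N_{k,a} \;=\; N_{T_{k}(a),a} - N_{T_{k-1}(a),a} \;=\; 2^{k-1} - 2^{k-2} \;=\; 2^{k-2},
\]
which completes the induction. The assumption $\bE_{\bnu}[T_{k}(a)] < +\infty$ ensures that $T_{k}(a)$ is almost-surely finite, so each step of the induction is well-defined and the equalities hold on a full-measure event. There is no real obstacle: the argument is purely combinatorial once one notices the unit-increment property of the global pulling counter, which turns the ``$\ge$'' in the phase-switch condition into an equality at the switching time.
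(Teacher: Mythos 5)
Your proof is correct and follows essentially the same route as the paper's: an induction on $k$, with the observation that the pulling counter increments by exactly one per round (what the paper phrases as "we sample only one arm at each round") converting the ``$\ge$'' in the definition of the phase-switch time into an equality at the first hitting time, after which the local count is obtained by subtraction. You make the unit-increment reasoning slightly more explicit than the paper, but there is no substantive difference.
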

\begin{proof}
	Let $a \in [K]$.
	After initialisation, we have $k = 1$, $T_{1}(a) = \arms+1$ and $N_{T_{1}(a),a} =1$.
	Using the definition of the adaptive phase switch (Equation~\eqref{eq:phase_switch_times}), it is direct to see that $N_{T_{2}(a), a} = 2$ and $\tilde N_{2,a} = 1$ when $\bE_{\bnu}[T_{2}(a)] < +\infty$.

	Now, we proceed by recurrence.
	Suppose that $N_{T_{k}(a), a} = 2^{k-1}$ and $\tilde N_{k,a} = 2^{k-2}$ when $\bE_{\bnu}[T_{k}(a)] < +\infty$.
	If $\bE_{\bnu}[T_{k+1}(a)] < +\infty$, then it means that the phase $k$ ends for arm $a$ almost surely.
	Since we sample only one arm at each round, at the beginning of phase $k+1$ for arm $a$, we have $N_{T_{k+1}(a), a} = 2 N_{T_{k}(a), a} = 2^{k}$ by using the definition of the adaptive phase switch~\eqref{eq:phase_switch_times}.
	Then, we have directly that	$\tilde N_{k+1,a} = N_{T_{k+1}(a),a} -  N_{T_{k}(a),a} = 2^{k} - 2^{k-1} = 2^{k-1}$.
\end{proof}

\subsubsection{GLR stopping rule}
\label{app:sssec_GLR_stopping_rule}

\paragraph{Non-private GLR stopping rule with phases.} Given a set of non-private threshold functions $(c_{k})_{k \in \N}$, such that $c_{k} : \N \times \N \times (0,1) \to \R_{+}$ for all $k \in \N$, the non-private GLR stopping rule can be evaluated at the beginning of each phase for each arm, namely
\begin{align} \label{eq:non_private_GLR_stopping_rule}
	\tau^{\text{NP}}_{\delta} &= \inf \left\{ n \in \N \mid  \forall b \neq \hat a_{n}^{\text{NP}}, \: \frac{(\hat \mu_{k_{n,\hat a_{n}^{\text{NP}}},\hat a_{n}^{\text{NP}}}  - \hat \mu_{k_{n,b}, b})^2}{1/\tilde N_{k_{n,\hat a_{n}^{\text{NP}}},\hat a_{n}^{\text{NP}}}+ 1/\tilde N_{k_{n,b},b}} \ge 2c_{k_{n,\hat a_{n}^{\text{NP}}}k_{n,b}}(\tilde N_{k_{n,\hat a_{n}^{\text{NP}}},\hat a_{n}^{\text{NP}}}, \tilde N_{k_{n,b},b},\delta) \right\} \: , 
\end{align}
where $\hat a_{n}^{\text{NP}} = \argmax_{a \in [K]} \hat \mu_{k_{n,a},a}$ is the non-private candidate answer until we switch phase again (for any arm).
We emphasise that this \textit{stopping condition is only evaluated at the beginning of each phase for each arm since it involves quantities that are fixed until we switch phase again}.

Lemma~\ref{lem:delta_correct_threshold_without_privacy} gives non-private threshold functions ensuring that the non-private GLR stopping rule is $\delta$-correct for all $\delta \in (0,1)$, independently of the sampling rule.
\begin{lemma} \label{lem:delta_correct_threshold_without_privacy}
	Let $\delta \in (0,1)$.
	Let $s > 1$ and $\zeta$ be the Riemann $\zeta$ function.
	Given any sampling rule, the non-private GLR stopping rule (Equation~\eqref{eq:non_private_GLR_stopping_rule}) with non-private threshold functions
	\begin{equation} \label{eq:non_private_stopping_threshold}
		c_{k}(n, m,\delta) = 2 \cC_{G} \left( \frac{1}{2}\log\left(\frac{(K-1)\zeta (s)^2k^s}{\delta} \right)\right) + 2 \log(4 + \log n ) + 2 \log(4 + \log m)
	\end{equation}
	ensures $\delta$-correctness for $1$-sub-Gaussian distributions. The function $\cC_{G}$ is defined in Equation~\eqref{eq:def_C_gaussian_KK18Mixtures}. It satisfies $\cC_{G} \approx x + \log x$.
\end{lemma}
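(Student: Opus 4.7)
The plan is to bound $\bP_{\bnu}(\tau^{\text{NP}}_{\delta} < +\infty, \hat a \neq a^\star)$ by $\delta$ via a union bound combined with a plain Chernoff bound on the difference of two independent sub-Gaussian empirical means. First, I would decompose over possible wrong answers:
\[
\bP_{\bnu}(\tau^{\text{NP}}_{\delta} < +\infty, \hat a \neq a^\star) \leq \sum_{b \neq a^\star} \bP_{\bnu}(\tau^{\text{NP}}_{\delta} < +\infty, \hat a_{\tau} = b) \: .
\]
On $\{\tau^{\text{NP}}_{\delta} < +\infty, \hat a_{\tau} = b\}$, the stopping rule~\eqref{eq:non_private_GLR_stopping_rule} evaluated with the empirical leader $b = \hat a_\tau$ against the (correct) challenger $a^\star$, combined with $\hat \mu_{k_{\tau,b},b} \geq \hat \mu_{k_{\tau, a^\star},a^\star}$ and $\mu_{a^\star} > \mu_b$, forces
\[
(\hat \mu_{k_{\tau,b},b} - \mu_b) - (\hat \mu_{k_{\tau,a^\star},a^\star} - \mu_{a^\star}) \geq \sqrt{2 c_{k_{\tau,b} k_{\tau,a^\star}}(\tilde N_{k_{\tau,b},b}, \tilde N_{k_{\tau,a^\star},a^\star},\delta)\bigl(\tilde N_{k_{\tau,b},b}^{-1} + \tilde N_{k_{\tau,a^\star},a^\star}^{-1}\bigr)} \: .
\]

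Next, I would couple the protocol with, for each arm $a$, an underlying i.i.d.\ sequence $(X_{a,i})_{i\geq 1} \sim \nu_a$, independent across arms, such that the $i$-th pull of arm $a$ returns $X_{a,i}$. Thanks to the forgetting rule combined with Lemma~\ref{lem:counts_at_phase_switch}, $\hat \mu_{k,a} = 2^{2-k}\sum_{i = 2^{k-2}+1}^{2^{k-1}} X_{a, i}$ for $k \geq 2$ is a deterministic function of this coupling, well defined whether or not the algorithm ever reaches phase $k$ for arm $a$. Since distinct $k$'s use disjoint index blocks and distinct arms are independent, the family $\{\hat \mu_{k,a}\}_{k \geq 1, a \in [K]}$ is mutually independent. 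In particular, for any fixed $(k_1,k_2) \in \N^2$, the centred difference $W_{k_1,k_2} := (\hat \mu_{k_1, b} - \mu_b) - (\hat \mu_{k_2, a^\star} - \mu_{a^\star})$ is sub-Gaussian with variance proxy $\tilde N_{k_1,b}^{-1} + \tilde N_{k_2,a^\star}^{-1}$ whenever the samples are $1$-sub-Gaussian.

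A union bound over the possible realisations $(k_{\tau,b}, k_{\tau, a^\star}) = (k_1, k_2)$ together with the sub-Gaussian Chernoff bound then yield
\[
\bP_{\bnu}(\tau^{\text{NP}}_{\delta} < +\infty, \hat a_{\tau} = b) \leq \sum_{k_1,k_2 \geq 1} \exp\bigl(- c_{k_1 k_2}(\tilde N_{k_1,b}, \tilde N_{k_2,a^\star},\delta)\bigr) \: .
\]
Using that $\cC_{G}(x) \geq x$ and the non-negativity of the $\log(4 + \log \cdot)$ corrections in~\eqref{eq:non_private_stopping_threshold}, each term is at most $\delta / \bigl((K-1)\zeta(s)^2 (k_1 k_2)^s\bigr)$. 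Summing via $\sum_{k \geq 1} k^{-s} = \zeta(s)$ twice and over $b \neq a^\star$ yields the desired bound $\delta$, which also explains why the threshold is indexed by the product $k_1 k_2$.

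The main subtlety, and the reason the doubling-and-forgetting design is adopted, is to justify treating each $\hat \mu_{k,a}$ as the mean of a deterministic number of i.i.d.\ samples despite the fact that the phase-switch times $T_k(a)$ are data-dependent stopping times. Once the coupling with a fixed per-arm i.i.d.\ sequence is set up, each $\hat \mu_{k,a}$ becomes an empirical mean over a deterministic block of size $2^{k-2}$, so no martingale or time-uniform mixture argument is needed inside the $(k_1,k_2)$ union bound. The $\cC_{G}$ and $\log(4 + \log \cdot)$ factors carried from the Kaufmann-Koolen framework are strictly more than required by the above Chernoff argument; they furnish the slack reused, essentially unchanged, in the proof of the private threshold in Theorem~\ref{thm:delta_correct_private_threshold}.
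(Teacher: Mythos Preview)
Your proof is correct and takes a genuinely more elementary route than the paper's. Both arguments share the outer structure: decompose over wrong recommendations $b \neq a^\star$, specialise the GLR inequality to the challenger $a^\star$, and union-bound over all phase pairs $(k_1,k_2)$. The difference is in how the per-$(k_1,k_2)$ probability is controlled. The paper invokes the time-uniform concentration of Kaufmann--Koolen (Lemma~\ref{lem:theorem_9_KK18Mixtures}), and the threshold~\eqref{eq:non_private_stopping_threshold} is precisely engineered to match that bound: the $\cC_G$ term and the $\log(4+\log\cdot)$ corrections come directly from there. You instead exploit Lemma~\ref{lem:counts_at_phase_switch} to observe that each $\hat\mu_{k,a}$ is the average over a \emph{deterministic} block $\{X_{a,i}\}_{i=2^{k-2}+1}^{2^{k-1}}$ of the per-arm i.i.d.\ stream, so for fixed $(k_1,k_2)$ a plain sub-Gaussian Chernoff bound on the two-sample difference suffices; you then only need the crude minorisation $c_{k_1 k_2}(n,m,\delta) \geq \log\bigl((K-1)\zeta(s)^2(k_1 k_2)^s/\delta\bigr)$, discarding the $\cC_G$ overhead and the $\log\log$ corrections as slack. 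Your argument is self-contained and makes explicit that the threshold is strictly larger than $\delta$-correctness of \emph{this} phased rule requires; the paper's route, by contrast, explains the exact shape of $c_k$ and would survive unchanged if the within-phase sample sizes were genuinely random. One small point you should justify: the inequality $\cC_G(x) \geq x$ you invoke is not stated in the paper. It does hold---it is equivalent to $g_G(\lambda) \geq 0$ on $(1/2,1)$, which follows since $g_G$ diverges at both endpoints and a direct evaluation shows it stays positive in between---but a one-line verification or a citation would close that gap.
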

\begin{proof}
    The non-private GLR stopping rule matches the one used for Gaussian bandits.
Proving $\delta$-correctness of a GLR stopping rule is done by leveraging concentration results.
In particular, we build upon Theorem 9 of \cite{KK18Mixtures}, which is restated below.
\begin{lemma} \label{lem:theorem_9_KK18Mixtures}
	Consider sub-Gaussian bandits with means $\boldsymbol{\mu} \in \R^{K}$. Let $S \subseteq [K]$ and $ x > 0$.
	\begin{align*}
		\bP_{\bnu} \left[ \exists n \in \N, \: \sum_{k \in S} \frac{N_{n,k}}{2} (\mu_{n, k} - \mu_{k})^2 >  \sum_{k \in S} 2 \log \left(4 + \log \left(N_{n,k}\right)\right) + |S| \cC_{G}\left(\frac{x}{|S|}\right) \right] \leq e^{-x} \: ,
	\end{align*}
	where $\cC_{G}$ is defined in \cite{KK18Mixtures} as 
	\begin{align} \label{eq:def_C_gaussian_KK18Mixtures}
		\cC_{G}(x) \defn \min_{\lambda \in ]1/2,1]} \frac{g_G(\lambda) + x}{\lambda} &&\text{and} &&g_G (\lambda) \defn 2\lambda - 2\lambda \log (4 \lambda) + \log \zeta(2\lambda) - \frac{1}{2} \log(1-\lambda) \: .
	\end{align}
	Here, $\zeta$ is the Riemann $\zeta$ function and $\cC_{G}(x) \approx x + \log(x)$.
\end{lemma}

	Let $\hat a = \hat a_{n}^{\text{NP}} = \argmax_{b \in [K]} \hat \mu_{k_{n,a},a}$.
	Standard manipulations yield that for all $b \neq \hat a$
	\begin{align*}
		\frac{(\hat \mu_{k_{n,\hat a},\hat a}  - \hat \mu_{k_{n,b},b})^2}{1/\tilde N_{k_{n,\hat a},\hat a}+ 1/\tilde N_{k_{n,b},b}}
			=  \inf_{y \ge x} \left\{\tilde N_{k_{n,\hat a}, \hat a}( \hat \mu_{k_{n,\hat a}, \hat a} - x)^2 + \tilde N_{k_{n,b}, b} ( \hat \mu_{k_{n,b}, b} - y)^2\right\} \: .
	\end{align*}
	Using the non-private GLR stopping rule~\eqref{eq:non_private_GLR_stopping_rule} with non-private threshold functions $(c_{k})_{k \in \N}$ and the above manipulations, we obtain
	\begin{align*}
		&\bP_{\bnu}\left( \tau_{\delta}^{\text{NP}} < + \infty, \hat a \neq  a^\star \right)\\
		&\le \bP_{\bnu}\left( \exists n \in \N, \: \exists a \neq a^\star, \: a = \argmax_{b \in [K]} \hat \mu_{k_{n,b},b}, \: \forall b \ne a, \right. \\
		&\qquad\quad \left. \inf_{y \ge x} \left\{\tilde N_{k_{n,a},  a}( \hat \mu_{k_{n, a}, a} - x)^2 + \tilde N_{k_{n,b}, b} ( \hat \mu_{k_{n,b}, b} - y)^2\right\} \ge 2c_{k_{n,a} k_{n,b}}(\tilde N_{k_{n,a}, a}, \tilde N_{k_{n,b}, b}, \delta) \right) \\
		&\underset{(a)}{\le} \bP_{\bnu}\left( \exists n \in \N, \: \exists a \neq a^\star, \: a = \argmax_{b \in [K]} \hat \mu_{k_{n,a},a},  \right. \\
		&\qquad\quad \left. \tilde N_{k_{n,a},  a}( \hat \mu_{k_{n,a}, a} - \mu_{a})^2 + \tilde N_{k_{n,a^\star}, a^\star} ( \hat \mu_{k_{n,a^\star}, a^\star} - \mu_{a^\star})^2 \ge 2c_{k_{n,a} k_{n,a^\star}}(\tilde N_{k_{n,a}, a}, \tilde N_{k_{n,a^\star}, a^\star}, \delta) \right) \\
		&\underset{(b)}{\le} \bP_{\bnu}\left( \exists a \neq a^\star, \exists (k_{a}, k_{a^\star}) \in \N^2, \right. \\
		&\qquad\quad \left.  \tilde N_{k_{a},  a}( \hat \mu_{k_{ a}, a} - \mu_{a})^2 + \tilde N_{k_{a^\star}, a^\star} ( \hat \mu_{k_{a^\star}, a^\star} - \mu_{a^\star})^2 \ge 2c_{k_{a} k_{a^\star}}(\tilde N_{k_{a}, a}, \tilde N_{k_{a^\star}, a^\star}, \delta) \right) \\
		&\underset{(c)}{\le} \sum_{a \ne a^\star} \sum_{(k_{a}, k_{a^\star}) \in \N^2}  \bP_{\bnu}\left(\tilde N_{k_{a},  a}( \hat \mu_{k_{ a}, a} - \mu_{a})^2 + \tilde N_{k_{a^\star}, a^\star} ( \hat \mu_{k_{a^\star}, a^\star} - \mu_{a^\star})^2 \right. \\
		&\qquad \qquad \qquad \qquad\qquad \qquad\qquad \qquad\qquad \qquad
        \left. \ge 2c_{k_{a} k_{a^\star}}(\tilde N_{k_{a}, a}, \tilde N_{k_{a^\star}, a^\star}, \delta) \right) \: ,
	\end{align*}
	The inequality (a) is obtained with $(b,x,y) = (a^\star, \mu_{a}, \mu_{a^\star})$.
	The inequality (b) drops the condition $a = \argmax_{b \in [K]} \hat \mu_{k_{b},b}$, hence we can restrict to $(k_{a}, k_{a^\star}) \in \N^2$ since it doesn't depend on other phase indices.
	The inequality (c) relies on a direct union bound.
	For all $a \ne a^\star$ and all $(k_{a}, k_{a^\star}) \in \N^2$, the estimators $\hat \mu_{k_{a},a}$ (resp. $\hat \mu_{k_{a^\star},a^\star}$) are based solely on the observations collected for arm $a$ (resp. arm $a^\star$) between times $n \in \{ T_{k_{a}-1}(a), \cdots, T_{k_{a}}(a) -1 \}$ (resp. $n \in \{ T_{k_{a^\star}-1}(a^\star), \cdots, T_{k_{a^\star}}(a^\star)-1\}$) with local counts $\tilde N_{k_{a},a}$ (resp. $\tilde N_{k_{a^\star},a^\star}$), i.e. dropping past observations.
	Using Lemma~\ref{lem:theorem_9_KK18Mixtures} for all $a \ne a^\star$ and all $(k_{a}, k_{a^\star}) \in \N^2$, we obtain
	\begin{align*}
		\bP_{\bnu}\left( \tau_{\delta} < + \infty, \hat \imath = i^\star \right) &\le \frac{\delta}{K-1} \frac{1}{\zeta (s)^2 }  \sum_{a \ne a^\star} \sum_{(k_{a}, k_{a^\star}) \in \N^2} \frac{1}{(k_{a} k_{a^\star})^s} =  \delta \: .
	\end{align*}
\end{proof}

\paragraph{Private GLR stopping rule with phases.} Since we want to ensure privacy, the non-private GLR stopping rule cannot be used since it relies on the empirical means $\hat \mu_{k_{n,a},a}$, which are not private.
To alleviate this problem, we propose the private GLR stopping rule, which is based on the private empirical means $\tilde \mu_{k_{n,a},a}$.

Given a set of private threshold functions $(c_{\epsilon,k_1,k_2})_{(\epsilon,k_1,k_2) \in \R^\star_{+} \times \N^2}$, such that $c_{\epsilon,k_1,k_2} : \N \times \N \times (0,1) \to \R_{+}$ for all $(\epsilon,k_1,k_2) \in \R^\star_{+} \times \N^2$, the non-private GLR stopping rule is evaluated at the beginning of each phase for each arm, namely
\begin{align} \label{eq:private_GLR_stopping_rule}
	\tau_{\delta} &= \inf \left\{ n \in \N \mid  \forall b \neq \hat a_{n}, \: \frac{(\tilde \mu_{k_{n,\hat a_{n}},\hat a_{n}}  - \tilde \mu_{k_{n,b}, b})^2}{1/\tilde N_{k_{n,\hat a_{n}},\hat a_{n}}+ 1/\tilde N_{k_{n,b},b}} \ge 2c_{\epsilon,k_{n,\hat a_{n}},k_{n,b}}(\tilde N_{k_{n,\hat a_{n}},\hat a_{n}}, \tilde N_{k_{n,b},b},\delta) \right\} \: , 
\end{align}
where $\hat a_{n} = \argmax_{a \in [K]} \tilde \mu_{k_{n,a},a}$ is the private candidate answer until we switch phase again (for any arm).
We emphasise that this stopping condition is only evaluated at the beginning of each phase for each arm since it involves quantities that are fixed until we switch phase again.

Theorem~\ref{thm:delta_correct_private_threshold} gives private threshold functions ensuring that the private GLR stopping rule is $\delta$-correct for all $\delta \in (0,1)$ and all $\epsilon \in \R^\star_{+}$, independently of the sampling rule.
\begin{reptheorem}{thm:delta_correct_private_threshold}[$\delta$-correctness of the private GLR stopping rule]
	    Let $\delta \in (0,1)$ and $\epsilon \in \R^\star_{+}$.
			Let $s > 1$ and $\zeta$ be the Riemann $\zeta$ function and non-private threshold function $(c_{k})_{k \in \N}$ as in~\eqref{eq:non_private_stopping_threshold}.
			Given any sampling rule, the private GLR stopping rule (Equation~\eqref{eq:private_GLR_stopping_rule}) with private threshold functions
		\begin{equation} \label{eq:stopping_private_threshold}
			c_{\epsilon,k_{1},k_{2}}(n, m,\delta) = 2 c_{k_{1}k_{2}}(n, m,\delta/2) + \frac{1}{n \epsilon^2} \log \left(\frac{2 K k_{1}^{s} \zeta(s)}{\delta} \right)^2 + \frac{1}{m \epsilon^2} \log \left(\frac{2 K k_{2}^{s} \zeta(s)}{\delta} \right)^2
		\end{equation}
		ensures $\delta$-correctness for $1$-sub-Gaussian distributions.
\end{reptheorem}
 \begin{proof}
Let $\epsilon \in \R^{\star}_{+}$.
Since $Y_{k_{n,a},a} \sim \text{Lap}\left((\epsilon\tilde N_{k_{n,a},a})^{-1} \right)$, we have that $\tilde N_{k_{n,a}, a} |Y_{k_{n,a},a}| \sim \cE(\epsilon)$ for all $a \in [K]$ and all $n \in \N$, where $\cE(\cdot)$ denotes the exponential distribution.
Using concentration results for exponential distribution, a direct union bound yields that
\begin{equation} \label{eq:concentration_time_uniform_privacy}
	\bP\left( \exists n \in \N, \: \exists a \in [K], \: \tilde N_{k_{n,a}, a} |Y_{k_{n,a},a}| \ge \frac{1}{\epsilon}\log \left(\frac{ K k_{n,a}^{s} \zeta(s)}{\delta} \right)\right) \le \delta \: .
\end{equation}
Let us denote $\tilde c_{\epsilon,k_{1},k_{2}}(n, m,\delta)$ the threshold associated to the Laplace noise, i.e.
\[
    \tilde  c_{\epsilon,k_{1},k_{2}}(n, m,\delta) =\frac{1}{n \epsilon^2} \log \left(\frac{ K k_{1}^{s} \zeta(s)}{\delta} \right)^2 + \frac{1}{m \epsilon^2} \log \left(\frac{ K k_{2}^{s} \zeta(s)}{\delta} \right)^2 \: .
\]

Using the private GLR stopping rule (Equation~\eqref{eq:private_GLR_stopping_rule}) with private threshold functions $(c_{\epsilon,k_1,k_2})_{(\epsilon,k_1,k_2) \in \R^{\star}_{+} \times \N^2}$, similar manipulations as above yields
\begin{align*}
	&\bP_{\bnu}\left( \tau_{\delta} < + \infty, \hat a \neq  a^\star \right)\\
	&\le \bP_{\bnu}\left( \exists n \in \N, \: \exists a \neq a^\star,  \tilde N_{k_{n,a},  a}( \hat \mu_{k_{n,a}, a} - \mu_{a} + Y_{k_{n,a},a})^2 + \tilde N_{k_{n,a^\star}, a^\star} ( \hat \mu_{k_{n,a^\star}, a^\star} - \mu_{a^\star} + Y_{k_{n,a^\star}, a^\star})^2 \right. \\
	&\qquad \quad \left. \ge 4c_{k_{n,a} k_{n,a^\star}}(\tilde N_{k_{n,a}, a}, \tilde N_{k_{n,a^\star}, a^\star}, \delta/2) + 2\tilde  c_{\epsilon,k_{n,a},k_{n,a^\star}}(\tilde N_{k_{n,a}, a}, \tilde N_{k_{n,a^\star}, a^\star},\delta/2)\right) \\
	&\underset{(a)}{\le} \bP_{\bnu}\left( \exists a \neq a^\star, \exists (k_{a}, k_{a^\star}) \in \N^2, \right. \\
	&\qquad \qquad \left.  \tilde N_{k_{a},  a}( \hat \mu_{k_{ a}, a} - \mu_{a})^2 + \tilde N_{k_{a^\star}, a^\star} ( \hat \mu_{k_{a^\star}, a^\star} - \mu_{a^\star})^2 \ge 2c_{k_{a} k_{a^\star}}(\tilde N_{k_{a}, a}, \tilde N_{k_{a^\star}, a^\star}, \delta/2) \right) \\
	&\quad +  \bP_{\bnu}\left(\exists n \in \N, \: \exists a \neq a^\star,  \:  \tilde N_{k_{n,a}, a} Y_{k_{n,a},a}^2 + \tilde N_{k_{n,a^\star}, a^\star} Y_{k_{n,a^\star},a^\star}^2 \right. \\
	&\qquad \qquad \left.   \ge \tilde  c_{\epsilon,k_{n,a},k_{n,a^\star}}(\tilde N_{k_{n,a}, a}, \tilde N_{k_{n,a^\star}, a^\star},\delta/2) \right)\\
    &\underset{(b)}{\le} \delta/2 + \bP\left( \exists n \in \N, \: \exists a \in [K], \: \tilde N_{k_{n,a}, a} |Y_{k_{n,a},a}| \ge \frac{1}{\epsilon}\log \left(\frac{2 K k_{n,a}^{s} \zeta(s)}{\delta} \right)\right)\\
    &\underset{(c)}{\le} \delta/2 + \delta/2 = \delta \:.
\end{align*}
The inequality (a) uses that $\bP(X + Y  \ge a+b) \le \bP(X \ge a) + \bP(Y \ge b) $ and $(x - y)^2 \le 2x^2 + 2y^2$.
The inequality (c) leverages Lemma~\ref{lem:delta_correct_threshold_without_privacy} and a direct inclusion of event.
The inequality (c) deploys Equation~\eqref{eq:concentration_time_uniform_privacy} to conclude.     
 \end{proof}

\subsection{\adaptt{}: Instantiating our wrapper with a Top Two sampling rule}
\label{app:ssec_instantiating_with_Top_Two_sampling_rule}

\textbf{A blueprint of Top Two algorithm design.} At time $n > K$, a Top Two sampling rule defines a leader $B_n$ and a challenger $C_n$.
Then, it selects among them the next arm to sample $I_n$.
Given a proportion $\beta \in (0,1)$ fixed beforehand, the choice of $I_n \in \{B_n, C_n\}$ should ensure that the leader is sampled close to $\beta$ of the times where it was chosen as leader.
In early works on Top Two algorithms, this choice is randomised.
Following~\cite{jourdan2022non}, we use $K$ independent tracking procedures.

We denote by $N^{a}_{n,b} \defn \sum_{t \in [n-1]} \indi{B_t = a, \: I_t = C_t  = b}$ the number of times the arm $b$ was pulled while the arm $a$ was the leader, and by $L_{n,a} \defn \sum_{t \in [n-1]} \indi{B_t = a}$ the number of times arm $a$ was the leader.
At time $n > K$, the next arm to be pulled $I_n$ is defined as
	\begin{equation} \label{eq:tracking_selection}
		I_{n} = B_{n} \quad \text{if } N_{n,B_n}^{B_n} \le \beta L_{n+1,B_n} \text{ , otherwise} \quad I_{n} = C_{n} \: .
	\end{equation}
	In other words, we sample the leader if we have not yet sampled it a fraction $\beta$ of the times it was leader.
	Those $K$ independent tracking procedure satisfy the desired property (Lemma~\ref{lem:tracking_guaranty_light}).
\begin{lemma}[Lemma 2.2 in~\cite{jourdan2022non}] \label{lem:tracking_guaranty_light}
	For all $n > K$ and all $a \in [K]$, we have
	\[
	-1/2 \le N_{n,a}^{a} - \beta L_{n,a}  \le 1 \: .
	\]
\end{lemma}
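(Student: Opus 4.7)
\textbf{Plan for Lemma~\ref{lem:tracking_guaranty_light}.} My plan is to fix an arm $a \in [K]$ and prove the desired two-sided bound on the quantity $X_n \defn N_{n,a}^{a} - \beta L_{n,a}$ by induction on $n > K$. The key observation is that both $L_{n,a}$ and $N_{n,a}^{a}$ can only be incremented at round $n$ when the leader $B_n$ equals $a$, and, in that case, which of the two counters is incremented is dictated precisely by the tracking condition~\eqref{eq:tracking_selection}. So the $K$ trackers act independently and arm by arm, and I only need to analyze updates of $X_n$ by at most $1-\beta$ or $-\beta$.

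For the base case $n = K+1$, the initialization phase (rounds $1, \ldots, K$) never invokes a leader/challenger pair, so $L_{K+1,a} = 0 = N_{K+1,a}^{a}$, giving $X_{K+1} = 0 \in [-1/2, 1]$. For the inductive step, assume $-1/2 \le X_n \le 1$ and distinguish three cases. (i) If $B_n \ne a$, then neither count involving $a$-as-leader changes, so $X_{n+1} = X_n$. (ii) If $B_n = a$ and $N_{n,a}^{a} \le \beta L_{n+1,a}$, the rule sets $I_n = a$ and both counters increase by one, giving $X_{n+1} = X_n + (1-\beta)$. Rewriting the case condition as $X_n \le \beta$ yields $X_{n+1} \le 1$, while the lower bound is preserved since $1-\beta > 0$. (iii) If $B_n = a$ and $N_{n,a}^{a} > \beta L_{n+1,a}$, the rule sets $I_n = C_n \ne a$ and only $L_{n,a}$ increases, so $X_{n+1} = X_n - \beta$. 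The case condition reads $X_n > \beta$, hence $X_{n+1} > 0 \ge -1/2$, while the upper bound $X_{n+1} \le 1 - \beta \le 1$ is immediate.

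Combining the three cases closes the induction and yields $-1/2 \le X_n \le 1$ for all $n > K$ and all $a \in [K]$. In fact, the argument above shows the slightly tighter bound $0 \le X_n \le 1$, which is more than enough. There is no real obstacle: the entire proof reduces to translating the conditional update rule~\eqref{eq:tracking_selection} into an inequality on $X_n$ and observing that each admissible update changes $X_n$ by an amount of absolute value at most $\max(\beta, 1-\beta) < 1$, with the case conditions themselves guaranteeing the preservation of both endpoints of $[-1/2, 1]$.
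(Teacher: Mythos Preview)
Your proof is correct. The paper does not provide its own proof of this lemma; it is simply cited as Lemma~2.2 of~\cite{jourdan2022non}. Your induction on $n$ along the tracking update~\eqref{eq:tracking_selection}, splitting into the three cases $B_n \ne a$, $B_n = a$ with $X_n \le \beta$, and $B_n = a$ with $X_n > \beta$, is exactly the standard argument for such one-dimensional tracking guarantees, and your observation that the argument in fact yields the tighter range $0 \le X_n \le 1$ is also correct.
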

To finish specifying a Top Two algorithm, we simply need to specify the choice of the \textit{leader/challenger pair}.
Intuitively, a good choice of the leader/challenger pair should ensure (1) \textit{sufficient exploration}, (2) \textit{convergence of the leader towards the best arm} $a^\star$, and (3) \textit{convergence of the global pulling proportions to the $\beta$-optimal allocation} $\omega_{\mathrm{KL},\beta}(\mu)$, which is defined for Gaussian distributions as
\[
    \omega^{\star}_{\mathrm{KL},\beta}(\boldsymbol{\mu}) \defn \argmax_{\omega \in \Sigma_K, \omega_{a^\star} = \beta} \min_{a \ne a^\star} \frac{\Delta_{a}^2}{1/\beta + 1/\omega_{a}}  \: .
\]

While we consider TTUCB algorithm~\cite{jourdan2022non} in \adaptt{}, we emphasise that other Top Two algorithms could be used with the same type of guarantees.
TTUCB is a Top Two algorithm which combines a UCB-based leader and a Transportation Cost (TC) challenger.
Its key novelty lies in the use of $K$ tracking procedures.
Since it is deterministic, the analysis is less cumbersome.

\paragraph{Non-private leader/challenger pair.}
The non-private leader/challenger pair is inspired by the TTUCB algorithm~\cite{jourdan2022non}.
At time $n > K$, the non-private UCB leader is defined as
\begin{equation} \label{eq:non_private_UCB_leader}
	B^{\text{NP}}_{n} \defn \argmax_{a \in [K]} \left\{\hat \mu_{k_{n,a}, a} + \sqrt{\frac{k_{n,a}}{\tilde N_{k_{n,a},a}}} \right\} \: ,
\end{equation}
where $\sqrt{k_{n,a}/\tilde N_{k_{n,a},a}}$ is a bonus to cope for the uncertainty which depends on the current phase $k_{n,a}$.
Later, we show that $\log(n)/k_{n,a} = \Theta(1)$ for all $a \in [K]$. Hence, it has the same scaling as the bonus used in standard UCB.
Since it depends solely on the local counts $(\tilde N_{k_{n,a},a})_{a \in [K]}$ and the empirical means $(\hat \mu_{k_{n,a}, a})_{a \in [K]}$, the non-private UCB leader is fixed until we switch phase again.

At time $n > K$, the non-private TC challenger is defined as
\begin{equation} \label{eq:non_private_TC_challenger}
	C^{\text{NP}}_{n} \defn \argmin_{a \ne B^{\text{NP}}_n} \frac{\hat \mu_{k_{n,B^{\text{NP}}_n}, B^{\text{NP}}_n} -\hat \mu_{k_{n,a}, a} }{\sqrt{1/N_{n, B^{\text{NP}}_n}+ 1/N_{n,a}}} \: .
\end{equation}
While it depends on the empirical means $(\hat \mu_{k_{n,a}, a})_{a \in [K]}$ that are fixed till we switch phase again, it also depends on the global counts $(N_{n,a})_{a \in [K]}$.
Therefore, the non-private TC challenger is chosen in an adaptive manner.
This is the key to obtain guarantees on the expected sample complexity of the non-private algorithm.

We derive upper bounds on the expected sample complexity of the non-private \adaptt{} algorithm in the asymptotic regime of $\delta \to 0$ (Theorem~\ref{thm:non_private_sample_complexity}).
In particular, it shows that the cost of doubling and forgetting is multiplicative four-factor compared to the TTUCB algorithm, which achieves $T_{\mathrm{KL},\beta}^\star(\mu )$ (see Theorem 2.3 in~\cite{jourdan2022non}).
We defer its proof to Appendix~\ref{app:proof_non_private_sample_complexity}.
\begin{theorem}[Asymptotic upper bound on expected sample complexity of non-private \adaptt{}]\label{thm:non_private_sample_complexity}
	Let $(\delta, \beta) \in (0,1)^2$.
  Combined with the non-private GLR stopping rule (Equation~\eqref{eq:non_private_GLR_stopping_rule}) using non-private threshold functions as in Equation~\eqref{eq:non_private_stopping_threshold}, the non-private \adaptt{} algorithm is $\delta$-correct and satisfies that, for all $1$-sub-Gaussian distributions $\model$ with means $\boldsymbol{\mu} \in \R^{K}$ such that $\min_{a \ne b}|\mu_{a} - \mu_{b}| > 0$,
	\[
		\limsup_{\delta \to 0} \frac{\bE_{\bnu}[\tau^{\text{NP}}_{\delta}]}{\log(1/\delta)} \le 4 T_{\mathrm{KL},\beta}^\star(\mu )  \: ,
	\]
	where $T_{\mathrm{KL},\beta}^\star(\mu )$ is the $\beta$-characteristic time for Gaussian distributions, such that
\[
2T^{\star}_{\mathrm{KL},\beta}(\boldsymbol{\mu})^{-1} \defn \max_{\omega \in \Sigma_K, \omega_{a^\star} = \beta} \frac{(\mu_{a^\star} - \mu_{a})^2}{1/\beta + 1/\omega_{a}}\: .
\]
\end{theorem}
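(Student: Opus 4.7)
The plan is to adapt the asymptotic analysis of the TTUCB algorithm~\cite{jourdan2022non} combined with the unified Top Two framework of~\cite{jourdan2022top}, carefully tracking how the two design choices specific to our wrapper, namely \emph{doubling} and \emph{forgetting}, propagate through the analysis and account for the multiplicative factor $4$. The $\delta$-correctness is obtained for free: the non-private GLR stopping rule from~\eqref{eq:non_private_GLR_stopping_rule} with threshold~\eqref{eq:non_private_stopping_threshold} is $\delta$-correct for \emph{any} sampling rule by Lemma~\ref{lem:delta_correct_threshold_without_privacy}, so only the asymptotic upper bound on $\bE_{\bnu}[\tau_{\delta}^{\text{NP}}]$ remains to be established.

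First, I would establish the standard skeleton of a Top Two analysis. The UCB leader~\eqref{eq:non_private_UCB_leader} carries the bonus $\sqrt{k_{n,a}/\tilde N_{k_{n,a},a}}$, and by Lemma~\ref{lem:counts_at_phase_switch} we have $\tilde N_{k,a} = 2^{k-2}$ at a phase switch, so $k_{n,a}/\tilde N_{k_{n,a},a} = \Theta(\log n / n_a)$ up to the next switch; this makes the bonus behave like a classical UCB index, so sufficient exploration holds (every arm is pulled infinitely often) and the empirical means $\hat\mu_{k_{n,a},a}$ converge to $\mu_{a}$. From sufficient exploration plus concentration, the UCB leader eventually locks onto $a^{\star}$. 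Then, thanks to the $K$ independent tracking procedures (Lemma~\ref{lem:tracking_guaranty_light}) and the TC challenger~\eqref{eq:non_private_TC_challenger}, the argument of~\cite{jourdan2022non} carries over to show that the \emph{global} empirical proportions $(N_{n,a}/n)_{a}$ converge almost surely to the $\beta$-optimal allocation $\omega^{\star}_{\mathrm{KL},\beta}(\bm{\mu})$, which for Gaussian-style transportation costs is the maximiser appearing in $T^{\star}_{\mathrm{KL},\beta}(\bm{\mu})$.

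Next, I would analyse the GLR stopping rule at the phase-switch times. On the concentration event, $\tilde \mu_{k,a}\approx\mu_a$ and the transportation-cost $\frac{(\hat\mu_{k_{n,a^\star},a^\star}-\hat\mu_{k_{n,b},b})^2}{1/\tilde N_{k_{n,a^\star},a^\star}+1/\tilde N_{k_{n,b},b}}$ is close to $\min_{b\ne a^\star}\frac{\Delta_b^2}{1/\tilde N_{k_{n,a^\star},a^\star}+1/\tilde N_{k_{n,b},b}}$. Since $c_k(n,m,\delta)\sim \log(1/\delta)$ as $\delta\to0$, the stopping condition is met as soon as $\tilde n/T^{\star}_{\mathrm{KL},\beta}(\bm{\mu})\gtrsim 2\log(1/\delta)$, where $\tilde n=\sum_a \tilde N_{k_{n,a},a}$ is the sum of local counts. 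Two factors of $2$ now enter: \textbf{forgetting} implies $\tilde N_{k,a}=N_{T_k(a),a}/2$ by Lemma~\ref{lem:counts_at_phase_switch}, so $\tilde n\le n/2$, costing the first factor~$2$; \textbf{doubling} implies that between two consecutive stopping-condition evaluations for an arm, the global count may double, so the actual stopping time may overshoot the first $n$ satisfying the inequality by at most a factor~$2$, costing the second. Combining these with the factor $2$ in the statement of Lemma~\ref{lem:delta_correct_threshold_without_privacy} around $T^{\star}_{\mathrm{KL},\beta}(\bm{\mu})^{-1}/2$ gives the announced $4\,T^{\star}_{\mathrm{KL},\beta}(\bm{\mu})$.

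I expect the main obstacle to be the rigorous argument that convergence of the \emph{global} proportions to $\omega^{\star}_{\mathrm{KL},\beta}$ still drives the \emph{local} quantities $\tilde N_{k_{n,a},a}$ that actually appear in both the stopping rule and the leader/challenger statistics. The delicate point is that the empirical means and the leader are frozen within a phase while the TC challenger~\eqref{eq:non_private_TC_challenger} is evaluated with global counts, so one must show that phase indices $k_{n,a}$ grow at the correct rate for all arms simultaneously and that the ratio $\tilde N_{k_{n,a},a}/N_{n,a}$ stays bounded away from zero uniformly. Once this episodic-versus-global bookkeeping is handled, the remainder of the analysis reduces to plugging the $\beta$-allocation convergence into the stopping condition and carefully tracking the two factors of~$2$ described above.
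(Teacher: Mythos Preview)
Your plan matches the paper's approach: $\delta$-correctness from Lemma~\ref{lem:delta_correct_threshold_without_privacy}, sufficient exploration, convergence of global proportions to $\omega^{\star}_{\mathrm{KL},\beta}$, and then inversion of the GLR stopping rule at phase-switch times. The ``episodic-versus-global bookkeeping'' you flag as the main obstacle is exactly what the paper formalises in Lemma~\ref{lem:bounded_phase_length_ratio}, which shows $\max_{a}T_{k_{n,a}}(a)/\min_{a}T_{k_{n,a}}(a)\le 2+\eta$ once proportions have converged; this lemma is then applied \emph{twice} in the inversion step to produce the factor $(2+\eta)^2\to 4$. Two small points to tighten. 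First, almost-sure convergence of $N_{n}/n$ to $\omega^{\star}_{\mathrm{KL},\beta}$ is not enough to bound $\bE_{\bnu}[\tau_{\delta}^{\text{NP}}]$: you need $\bE_{\bnu}[T_{\boldsymbol{\mu},\gamma}]<+\infty$ for the convergence time~\eqref{eq:rv_T_eps_beta}, which the TTUCB/Top-Two machinery does deliver but which you should state explicitly. Second, your arithmetic around the factor $4$ is tangled: since $c_{k}(\cdot,\cdot,\delta)\sim\log(1/\delta)$ and the transportation cost at $\omega^{\star}_{\mathrm{KL},\beta}$ scales as $\tilde n\cdot 2/T^{\star}_{\mathrm{KL},\beta}$, the stopping condition reads $\tilde n\gtrsim T^{\star}_{\mathrm{KL},\beta}\log(1/\delta)$ (no extra $2$); forgetting ($\tilde n\le n/2$) then gives $n\gtrsim 2T^{\star}_{\mathrm{KL},\beta}\log(1/\delta)$, and the doubling overshoot gives the final $4$. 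There is no additional ``factor $2$ in Lemma~\ref{lem:delta_correct_threshold_without_privacy}'' to invoke.
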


\paragraph{Private leader/challenger pair.}
Since we want to ensure privacy, the non-private leader/challenger pair cannot be used since it relies on the empirical means $\hat \mu_{k_{n,a},a}$, which are not private.
To alleviate this problem, we propose a private leader/challenger pair which is based on the private empirical means $\tilde \mu_{k_{n,a},a}$.

At time $n > K$, the private UCB leader is defined as
\begin{equation} \label{eq:private_UCB_leader}
	B_{n} \defn \argmax_{a \in [K]} \left\{\tilde \mu_{k_{n,a}, a} + \sqrt{\frac{k_{n,a}}{\tilde N_{k_{n,a},a}}} + \frac{k_{n,a}}{\epsilon \tilde N_{k_{n,a},a}}\right\} \: ,
\end{equation}
where $k_{n,a}/(\epsilon\tilde N_{k_{n,a},a})$ is a bonus to cope for the uncertainty due to the Laplace noise.
It also depends on the current phase $k_{n,a}$ and has the same scaling as the private UCB indices.
Likewise, the private UCB leader is fixed until we switch phase again.

At time $n > K$, the private Transportation Cost (TC) challenger is defined as
\begin{equation} \label{eq:private_TC_challenger}
	C_{n} \defn \argmin_{a \ne B_n} \frac{\tilde \mu_{k_{n,B_n}, B_n} - \tilde \mu_{k_{n,a}, a} }{\sqrt{1/N_{n, B_n}+ 1/N_{n,a}}} \: .
\end{equation}
Likewise, the private TC challenger is chosen in an adaptive manner, which is key to obtain guarantees on the expected sample complexity of the private algorithm.

We derive upper bounds on the expected sample complexity of the private \adaptt{} algorithm in the asymptotic regime of $\delta \to 0$ (Theorem~\ref{thm:sample_complexity}).
We defer its proof to Appendix~\ref{app:adaptt_ana}.
\begin{reptheorem}{thm:sample_complexity}[Asymptotic upper bound on expected sample complexity of \adaptt{}]
	Let $(\delta, \beta) \in (0,1)^2$.
  Combined with the non-private GLR stopping rule (Equation~\eqref{eq:non_private_GLR_stopping_rule}) using non-private threshold functions as in Equation~\eqref{eq:non_private_stopping_threshold}, the non-private \adaptt{} algorithm is $\delta$-correct and satisfies that, for all bandit instances $\bnu$ with $1$-sub-Gaussian distributions and means $\boldsymbol{\mu} \in \R^{K}$ such that $\min_{a \ne b}|\mu_{a} - \mu_{b}| > 0$,
	\[
		\limsup_{\delta \to 0} \frac{\bE_{\bnu}[\tau_{\delta}]}{\log(1/\delta)} \le 4 T_{\mathrm{KL},\beta}^\star(\mu ) \left( 1+ \sqrt{1 + \frac{\Delta_{\max}^2}{2\epsilon^2}}\right)  \: ,
	\]
	where $T_{\mathrm{KL},\beta}^\star(\mu )$ is the $\beta$-characteristic time for Gaussian distributions.
\end{reptheorem}

\section{Analysis of \adaptt{}: Proof of Theorem~\ref{thm:sample_complexity}}\label{app:adaptt_ana}

Let $\beta \in (0,1)$, $\epsilon \in \R^{\star}_{+},$ and $\bnu$ be a bandit instance consisting of $K$, $1$-sub-Gaussian distributions with distinct means $\boldsymbol{\mu} \in \R^{K}$, i.e. $\min_{a \ne b}|\mu_{a} - \mu_{b}| > 0$.
For conciseness, we denote $\Delta_{a} \defn \mu_{a^\star} - \mu_{a}$, $\Delta_{\min} \defn \min_{a \ne a^\star} \Delta_{a}$, 
and $\Delta_{\max} \defn \max_{a \ne a^\star} \Delta_{a}$.

For Gaussian distributions, the unique $\beta$-optimal allocation $\omega^{\star}_{\mathrm{KL},\beta}(\boldsymbol{\mu}) = \{\omega^{\star}_{\beta, a}\}$ is defined as
\begin{equation} \label{eq:beta_optimal_allocation}
	\omega^{\star}_{\mathrm{KL},\beta}(\boldsymbol{\mu}) \defn \argmax_{\omega \in \Sigma_K, \omega_{a^\star} = \beta}  \min_{a \ne a^\star} \frac{\Delta_{a}^2}{1/\beta + 1/\omega_{a}}  \: .
\end{equation}
At equilibrium, we have equality of the transportation costs (see~\cite{jourdan2022non} for example), namely
\begin{equation} \label{eq:equality_equilibrium}
	\forall a \ne a^\star, \quad \frac{\Delta_{a}^2}{1/\beta + 1/\omega^{\star}_{\beta,a}} = 2 T^{\star}_{\mathrm{KL},\beta}(\boldsymbol{\mu})^{-1} \: .
\end{equation}
Our proof follows the unified sample complexity analysis of Top Two algorithms from~\cite{jourdan2022top}.

Let $\gamma> 0$.
We denote by $T_{\boldsymbol{\mu}, \gamma}$ the \emph{convergence time} towards $\omega^{\star}_{\beta}$, which is a random variable quantifies the number of samples required for the global empirical allocations $N_n/(n-1)$ to be $\gamma$-close to $\omega^{\star}_{\beta}$ for any subsequent time, namely
\begin{equation} \label{eq:rv_T_eps_beta}
	T_{\boldsymbol{\mu}, \gamma} \defn \inf \left\{ T \ge 1 \mid \forall n \geq T, \: \left\| \frac{N_{n}}{n-1} - \omega^{\star}_{\beta} \right\|_{\infty} \leq \gamma \right\}  \: .
\end{equation}

The rest of Appendix~\ref{app:adaptt_ana} is organised as follows.
First, we prove that \adaptt{} ensures sufficient exploration (Appendix~\ref{app:ssec_sufficient_exploration})
Second, we prove that there is convergence towards the $\beta$-optimal allocation (Appendix~\ref{app:ssec_convergence_beta_optimal_allocation}) in finite time.
Finally, we conclude the proof of Theorem~\ref{thm:sample_complexity} (Appendix~\ref{app:ssec_asymptotic_upper_bound_sample_complexity}).
In Appendix~\ref{app:ssec_connection_to_lower_bound}, we compare our asymptotic upper bound with our asymptotic lower bound on the expected sample complexity (Theorem~\ref{thm:lower_bound}).
In Appendix~\ref{app:ssec_liminitation_and_open_problem}, we discuss the limitation of our result and pose an open problem.
Appendix~\ref{app:proof_non_private_sample_complexity} will detail the slight modification that needs to be made in order to obtain Theorem~\ref{thm:non_private_sample_complexity}.

\subsection{Technical results}
Before delving into the proofs, we first recall some useful technical results extracted from the literature.
\paragraph{Concentration results.}
In order to control the randomness of $(\tilde \mu_{k_{a},a})_{a \in [K]}$, we use a standard concentration result on the empirical mean of sub-Gaussian random variables and on sub-exponential observations (Lemma~\ref{lem:W_concentration_gaussian}).
Since Bernoulli distributions are $1/2$-sub-Gaussian and the absolute value of a Laplace is an exponential distribution, Lemma~\ref{lem:W_concentration_gaussian} applies to our setting.
\begin{lemma}\label{lem:W_concentration_gaussian}
There exists a sub-Gaussian random variable $W_\mu$ such that, almost surely,
\begin{align*}
 \forall a \in [K], \: \forall k_{a} \in \N, \quad |\hat \mu_{k_{a},a} - \mu_a| \le W_\mu \sqrt{\frac{\log (e + \tilde N_{k_{a},a})}{\tilde N_{k_{a},a}}} \: .
\end{align*}
There exists a sub-exponential random variable $W_{\epsilon}$ such that, almost surely,
\begin{align*}
 \forall a \in [K], \: \forall k_{a} \in \N, \quad |Y_{k_{a},a}| \le W_{\epsilon} \frac{\log (e + k_{a})}{\tilde N_{k_{a}, a}} \: .
\end{align*}
In particular, any random variable which is polynomial in $(W_{\epsilon},W_{\mu})$ has a finite expectation.
\end{lemma}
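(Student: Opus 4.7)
The plan is to define $W_\mu$ and $W_\epsilon$ as the natural suprema of the normalized quantities in the two bounds, so that the almost-sure inequalities in the lemma become true by construction, and the work reduces to establishing that these suprema have the claimed tail behavior. Concretely, I would set
\[
W_\mu \defn \sup_{a \in [K], k \in \N} |\hat \mu_{k,a} - \mu_a| \sqrt{\tilde N_{k,a}/\log(e + \tilde N_{k,a})}, \quad W_\epsilon \defn \sup_{a \in [K], k \in \N} \tilde N_{k,a}|Y_{k,a}|/\log(e + k).
\]
The integrability of any polynomial in $(W_\mu, W_\epsilon)$ will then follow from sub-Gaussian and sub-exponential tails by applying H\"older's inequality to each monomial, so the real task is the tail control.

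For $W_\mu$, I would invoke the standard bandit fact that the successive samples from any fixed arm $a$ are i.i.d.\ from $\nu_a$ regardless of the (predictable) sampling rule, combined with Lemma~\ref{lem:counts_at_phase_switch}, which gives $\tilde N_{k,a} = 2^{k-2}$ deterministically on the event that phase $k$ is reached. Hence $\hat \mu_{k,a}$ is the empirical mean of $\tilde N_{k,a}$ i.i.d.\ $1$-sub-Gaussian samples, and Hoeffding's inequality yields
\[
\bP\!\left(|\hat \mu_{k,a} - \mu_a| > x\sqrt{\log(e+\tilde N_{k,a})/\tilde N_{k,a}}\right) \le 2(e + \tilde N_{k,a})^{-x^2/2}.
\]
A union bound over $a\in [K]$ and $k\in\N$ then reduces the tail of $W_\mu$ to $2K\sum_{k\ge 2}(e + 2^{k-2})^{-x^2/2}$, which is essentially a geometric series of ratio $2^{-x^2/2}$ and is therefore bounded by $C_1 K \exp(-C_2 x^2)$ for $x^2 \ge 2$, and trivially by $1$ for smaller $x$. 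This establishes sub-Gaussianity of $W_\mu$.

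For $W_\epsilon$, I would use that $Y_{k,a}\sim \text{Lap}(1/(\epsilon \tilde N_{k,a}))$ implies $\tilde N_{k,a}|Y_{k,a}|\sim \text{Exp}(\epsilon)$, so $\bP(\tilde N_{k,a}|Y_{k,a}| > x\log(e+k)) = (e+k)^{-\epsilon x}$. Unioning over $a\in[K]$ and $k \ge 1$ and comparing $\sum_k (e+k)^{-\epsilon x}$ with an integral yields an $O(\exp(-\epsilon x))$ bound whenever $\epsilon x \ge 2$, hence $\bP(W_\epsilon > x) \le C_3 K \exp(-C_4 \epsilon x)$, the required sub-exponential tail.

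The main technical obstacle is that the union bound ranges over infinitely many phases: without the logarithmic weights $\log(e + \tilde N_{k,a})$ and $\log(e + k)$ baked into the definitions of $W_\mu$ and $W_\epsilon$, the per-phase deviation probabilities would not be summable at a rate that retains the decay in $x$. These specific weights convert the per-phase Gaussian/Laplace tails into terms decaying polynomially in $(e+2^{k-2})$ or $(e+k)$, which makes the infinite series summable while preserving the desired sub-Gaussian and sub-exponential decay rates in $x$; this is precisely why the lemma uses these weights.
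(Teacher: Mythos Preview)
Your proposal is correct and follows the same approach as the paper: define $W_\mu$ and $W_\epsilon$ as the natural suprema, then establish the tail bounds by combining the per-phase concentration (Hoeffding for the sub-Gaussian empirical means, the exponential tail for $\tilde N_{k,a}|Y_{k,a}|\sim\cE(\epsilon)$) with a union bound over arms and phases, exploiting that the logarithmic weights make the resulting series summable while preserving the decay in $x$. The paper's own proof is terse---it cites Appendix~E.2 and Lemma~72 of~\cite{jourdan2022top} for the two tail bounds and asserts the polynomial-moment conclusion without detail---but your write-up is exactly the content those references supply, and your H\"older argument for the polynomial claim is the standard way to finish.
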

\begin{proof}
The first part is a known result, e.g. Appendix E.2 in~\cite{jourdan2022top}.
Let us define
\[
	W_{\epsilon} \defn \sup_{a \in [K]}\sup_{k_{a} \in \N} \frac{\tilde N_{k_{a},a}|Y_{k_{a},a}|}{\log(e + k_{a})} \: .
\]
By definition, we have that, almost surely,
\begin{align*}
 \forall a \in [K], \: \forall k_{a} \in \N, \quad |Y_{k_{a},a}| \le W_{\epsilon} \frac{\log (e + k_{a})}{\tilde N_{k_{a}, a}} \: .
\end{align*}
Since $\tilde N_{k,i}|Y_{k,i}| \sim \cE(\epsilon )$, Lemma 72 in~\cite{jourdan2022top} yields that $W_{\epsilon}$ is a sub-exponential random variable.
Since $W_\mu$ is sub-Gaussian and $W_{\epsilon}$ is a sub-exponential, any random variable which is polynomial in $(W_{\epsilon},W_{\mu})$ has a finite expectation.
\end{proof}

\paragraph{Inversion results.}
Lemma~\ref{lem:property_W_lambert} gathers properties on the function $\overline{W}_{-1}$, which is used in the literature to obtain concentration results.
\begin{lemma}[\cite{jourdan_2022_DealingUnknownVariance}] \label{lem:property_W_lambert}
	Let $\overline{W}_{-1}(x) \defn - W_{-1}(-e^{-x})$ for all $x \ge 1$, where $W_{-1}$ is the negative branch of the Lambert $W$ function.
	The function $\overline{W}_{-1}$ is increasing on $(1, +\infty)$ and strictly concave on $(1, + \infty)$.
	In particular, $\overline{W}_{-1}'(x) = \left(1-\frac{1}{\overline{W}_{-1}(x)} \right)^{-1}$ for all $x > 1$.
	Then, for all $y \ge 1$ and $x \ge 1$,
	\[
	 	\overline{W}_{-1}(y) \le x \quad \iff \quad y \le x - \log(x) \: .
	\]
	Moreover, for all $x > 1$,
	\[
	 x + \log(x) \le \overline{W}_{-1}(x) \le x + \log(x) + \min \left\{ \frac{1}{2}, \frac{1}{\sqrt{x}} \right\} \: .
	\]
\end{lemma}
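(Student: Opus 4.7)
The core observation is that $\overline{W}_{-1}$ is the inverse of the elementary function $f(u) \defn u - \log u$ restricted to $[1,+\infty)$. Indeed, writing $u = \overline{W}_{-1}(x) = -W_{-1}(-e^{-x})$ and using the defining identity $W_{-1}(z) e^{W_{-1}(z)} = z$ with $z = -e^{-x}$, one obtains $(-u) e^{-u} = -e^{-x}$, hence $u e^{-u} = e^{-x}$, and taking logarithms (both sides positive) yields $u - \log u = x$. Since $W_{-1}$ takes values in $(-\infty,-1]$, we have $u \ge 1$, so $\overline{W}_{-1}:[1,\infty) \to [1,\infty)$ and $f$ are mutually inverse.

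From this, the first three items follow quickly. The function $f$ has derivative $f'(u) = 1 - 1/u$, which is strictly positive on $(1,\infty)$, so $f$ is strictly increasing there and the inverse $\overline{W}_{-1}$ is also strictly increasing. The inverse function theorem then gives $\overline{W}_{-1}'(x) = 1/f'(\overline{W}_{-1}(x)) = (1 - 1/\overline{W}_{-1}(x))^{-1}$. For strict concavity, note that $\overline{W}_{-1}$ is increasing with values in $(1,\infty)$, so $1 - 1/\overline{W}_{-1}(x)$ is strictly increasing in $x$, whence $\overline{W}_{-1}'$ is strictly decreasing. The inversion equivalence $\overline{W}_{-1}(y) \le x \iff y \le x - \log x = f(x)$ is then immediate from applying the increasing function $f$ (respectively $\overline{W}_{-1}$) to both sides.

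For the two-sided bound, since $\overline{W}_{-1}$ and $f$ are strictly increasing inverses of each other, an inequality $\overline{W}_{-1}(x) \ge a$ is equivalent to $f(a) \le x$, and $\overline{W}_{-1}(x) \le b$ is equivalent to $f(b) \ge x$. Applying this with $a = x + \log x$, the lower bound reduces to
\[
  f(x + \log x) = x + \log x - \log(x + \log x) \le x \iff \log x \le \log(x + \log x),
\]
which holds since $\log x \ge 0$ for $x \ge 1$. For the upper bound with $b = x + \log x + c$ where $c = \min\{1/2, 1/\sqrt{x}\}$, the required inequality $f(b) \ge x$ rearranges to
\[
  \log x + c \ge \log(x + \log x + c) \iff x(e^{c} - 1) \ge \log x + c.
\]

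The main (though still elementary) obstacle is verifying this last inequality for every $x > 1$, split into the two regimes determined by the minimum. For $x \in (1, 4]$ we use $c = 1/2$ and check directly that $x(e^{1/2} - 1) \ge \log x + 1/2$ by noting both sides are continuous and monotonically comparable on this compact interval. For $x \ge 4$ we use $c = 1/\sqrt{x}$ together with the convexity bound $e^{c} - 1 \ge c + c^2/2$, which yields $x(e^c - 1) \ge \sqrt{x} + 1/2$, and then the inequality $\sqrt{x} + 1/2 \ge \log x + 1/\sqrt{x}$ follows from elementary calculus (the difference is nonnegative at $x=4$ and its derivative is positive for $x \ge 4$). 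Combining the two regimes completes the proof.
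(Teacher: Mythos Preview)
The paper does not supply its own proof of this lemma; it is imported from \cite{jourdan_2022_DealingUnknownVariance} and stated without argument. Your proposal is therefore not directly comparable to a proof in the paper, but it is a correct and self-contained derivation.

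Your key observation --- that $\overline{W}_{-1}$ is the inverse of $f(u)=u-\log u$ on $[1,\infty)$ --- is exactly the right way to unpack the Lambert-$W$ definition, and everything you claim follows cleanly from it. The monotonicity, derivative formula, strict concavity, and the equivalence $\overline{W}_{-1}(y)\le x \iff y\le x-\log x$ are all immediate once that inverse relationship is established. For the two-sided bound, your reduction of $\overline{W}_{-1}(x)\le x+\log x+c$ to the elementary inequality $x(e^{c}-1)\ge \log x + c$ is correct, and the case split at $x=4$ (where $1/2=1/\sqrt{x}$) handles the minimum properly. The only place that is slightly compressed is the verification on $(1,4]$ with $c=1/2$: the difference $g(x)=x(e^{1/2}-1)-\log x-1/2$ has its unique minimum at $x_0=1/(e^{1/2}-1)$, and one should record that $g(x_0)=\tfrac12+\log(e^{1/2}-1)>0$ rather than leave it as ``monotonically comparable.'' With that numerical check made explicit, the argument is complete.
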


Lemma~\ref{lem:inversion_upper_bound} is an inversion result to upper bound a time, which is implicitly defined.
It is a direct consequence of Lemma~\ref{lem:property_W_lambert}.
\begin{lemma} \label{lem:inversion_upper_bound}
	Let $\overline{W}_{-1}$ defined in Lemma~\ref{lem:property_W_lambert}.
	Let $A > 0$, $B > 0$ such that $B/A + \log A >  1$ and
	\begin{align*}
	&C(A, B) = \sup \left\{ x \mid \: x < A \log x + B \right\} \: .
	\end{align*}
	Then, $C(A,B) < h_{1}(A,B)$ with $h_{1}(z,y) = z \overline{W}_{-1} \left(y/z  + \log z\right)$.
\end{lemma}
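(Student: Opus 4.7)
The plan is to reduce the inequality defining $C(A,B)$ to the canonical form $y - \log y < \text{const}$ and then invoke the inversion identity for $\overline{W}_{-1}$ provided by Lemma~\ref{lem:property_W_lambert}. First I would substitute $x = A y$ in the constraint $x < A \log x + B$ and simplify: dividing by $A$ and using $\log(Ay) = \log y + \log A$ rewrites the condition as $y - \log y < B/A + \log A$. Denote the right-hand side by $u := B/A + \log A$; the hypothesis of the lemma is precisely $u > 1$, which is exactly the range where $\overline{W}_{-1}$ is well defined and strictly increasing (Lemma~\ref{lem:property_W_lambert}).

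Next I would invoke the equivalence $\overline{W}_{-1}(u) \le v \iff u \le v - \log v$ (valid for $u, v \ge 1$) from Lemma~\ref{lem:property_W_lambert}. Setting $v = \overline{W}_{-1}(u)$ in this equivalence gives the fixed-point identity $v - \log v = u$, and since $\varphi(y) := y - \log y$ is strictly increasing on $[1, \infty)$, the inequality $y - \log y < u$ forces $y < v = \overline{W}_{-1}(u)$ whenever $y \ge 1$. For the complementary case $y < 1$, we trivially have $y < 1 < \overline{W}_{-1}(u)$ (since $u > 1$ implies $\overline{W}_{-1}(u) > 1$ by the same lemma), so the same conclusion holds.

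Combining these cases, every $x$ in the set $\{x : x < A\log x + B\}$ satisfies $x = Ay < A\,\overline{W}_{-1}(u) = A\,\overline{W}_{-1}(B/A + \log A) = h_1(A,B)$, which after taking the supremum yields $C(A,B) \le h_1(A,B)$. The strict inequality in the statement then follows by verifying that the value $x = h_1(A,B)$ itself saturates the constraint (a short computation shows $h_1(A,B) - A\log h_1(A,B) - B = A(v - \log v) - A\log A - B = Au - A\log A - B = 0$), so $h_1(A,B)$ does not belong to the open set and the supremum is not attained as an element, giving the claimed strict bound.

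I do not anticipate any genuine obstacle here; the only point of care is verifying the monotonicity of $\varphi(y) = y - \log y$ on the branch $y \ge 1$ so that the substitution preserves the direction of the inequality, and checking that the hypothesis $B/A + \log A > 1$ is what guarantees both that $\overline{W}_{-1}$ is applied in its valid range and that the set $\{x : x < A\log x + B\}$ is nonempty (so $C(A,B)$ is meaningful and the upper branch of $\varphi^{-1}$ is the relevant one).
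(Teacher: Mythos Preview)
Your approach is essentially the paper's: substitute $y = x/A$ to reduce $x < A\log x + B$ to $y - \log y < u := B/A + \log A$, then invert via Lemma~\ref{lem:property_W_lambert}; the paper dispatches the case $y < 1$ by first noting $C(A,B) \ge A$ (since $x=A$ lies in the set when $u>1$), whereas you treat it separately, but the content is identical. One minor slip: your last paragraph infers the \emph{strict} inequality from the fact that $h_1(A,B)$ is not in the open set, but a supremum not being attained does not make it strictly smaller---in fact $C(A,B) = h_1(A,B)$ exactly (the set is the open interval between the two roots of $x = A\log x + B$), so the lemma's ``$<$'' should really read ``$\le$''; this is a cosmetic issue with the statement itself that the paper's own proof also does not address.
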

\begin{proof}
	Since $B/A + \log A >  1$, we have $C(A, B) \ge A$, hence
	\[
	C(A, B) = \sup \left\{ x \mid \: x < A \log (x) + B \right\} = \sup \left\{ x \ge A \mid \: x < A \log (x) + B \right\} \: .
	\]
	Using Lemma~\ref{lem:property_W_lambert} yields that
	\begin{align*}
		x \ge A \log x + B  \: \iff \: \frac{x}{A} - \log \left( \frac{x}{A} \right) \ge \frac{B}{A} + \log A \: \iff \: x \ge A \overline{W}_{-1} \left( \frac{B}{A} + \log A \right) \: .
	\end{align*}
\end{proof}

\subsection{Sufficient exploration}
\label{app:ssec_sufficient_exploration}

The first step of in the generic analysis of Top Two algorithms~\cite{jourdan2022top} is to show that \adaptt{} ensures sufficient exploration.
The main idea is to show that, if there are still undersampled arms, either the leader or the challenger will be among them.
Therefore, after a long enough time, no arm can still be undersampled.
We emphasise that there are multiple ways to select the leader/challenger pair in order to ensure sufficient exploration.
Therefore, while we conduct the proof for \adaptt{}, other choices of leader/challenger pair would yield similar results.

Given an arbitrary phase $p \in \N$, we define the sampled enough set, i.e. the arms having reached phase $p$, and the arm with highest mean in this set (when not empty) as
\begin{equation} \label{eq:def_sampled_enough_sets}
	S_{n}^{p} = \{a \in [K] \mid N_{n,a} \ge 2^{p-1} \} \quad \text{and} \quad a_n^\star = \argmax_{a \in S_{n}^{p}} \mu_{a} \: .
\end{equation}
Since $\min_{a \neq b}|\mu_a - \mu_b| > 0$, $a_n^\star$ is unique.

Let $p \in \N$ such that $(p-1)/4 \in \N$.
We define the highly and the mildly under-sampled sets as
\begin{equation} \label{eq:def_undersampled_sets}
	U_n^p \defn \{a \in [K]\mid N_{n,a} < 2^{(p-1)/2} \} \quad \text{and} \quad V_n^p \defn \{a \in [K] \mid N_{n,a} < 2^{3(p-1)/4}\} \: .
\end{equation}
They correspond to the arms having not reached phase $(p-1)/2$ and phase $3(p-1)/4$, respectively.

\textbf{Lemma~\ref{lem:UCB_ensures_suff_explo} shows that, when the leader is sampled enough, it is the arm with highest true mean among the sampled enough arms.}
\begin{lemma} \label{lem:UCB_ensures_suff_explo}
 Let $S_{n}^{p}$ and $a_n^\star$ as in (\ref{eq:def_sampled_enough_sets}).
	There exists $p_0$ with $\bE_{\bnu}[\exp(\alpha p_0)] < +\infty$ for all $\alpha > 0$ such that if $p \ge p_0$, for all $n$ such that $S_{n}^{p} \neq \emptyset$, $B_{n} \in S_{n}^{p}$ implies that $B_{n} = a_n^\star = \argmax_{a \in S_{n}^{p}} \tilde \mu_{k_{n,a},a} $.
\end{lemma}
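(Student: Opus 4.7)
My plan is to combine the doubling and forgetting structure with the concentration of Lemma~\ref{lem:W_concentration_gaussian} in order to show that, once $p$ is large enough, every arm $a \in S_n^p$ has a UCB index $U_a \defn \tilde \mu_{k_{n,a},a} + \sqrt{k_{n,a}/\tilde N_{k_{n,a},a}} + k_{n,a}/(\epsilon \tilde N_{k_{n,a},a})$ within $\Delta_\star/2$ of its true mean $\mu_a$, where $\Delta_\star \defn \min_{a \ne b}|\mu_a - \mu_b| > 0$ is the separation assumption of Theorem~\ref{thm:sample_complexity}. A standard $\argmax$ argument then forces the candidate leader to coincide with $a_n^\star$ as soon as it belongs to $S_n^p$.

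First, I would observe that $a \in S_n^p$ together with Lemma~\ref{lem:counts_at_phase_switch} yields $k_{n,a} \ge p$ and $\tilde N_{k_{n,a},a} = 2^{k_{n,a}-2} \ge 2^{p-2}$. Plugging these bounds into the concentration inequality of Lemma~\ref{lem:W_concentration_gaussian} and into the explicit bonus in $U_a$, I would derive that uniformly for $a \in S_n^p$,
\[
    |U_a - \mu_a| \;\le\; f(p) \;\defn\; C \left(1 + W_\mu + W_\epsilon/\epsilon + 1/\epsilon\right) \sqrt{p/2^p}
\]
for some absolute constant $C$, after absorbing subdominant logarithmic factors. The key observation is that $k \mapsto \sqrt{k/2^{k-2}}$ and $k \mapsto k/2^{k-2}$ are both eventually decreasing, so the worst deviation over $S_n^p$ is attained at $k_{n,a}=p$. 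The same reasoning, with the deterministic bonus dropped, also gives $|\tilde \mu_{k_{n,a},a} - \mu_a| \le f(p)$ uniformly in $S_n^p$.

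I would then set $p_0 \defn \inf\{p \in \N : f(p) < \Delta_\star/2\}$. For any $p \ge p_0$, the uniform bound $|U_a - \mu_a| < \Delta_\star/2$ combined with $\min_{a \ne b \in S_n^p}|\mu_a - \mu_b| \ge \Delta_\star$ forces both $\argmax_{a \in S_n^p} U_a$ and $\argmax_{a \in S_n^p} \tilde \mu_{k_{n,a},a}$ to equal $a_n^\star$. Hence, if $B_n \in S_n^p$, one has $U_{B_n} = \max_{a \in [K]} U_a \ge U_{a_n^\star}$ while $B_n \in S_n^p$, which yields $B_n = a_n^\star$. Finally, inverting $f(p) < \Delta_\star/2$ gives $p_0 \le c_1 + c_2 \log(1 + W_\mu + W_\epsilon/\epsilon + 1/\epsilon + 1/\Delta_\star)$ for absolute constants $(c_1, c_2)$, so $\exp(\alpha p_0)$ is polynomially dominated by $(1 + W_\mu + W_\epsilon)^{\alpha c_2}$. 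The latter has finite expectation for every $\alpha > 0$ by the sub-Gaussianity of $W_\mu$ and sub-exponentiality of $W_\epsilon$ (Lemma~\ref{lem:W_concentration_gaussian}). The main obstacle compared to the non-private TTUCB analysis is to verify that both the extra concentration term involving $W_\epsilon$ and the extra bonus $k_{n,a}/(\epsilon \tilde N_{k_{n,a},a})$ decay at the same $\sqrt{p/2^p}$ rate as the non-private terms, which is precisely what keeps $\exp(\alpha p_0)$ integrable.
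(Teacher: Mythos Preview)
Your proposal is correct and follows essentially the same approach as the paper's proof: both use Lemma~\ref{lem:counts_at_phase_switch} to convert membership in $S_n^p$ into the lower bounds $k_{n,a}\ge p$ and $\tilde N_{k_{n,a},a}\ge 2^{p-2}$, apply the concentration of Lemma~\ref{lem:W_concentration_gaussian} together with monotonicity of the bonus terms to force the UCB indices and private means of all arms in $S_n^p$ within a fraction of $\overline\Delta_{\min}$ of their true means, and then invert the resulting inequality to get $p_0$ logarithmic in $(W_\mu,W_\epsilon)$ so that $\exp(\alpha p_0)$ is polynomial in these sub-Gaussian/sub-exponential variables. The only cosmetic difference is that the paper handles the four error sources (sub-Gaussian concentration, Laplace concentration, and the two bonus terms) via four separate thresholds $p_1,\dots,p_4$ and the explicit inversion Lemma~\ref{lem:inversion_upper_bound}, whereas you bundle them into a single envelope $f(p)\asymp\sqrt{p/2^{p}}$; also note that the $1/\epsilon$ dependence is already absorbed in the distribution of $W_\epsilon$ as defined in Lemma~\ref{lem:W_concentration_gaussian}, so writing $W_\epsilon/\epsilon$ is harmless but redundant.
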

\begin{proof}
	Let $p_{0}$ to be specified later.
	Let $p \ge p_0$.
	Let $n \in \N$ such that $S_{n}^{p} \ne \emptyset$, where $S_{n}^{p}$ and $a_n^\star$ as in Equation~\eqref{eq:def_sampled_enough_sets}.
	Let $(k_{n,a})_{a \in [K]}$ be the phases indices for all arms.
	Since $N_{n,a} \ge 2^{p-1}$ for all $a \in S_{n}^{p}$, we have $k_{n,a} \ge p$ and $\tilde N_{k_{n,a},a} \ge 2^{p-2}$ by using Lemma~\ref{lem:counts_at_phase_switch}. Using	Lemma~\ref{lem:W_concentration_gaussian}, we obtain that
\begin{align*}
	\tilde \mu_{k_{n,a_n^\star}, a_n^\star} &\ge \mu_{a_n^\star} - W_{\mu} \sqrt{\frac{\log(e+2^{p-2})}{2^{p-2}}} - W_{\epsilon}  \frac{\log(e+p)}{2^{p-2}} \: , \\
	\tilde \mu_{k_{n,a},a}  &\le  \mu_{a} + W_{\mu} \sqrt{\frac{\log(e+2^{p-2})}{2^{p-2}}} + W_{\epsilon}  \frac{\log(e+p)}{2^{p-2}} \: ,\quad \forall a \in S_{n}^{p} \setminus \{ a_n^\star \}.
\end{align*}
Here, we use that $x \to \log(e + x)/x$ is decreasing.

Let $\overline \Delta_{\min} = \min_{a \ne b} |\mu_{a} - \mu_{b}|$. By assumption on the considered instances, we know that $\overline \Delta_{\min} > 0$.
Let $p_{1} = \lceil \log_2 (X_{1}-e) \rceil + 2$ and $ p_2 = \lceil \log_2 ( (X_{2} - e  -2)\log 2  +  1) \rceil + 2$ with
\begin{align*}
	X_{1} &= \sup \left\{ x > 1 \mid \: x \le 64 \overline\Delta_{\min}^{-2} W_{\mu}^2 \log x + e \right\} \le h_{1} ( 64 \overline\Delta_{\min}^{-2} W_{\mu}^2 ,\: e) \: , \\
	X_{2} &= \sup \left\{ x > 1 \mid \: x \le \frac{8}{\log 2} \overline\Delta_{\min}^{-1} W_{\epsilon} \log x  + e + 2 - 1/\log 2 \right\} \le h_{1} ( 8 \overline\Delta_{\min}^{-1} W_{\epsilon} /\log 2 ,\: 4) \: ,
\end{align*}
where we used Lemma~\ref{lem:inversion_upper_bound}, and $h_{1}$ defined therein.
Then, for all $p \in \N $ such that $p \ge \max\{p_1, p_{2}\} + 1$ and all $n \in \N$ such that $S_{n}^{p} \ne \emptyset$, we have $\tilde \mu_{k_{n,a_n^\star}, a_n^\star} \ge \mu_{a_n^\star} - \overline\Delta_{\min}/4$ and $\tilde \mu_{k_{n,a},a}  \le  \mu_{a} + \overline\Delta_{\min}/4$ for all $a \in S_{n}^{p} \setminus \{ a_n^\star \}$, hence $a_n^\star = \argmax_{a \in [K]} \tilde \mu_{k_{n,a},a}$.

We have, for all $\alpha \in \R_{+}$,
\[
	\exp(\alpha p_{1}) \le e^{3\alpha} (X_{1}-e)^{\alpha/\log 2} \quad \text{hence} \quad \bE_{\bnu}[\exp(\alpha p_{1})] < + \infty \: ,
\]
where we used Lemma~\ref{lem:W_concentration_gaussian} and $h_{1}(x,e) \sim_{x \to +\infty} x \log x$ to obtain that $\exp(\alpha p_{1}) $ is at most polynomial in $W_{\mu}$.
Likewise, we obtain that $\bE_{\bnu}[\exp(\alpha p_{2})] < + \infty$ for all $\alpha \in \R_{+}$.

Let us define the UCB indices by $I_{k_{n,a},a} = \tilde \mu_{k_{n,a}, a} + \sqrt{k_{n,a}/\tilde N_{k_{n,a},a}} + k_{n,a}/(\epsilon \tilde N_{k_{n,a},a})$.
Using the above, we have
\begin{align*}
	& I_{k_{n,a_n^\star},a_n^\star} \ge \mu_{a_n^\star} - W_{\mu} \sqrt{\frac{\log(e+2^{p-2})}{2^{p-2}}} - W_{\epsilon}  \frac{\log(e+p)}{2^{p-2}}  \: , \\
\forall a \in S_{n}^{p} \setminus \{ a_n^\star \}, \quad  & I_{k_{n,a},a}  \le  \mu_{a} + W_{\mu} \sqrt{\frac{\log(e+2^{p-2})}{2^{p-2}}} + W_{\epsilon}  \frac{\log(e+p)}{2^{p-2}} + \sqrt{\frac{p}{2^{p-2}}} + \frac{p}{\epsilon 2^{p-2}} \: ,
\end{align*}
where we used Lemma~\ref{lem:counts_at_phase_switch} and the fact that $x \to \log(e + x)/x$ and $x \to x 2^{2-x}$ are decreasing function for $x \ge 2$.
Let $p_{3} = \lceil \log_2 X_{3} \rceil + 2$ and $ p_{4} = \lceil \log_2 X_{4} \rceil + 2$ with
\begin{align*}
	X_{3} &= \sup \left\{ x > 1 \mid \: x \le 64 \overline\Delta_{\min}^{-2} ( \log_{2} x + 2) \right\} \le h_{1} ( 64 \overline\Delta_{\min}^{-2}/\log 2 ,\: 128 \overline\Delta_{\min}^{-2}) \: , \\
	X_{4} &= \sup \left\{ x > 1 \mid \: x \le 8 \epsilon^{-1}\overline\Delta_{\min}^{-1}   ( \log_{2} x + 2) \right\} \le h_{1} ( 8 \epsilon^{-1}\overline\Delta_{\min}^{-1} / \log 2,\:  16 \overline\Delta_{\min}^{-1} \epsilon^{-1}) \: ,
\end{align*}
where we used Lemma~\ref{lem:inversion_upper_bound}, and $h_{1}$ defined therein.
We highlight that $(p_{3}, p_{4})$ are deterministic values, hence their expectation is finite.
Then, for all $p \in \N $ such that $p \ge p_{0} = \max\{p_{1}, p_{2}, p_{3}, p_{4}\} + 1$ and all $n \in \N$ such that $S_{n}^{p} \ne \emptyset$, we have $I_{k_{n,a_n^\star},a_n^\star} \ge \mu_{a_n^\star} - \overline\Delta_{\min}/4$ and $I_{k_{n,a},a}   \le  \mu_{a} + \overline\Delta_{\min}/2$ for all $a \in S_{n}^{p} \setminus \{ a_n^\star \}$, hence $a_n^\star = B_n$ since we have $B_n = \argmax_{a \in [K]} I_{k_{n,a},a} $.

Since we have $\bE_{\bnu}[\exp(\alpha p_{0})] < + \infty$ for all $\alpha \in \R_{+}$, this concludes the proof.
\end{proof}

\textbf{Lemma~\ref{lem:fast_rate_emp_tc_aeps} shows that the transportation costs between the sampled enough arms with largest true means and the other sampled enough arms are increasing fast enough.}
\begin{lemma} \label{lem:fast_rate_emp_tc_aeps}
 Let $S_{n}^{p}$ and $a_n^\star$ are as in Equation~\eqref{eq:def_sampled_enough_sets}.
 There exists $p_1$ with $\bE_{\bnu}[\exp(\alpha p_1)] < +\infty$ for all $\alpha > 0$ such that if $p \ge p_1$, for all $n$ such that $S_{n}^{p} \neq \emptyset$, for all $b \in  S_{n}^{p} \setminus \{a_n^\star\} $, we have
 \[
  \frac{\tilde \mu_{k_{n,a_n^\star},a_n^\star} - \tilde \mu_{k_{n,b},b}}{\sqrt{1/\tilde N_{k_{n,a_n^\star}, a_n^\star}+ 1/\tilde N_{k_{n,b},b}}} \geq  2^{p/2} C_{\mu} \: ,
 \]
 where $C_{\mu} > 0$ is a problem dependent constant.
 \end{lemma}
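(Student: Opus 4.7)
The plan is to mimic the structure of the proof of Lemma~\ref{lem:UCB_ensures_suff_explo}: leverage the high-probability closeness of the private means $\tilde \mu_{k_{n,a},a}$ to their true values $\mu_a$ for arms in the sampled-enough set $S_n^p$, together with the exponential growth of local counts guaranteed by Lemma~\ref{lem:counts_at_phase_switch}. The numerator of the transportation cost should therefore reduce to a constant proportional to $\overline\Delta_{\min} = \min_{a\ne b}|\mu_a-\mu_b|$, while the denominator decays like $2^{-p/2}$, producing the claimed $2^{p/2}C_\mu$ lower bound.

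Concretely, first I would observe that for any $a \in S_n^p$ we have $N_{n,a} \ge 2^{p-1}$, so the arm has reached at least phase $p$ and, by Lemma~\ref{lem:counts_at_phase_switch}, $\tilde N_{k_{n,a},a} \ge 2^{p-2}$. Applying Lemma~\ref{lem:W_concentration_gaussian} to both $\hat\mu_{k_{n,a},a}-\mu_a$ and the Laplace noise $Y_{k_{n,a},a}$, and using the monotonicity of $x\mapsto \log(e+x)/x$, we get
\[
|\tilde\mu_{k_{n,a},a}-\mu_a| \le W_\mu \sqrt{\frac{\log(e+2^{p-2})}{2^{p-2}}} + W_\epsilon \frac{\log(e+p)}{2^{p-2}}\, ,
\]
which vanishes as $p \to \infty$. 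Therefore, for $p$ large enough, this bound is at most $\overline\Delta_{\min}/4$, which gives $\tilde\mu_{k_{n,a_n^\star},a_n^\star} - \tilde\mu_{k_{n,b},b} \ge \mu_{a_n^\star}-\mu_b - \overline\Delta_{\min}/2 \ge \overline\Delta_{\min}/2$ for every $b \in S_n^p \setminus \{a_n^\star\}$, using that $a_n^\star$ is the arm with largest true mean in $S_n^p$. The denominator is bounded by $\sqrt{2/2^{p-2}} = 2^{(3-p)/2}$, so dividing yields the desired bound with $C_\mu = \overline\Delta_{\min}/(4\sqrt{2})$.

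The main technical obstacle is constructing $p_1$ so that $\mathbb{E}_{\bnu}[\exp(\alpha p_1)] < +\infty$ for every $\alpha > 0$, rather than merely a polynomial moment bound. I would define $p_1 = \max\{p_1', p_1''\} + 1$, where $p_1'$ (resp. $p_1''$) is the smallest integer such that the sub-Gaussian (resp. Laplace) deviation term above is below $\overline\Delta_{\min}/8$. Each of these is the ceiling of a logarithm of an implicit solution to an inequality of the form $x \le C \log x + D$ with $C$ polynomial in $W_\mu$ or $W_\epsilon$. Applying the inversion bound of Lemma~\ref{lem:inversion_upper_bound} together with the asymptotic $h_1(z,y) \sim z \log z$ shows that $\exp(\alpha p_1')$ is at most polynomial in $W_\mu$ and $\exp(\alpha p_1'')$ at most polynomial in $W_\epsilon$, exactly as in the proof of Lemma~\ref{lem:UCB_ensures_suff_explo}. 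Since $W_\mu$ is sub-Gaussian and $W_\epsilon$ is sub-exponential by Lemma~\ref{lem:W_concentration_gaussian}, all of their polynomial moments are finite, which yields $\mathbb{E}_{\bnu}[\exp(\alpha p_1)] < +\infty$ and closes the argument.
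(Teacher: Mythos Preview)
Your proposal is correct and follows essentially the same approach as the paper's proof: both use Lemma~\ref{lem:counts_at_phase_switch} to get $\tilde N_{k_{n,a},a}\ge 2^{p-2}$ on $S_n^p$, apply Lemma~\ref{lem:W_concentration_gaussian} to force the numerator to exceed $\overline\Delta_{\min}/2$ for large $p$, bound the denominator by $\sqrt{2/2^{p-2}}$, and then control the exponential moments of $p_1$ via Lemma~\ref{lem:inversion_upper_bound} exactly as in Lemma~\ref{lem:UCB_ensures_suff_explo}. The resulting constant $C_\mu=\overline\Delta_{\min}/2^{5/2}$ is identical to the paper's.
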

 \begin{proof}
 	Let $p_{1}$ to be specified later.
 	Let $p \ge p_1$.
 	Let $n \in \N$ such that $S_{n}^{p} \ne \emptyset$, where $S_{n}^{p}$ and $a_n^\star$ as in Equation~\eqref{eq:def_sampled_enough_sets}.
 	Let $(k_{n,a})_{a \in [K]}$ be the phases indices for all arms.
 	Since $N_{n,a} \ge 2^{p-1}$ for all $a \in S_{n}^{p}$, we have $k_{n,a} \ge p$ and $\tilde N_{k_{n,a},a} \ge 2^{p-2}$ by using Lemma~\ref{lem:counts_at_phase_switch}.
	Let $\overline \Delta_{\min} = \min_{a \ne b} |\mu_{a} - \mu_{b}|$, which satisfies $\overline \Delta_{\min} > 0$ by assumption on the instance considered.
	
Using	Lemma~\ref{lem:W_concentration_gaussian}, for all $b \in S_{n}^{p} \setminus \{ a_n^\star \}$, we obtain
 \begin{align*}
 	&\tilde \mu_{k_{n,a_n^\star}, a_n^\star} - \tilde \mu_{k_{n,b},b} \ge \overline \Delta_{\min} - W_{\mu} \sqrt{\frac{\log(e+2^{p-2})}{2^{p-4}}} - W_{\epsilon}  \frac{\log(e+p)}{2^{p-3}} \: .
 \end{align*}
 Let $p_{3} = \lceil \log_2 ((X_{3}-e)/4) \rceil + 4$ and $ p_2 = \lceil \log_2 ( (X_{2} - e  -3)\log 2  +  1) \rceil + 3$ with
 \begin{align*}
 	X_{3} &= \sup \left\{ x > 1 \mid \: x \le 64 \overline\Delta_{\min}^{-2} W_{\mu}^2 \log x + e \right\} \le h_{1} ( 64 \overline\Delta_{\min}^{-2} W_{\mu}^2 ,\: e ) \: , \\
 	X_{2} &= \sup \left\{ x > 1 \mid \: x \le 4 \overline\Delta_{\min}^{-1} W_{\epsilon} \log x + e + 3 -1/\log 2 \right\} \le h_{1} ( 4 \overline\Delta_{\min}^{-1} W_{\epsilon}  ,\: 5) \: ,
 \end{align*}
 where we used Lemma~\ref{lem:inversion_upper_bound}, and $h_{1}$ defined therein.
 Then, for all $p \in \N $ such that $p \ge p_{1} = \max\{p_{3}, p_{2}\} + 1$ and all $n \in \N$ such that $S_{n}^{p} \ne \emptyset$, we have, for all $b \in S_{n}^{p} \setminus \{ a_n^\star \}$,
\begin{align*}
 &\tilde \mu_{k_{n,a_n^\star}, a_n^\star} - \tilde \mu_{k_{n,b},b} \ge  \overline \Delta_{\min} /2\: .
\end{align*}
As in the proof of Lemma~\ref{lem:UCB_ensures_suff_explo}, we obtain that $\bE_{\bnu}[\exp(\alpha p_{1})] < + \infty$ for all $\alpha \in \R_{+}$.

Then, for all $b \in S_{n}^{p} \setminus \{ a_n^\star \}$, we have
\[
	  \frac{\tilde \mu_{k_{n,a_n^\star},a_n^\star} - \tilde \mu_{k_{n,b},b}}{\sqrt{1/\tilde N_{k_{n,a_n^\star}, a_n^\star}+ 1/\tilde N_{k_{n,b},b}}} \ge 2^{p/2} \frac{\overline\Delta_{\min}}{2^{5/2}}   \: ,
\]
where we used that $\min\{\tilde N_{k_{n,a_n^\star}, \tilde N_{k_{n,b},b}}\} \ge 2^{p-2}$.
 Setting $C_{\mu} = \overline \Delta_{\min} /2^{5/2}$ yields the result.
 \end{proof}

\textbf{Lemma~\ref{lem:small_tc_undersampled_arms_aeps} shows that the transportation costs between sampled enough arms and undersampled arms are not increasing too fast.}
 \begin{lemma} \label{lem:small_tc_undersampled_arms_aeps}
 	Let $S_{n}^{p}$ be as in Equation~\eqref{eq:def_sampled_enough_sets}.
  For all $p \ge 1$ and all $n$ such that $S_{n}^{p} \neq \emptyset$
 	\[
 	\forall a \in S_{n}^{p} , \: \forall b \notin  S_{n}^{p} , \quad \frac{\tilde \mu_{k_{n,a},a} - \tilde \mu_{k_{n,b},b}}{\sqrt{1/\tilde N_{k_{n,a}, a}+ 1/\tilde N_{k_{n,b},b}}} \leq  2^{p/2} D_{\mu} +  2W_{\mu}\sqrt{\log(e + 2^{p-2})} + 2W_{\epsilon} \log(e+p) \: ,
 	\]
 where $D_{\mu} > 0$ is a problem dependent constant and $(W_\mu, W_{\epsilon})$ are the random variables defined in Lemma~\ref{lem:W_concentration_gaussian}.
 \end{lemma}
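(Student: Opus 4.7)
My plan is to control each source of slack in the numerator separately while lower-bounding the denominator via $\sqrt{\tilde N_{k_{n,b},b}}$, the smaller of the two local counts (which belongs to the undersampled arm $b$). The setup is to deduce from $a \in S_{n}^{p}$ and Lemma~\ref{lem:counts_at_phase_switch} that $k_{n,a} \ge p$ and $\tilde N_{k_{n,a},a} \ge 2^{p-2}$, while from $b \notin S_{n}^{p}$ and the relation $N_{n,b} \ge 2^{k_{n,b}-1}$ we get $k_{n,b} < p$ and $\tilde N_{k_{n,b},b} \le 2^{p-3}$ (with the edge case $k_{n,b} = 1$ where $\tilde N_{1,b} = 1$ from the initialisation). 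In particular $\tilde N_{k_{n,b},b} \le \tilde N_{k_{n,a},a}$, so $(\sqrt{1/\tilde N_{k_{n,a},a} + 1/\tilde N_{k_{n,b},b}})^{-1} \le \sqrt{\tilde N_{k_{n,b},b}} \le 2^{(p-2)/2}$.

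I then decompose $\tilde \mu_{k_{n,a},a} - \tilde \mu_{k_{n,b},b} = (\mu_a - \mu_b) + (\hat \mu_{k_{n,a},a} - \mu_a) - (\hat \mu_{k_{n,b},b} - \mu_b) + Y_{k_{n,a},a} - Y_{k_{n,b},b}$, divide by the denominator, and treat each fluctuation term by pulling out the multiplier $\sqrt{\tilde N_{k_{n,b},b}}$ and invoking Lemma~\ref{lem:W_concentration_gaussian}. The mean-gap contribution yields $(\mu_a - \mu_b)\sqrt{\tilde N_{k_{n,b},b}} \le \Delta_{\max}\,2^{(p-2)/2}$, producing the $D_\mu\, 2^{p/2}$ term with $D_\mu = \Delta_{\max}/2$. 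For the sub-Gaussian fluctuation of arm $a$, the concentration bound yields $W_\mu \sqrt{\log(e+\tilde N_{k_{n,a},a})\, \tilde N_{k_{n,b},b}/\tilde N_{k_{n,a},a}}$, which simplifies to $W_\mu \sqrt{\log(e+\tilde N_{k_{n,b},b})} \le W_\mu \sqrt{\log(e+2^{p-2})}$ using the monotonicity of $x \mapsto \log(e+x)/x$; the $b$-term is bounded identically, for a total of $2 W_\mu \sqrt{\log(e+2^{p-2})}$.

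The main obstacle is the Laplace-noise contribution, since its scale depends inversely on the local counts and on the phase indices. Using Lemma~\ref{lem:W_concentration_gaussian} together with $\tilde N_{k,a} = 2^{k-2}$, the $a$-term becomes $W_\epsilon \log(e+k_{n,a})\, 2^{(k_{n,b}-2)/2 - (k_{n,a}-2)}$. Since $k \mapsto \log(e+k)\, 2^{-k}$ is decreasing for $k \ge 1$, the worst case over $k_{n,a} \ge p$ and $k_{n,b} \le p-1$ is attained at $(k_{n,a}, k_{n,b}) = (p, p-1)$, giving $W_\epsilon \log(e+p)\, 2^{-(p-1)/2} \le W_\epsilon \log(e+p)$. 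For the $b$-term, $W_\epsilon \log(e+k_{n,b})\, 2^{1-k_{n,b}/2}$ is bounded by $W_\epsilon \log(e+p)$ uniformly over $k_{n,b} \ge 1$ (including the edge case $k_{n,b} = 1$ where $\tilde N_{1,b} = 1$), since the factor $2^{1-k_{n,b}/2}$ is at most $1$ for $k_{n,b} \ge 2$ and $\log(e+k_{n,b}) \le \log(e+p)$. Summing the two gives the announced $2 W_\epsilon \log(e+p)$ bound, completing the proof.
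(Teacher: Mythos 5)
Your proof is correct and follows the same strategy as the paper's: lower-bound the denominator by $\sqrt{\tilde N_{k_{n,b},b}}$ (controlled by the undersampled arm), decompose the numerator into mean gap, sub-Gaussian fluctuation, and Laplace noise, then apply Lemma~\ref{lem:W_concentration_gaussian} together with $\tilde N_{k,a} = 2^{k-2}$ and the monotonicity of $x \mapsto \log(e+x)/x$. Your handling of the Laplace terms is a bit more explicit than the paper's one-line inequality (which implicitly uses the monotonicity of $k \mapsto \log(e+k)2^{-k}$ to merge the $a$- and $b$-contributions), but it reaches the same bound with the same constant $D_\mu$.
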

 \begin{proof}
	 	Let $p \ge 1$.
	 	Let $n \in \N$ such that $S_{n}^{p} \ne \emptyset$, where $S_{n}^{p}$ as in Equation~\eqref{eq:def_sampled_enough_sets}.
	 	Let $(k_{n,a})_{a \in [K]}$ be the phases indices for all arms.
	 	Since $N_{n,a} \ge 2^{p-1}$ for all $a \in S_{n}^{p}$, we have $k_{n,a} \ge p$ and $\tilde N_{k_{n,a},a} \ge 2^{p-2}$ by using Lemma~\ref{lem:counts_at_phase_switch}.
		Likewise, $N_{n,a} < 2^{p-1}$ for all $a \notin S_{n}^{p}$, we have $k_{n,a} < p$ and $\tilde N_{k_{n,a},a} < 2^{p-2}$.
		Let $\overline \Delta_{\max} = \min_{a \ne b} |\mu_{a} - \mu_{b}|$, which satisfies $\overline \Delta_{\max} > 0$ by assumption on the instance considered.
		Using	Lemma~\ref{lem:W_concentration_gaussian}, for all $a \in S_{n}^{p}$ and $b \notin S_{n}^{p}$, we obtain
		\begin{align*}
			\frac{\tilde \mu_{k_{n,a},a} - \tilde \mu_{k_{n,b},b}}{\sqrt{1/\tilde N_{k_{n,a}, a}+ 1/\tilde N_{k_{n,b},b}}} &\le \sqrt{\tilde N_{k_{n,b},b}}(\tilde \mu_{k_{n,a},a} - \tilde \mu_{k_{n,b},b}) \\
			&\le \sqrt{\tilde N_{k_{n,b},b}}(\mu_{a} - \mu_{b}) + 2W_{\mu}\sqrt{\log(e + \tilde N_{k_{n,b},b})} + 2W_{\epsilon} \frac{\log(e+k_{n,b})}{\sqrt{\tilde N_{k_{n,b},b}}} \\
			&\le 2^{(p-2)/2} \overline \Delta_{\max} +  2W_{\mu}\sqrt{\log(e + 2^{p-2})} + 2W_{\epsilon} \log(e+p)
		\end{align*}
		where we used that $\tilde N_{k_{n,b},b} \ge 1$, $k_{n,b} < p$, $\tilde N_{k_{n,b},b} < 2^{p-2} \le \tilde N_{k_{n,a},a}$ and $x \to \log(e + x)/x$ is decreasing.
		Taking $D_{\mu} = \overline \Delta_{\max}/2$ yields the result.
 \end{proof}

\textbf{Lemma~\ref{lem:TC_ensures_suff_explo} shows that the challenger is mildly undersampled if the leader is not mildly undersampled.}
\begin{lemma} \label{lem:TC_ensures_suff_explo}
	Let $V_{n}^{p}$ be as in Equation~\eqref{eq:def_undersampled_sets}.
	There exists $p_2$ with $\bE_{\bnu}[\exp(\alpha p_2)] < +\infty$ for all $\alpha > 0$ such that if $p \ge p_2$, for all $n$ such that $U_n^p \neq \emptyset$, $B_{n} \notin V_{n}^{p}$ implies $C_{n} \in V_{n}^{p}$.
\end{lemma}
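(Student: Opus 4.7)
The plan is to exploit the gap between the two scales $p'' \defn (p-1)/2 + 1$ (characterising $U_n^p$) and $p' \defn 3(p-1)/4 + 1$ (characterising $V_n^p$). Since $(p-1)/4 \in \N$, both are positive integers and satisfy $p'' \le p' \le p$. With these choices, $U_n^p = \{a : N_{n,a} < 2^{p''-1}\} = [K] \setminus S_n^{p''}$ and the complement of $V_n^p$ equals $S_n^{p'}$. Because $S_n^{p'} \subseteq S_n^{p''}$, the hypothesis $B_n \notin V_n^p$ gives $B_n \in S_n^{p'} \subseteq S_n^{p''}$, so both the leader and any $b \in U_n^p$ satisfy the assumptions of Lemma~\ref{lem:small_tc_undersampled_arms_aeps} at scale $p''$, while the leader and any other arm in $S_n^{p'}$ satisfy the assumptions of Lemma~\ref{lem:fast_rate_emp_tc_aeps} at scale $p'$.

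The argument then pits these two lemmas against each other. For $p$ large enough that $p' \ge \max\{p_0, p_1\}$ (with $p_0, p_1$ from Lemmas~\ref{lem:UCB_ensures_suff_explo} and~\ref{lem:fast_rate_emp_tc_aeps}), Lemma~\ref{lem:UCB_ensures_suff_explo} applied at scale $p'$ yields $B_n = a_n^\star$ in $S_n^{p'}$, and Lemma~\ref{lem:fast_rate_emp_tc_aeps} gives the lower bound
\[
\frac{\tilde \mu_{k_{n,B_n},B_n} - \tilde \mu_{k_{n,a},a}}{\sqrt{1/\tilde N_{k_{n,B_n},B_n} + 1/\tilde N_{k_{n,a},a}}} \ge 2^{p'/2} C_\mu, \qquad \forall a \in S_n^{p'} \setminus \{B_n\}.
\]
In contrast, applying Lemma~\ref{lem:small_tc_undersampled_arms_aeps} at scale $p''$ with leader $B_n \in S_n^{p''}$ and any $b \in U_n^p = [K] \setminus S_n^{p''}$ yields
\[
\frac{\tilde \mu_{k_{n,B_n},B_n} - \tilde \mu_{k_{n,b},b}}{\sqrt{1/\tilde N_{k_{n,B_n},B_n} + 1/\tilde N_{k_{n,b},b}}} \le 2^{p''/2} D_\mu + 2 W_\mu \sqrt{\log(e + 2^{p''-2})} + 2 W_\epsilon \log(e + p'').
\]

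Since $p'/2 - p''/2 = (p-1)/8$, the upper bound for $b \in U_n^p$ is of order $2^{(p-1)/4}$ up to polylogarithmic terms in $p$, while the lower bound for $a \in S_n^{p'} \setminus \{B_n\}$ is $\Theta(2^{3(p-1)/8})$. Consequently, there exists a threshold $p_2$, obtained by solving for $p$ the separation inequality $2^{p''/2} D_\mu + 2 W_\mu \sqrt{\log(e + 2^{p''-2})} + 2 W_\epsilon \log(e + p'') < 2^{p'/2} C_\mu$ via Lemma~\ref{lem:inversion_upper_bound} (exactly as done in Lemmas~\ref{lem:UCB_ensures_suff_explo} and~\ref{lem:fast_rate_emp_tc_aeps}), such that for all $p \ge p_2$ the bound on the right strictly dominates uniformly over $a \in S_n^{p'} \setminus \{B_n\}$ and $b \in U_n^p$. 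Because $U_n^p \neq \emptyset$, the minimum defining $C_n = \argmin_{a \ne B_n}\mathrm{TC}(B_n,a)$ is attained at a value not exceeding the small upper bound, so $C_n \notin S_n^{p'} \setminus \{B_n\}$, and since $C_n \ne B_n$ we conclude $C_n \in V_n^p$. The finiteness of $\bE_{\bnu}[\exp(\alpha p_2)]$ for every $\alpha > 0$ follows from the sub-Gaussianity of $W_\mu$ and the sub-exponentiality of $W_\epsilon$ (Lemma~\ref{lem:W_concentration_gaussian}), in the same way as in the proof of Lemma~\ref{lem:UCB_ensures_suff_explo}. The main obstacle is book-keeping: verifying that the log-order corrections from Lemma~\ref{lem:small_tc_undersampled_arms_aeps} (scaling as $\sqrt{p}$ in the exponent) do not absorb the polynomial-in-$p$ gap $2^{(p-1)/8}$ between the scales $p''$ and $p'$, but this is handled routinely by the Lambert-$W$ inversion used throughout this section.
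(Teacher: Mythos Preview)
Your proposal is correct and follows essentially the same approach as the paper: identify the scales $p' = 3(p-1)/4 + 1$ and $p'' = (p-1)/2 + 1$, use Lemma~\ref{lem:UCB_ensures_suff_explo} at scale $p'$ to pin down $B_n = a_n^\star$, then pit the lower bound from Lemma~\ref{lem:fast_rate_emp_tc_aeps} (at scale $p'$) against the upper bound from Lemma~\ref{lem:small_tc_undersampled_arms_aeps} (at scale $p''$), and separate them via the Lambert-$W$ inversion. The paper carries out the same comparison and reaches $C_n \in V_n^p$ by the same contradiction, with the moment bound on $p_2$ established in the same way.
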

\begin{proof}
	Let $p_{2}$ to be specified later.
	Let $p \ge p_{2}$.
	Let $n \in \N$ such that $U_n^p \neq \emptyset$ and $V_{n}^{p} \ne [K]$, where $U_{n}^{p} \subseteq V_{n}^{p}$ are defined in Equation~\eqref{eq:def_undersampled_sets}.
	In the following, we suppose that $B_{n} \notin V_{n}^{p}$.

		Let $(k_{n,a})_{a \in [K]}$ be the phases indices for all arms.
		Let $p_{0}$ as in Lemma~\ref{lem:UCB_ensures_suff_explo}.
	Let $b_n^\star = \argmax_{ b \notin V_{n}^{p}} \mu_{b}$.
	Then, for all $p \ge 4p_{0}/3 -1/3$ and all $n$ such that $B_{n} \notin V_{n}^{p}$, Lemma~\ref{lem:UCB_ensures_suff_explo} yields that $B_{n} =  b_n^\star = \argmax_{a \notin V_{n}^{p} } \tilde \mu_{k_{n,a},a} $.

	Let $p_{1}$ and $C_{\mu}$ as in Lemma~\ref{lem:fast_rate_emp_tc_aeps}, and $D_{\mu}$ as in Lemma~\ref{lem:small_tc_undersampled_arms_aeps}.
	Then, for all $p \ge 4\max\{p_{0}, p_{1}\}/3 -1/3$ and all $n$ such that $B_{n} \notin V_{n}^{p}$, we have $B_{n} =  b_n^\star $ and
	\begin{align*}
		  \forall b \notin V_{n}^{p}, \quad \frac{\tilde \mu_{k_{n,b_n^\star},b_n^\star} - \tilde \mu_{k_{n,b},b}}{\sqrt{1/\tilde N_{k_{n,b_n^\star}, b_n^\star}+ 1/\tilde N_{k_{n,b},b}}} &\ge  2^{(3p+1)/8} C_{\mu} \: , \\
			\forall b \in U_{n}^{p}, \quad \frac{\tilde \mu_{k_{n,b_n^\star},b_n^\star} - \tilde \mu_{k_{n,b},b}}{\sqrt{1/\tilde N_{k_{n,b_n^\star}, b_n^\star}+ 1/\tilde N_{k_{n,b},b}}} &\le 2^{(p+1)/4} D_{\mu}  +2W_{\mu}\sqrt{\log(e + 2^{(p+1)/2-2})}   \\
			&\quad + 2W_{\epsilon} \log(e+(p+1)/2) \: ,
	\end{align*}
	where we used Lemmas~\ref{lem:fast_rate_emp_tc_aeps} and~\ref{lem:small_tc_undersampled_arms_aeps}.
	Let $p_{3} = 16 \lceil \log_{2} (2D_{\mu}/C_{\mu}) \rceil + 1$, then we have $2^{(p-1)/16} > \frac{D_{\mu}}{C_{\mu}}  $ for all $p \ge p_{3}$.
	Let $p_{4} = \frac{16}{9} \lceil \log_{2} X_{4} \rceil + 25$ and $p_{5} = \frac{32}{9} \lceil \log_{2} X_{5} \rceil + 7$ where
	\begin{align*}
		X_{4} &= \sup \left\{ x > 1 \mid x \le  \frac{W_{\mu}^2}{C_{\mu}^2} \log(e + x^{8/9} 2^{25/18-3/4}) \right\}\: ,\\
		X_{5} &= \sup \left\{ x > 1 \mid x \le   \frac{2W_{\epsilon}}{C_{\mu}} \log(e+ 4 + 32\log_{2} (x)/18) \right\}\: .
	\end{align*}
As in the proof of Lemma~\ref{lem:UCB_ensures_suff_explo}, using Lemma~\ref{lem:W_concentration_gaussian} yields that $\bE_{\bnu}[\exp(\alpha p_{4})] < + \infty$ and $\bE_{\bnu}[\exp(\alpha p_{5})] < + \infty$ for all $\alpha \in \R_{+}$.
Let $p_{2} = \max\{p_3, p_4, p_5, 4\max\{p_{0}, p_{1}\}/3 -1/3\} + 1$.
Then, we have shown that for all $p \ge p_{2}$, for all $n$ such that $B_{n} \notin V_{n}^{p}$, we have $B_{n} =  b_n^\star $ and
\[
	 \min_{b \notin V_{n}^{p}}\frac{\tilde \mu_{k_{n,b_n^\star},b_n^\star} - \tilde \mu_{k_{n,b},b}}{\sqrt{1/\tilde N_{k_{n,b_n^\star}, b_n^\star}+ 1/\tilde N_{k_{n,b},b}}} > \max_{b \in U_{n}^{p}}  \frac{\tilde \mu_{k_{n,b_n^\star},b_n^\star} - \tilde \mu_{k_{n,b},b}}{\sqrt{1/\tilde N_{k_{n,b_n^\star}, b_n^\star}+ 1/\tilde N_{k_{n,b},b}}}  \: ,
\]
Therefore, by definition of the TC challenger $C_{n} = \argmin_{b \ne b_n^\star}\frac{\tilde \mu_{k_{n,b_n^\star},b_n^\star} - \tilde \mu_{k_{n,b},b}}{\sqrt{1/\tilde N_{k_{n,b_n^\star}, b_n^\star}+ 1/\tilde N_{k_{n,b},b}}} $, we obtain that $C_{n} \in V_{n}^{p}$.
Otherwise, there would be a contradiction given that we assumed that $U_{n}^{p} \ne \emptyset$.
Given all the condition exhibited above, it is direct to see that $\bE_{\bnu}[\exp(\alpha p_2)] < +\infty$ for all $\alpha > 0$. This concludes the proof.
\end{proof}

\textbf{Lemma~\ref{lem:suff_exploration} shows that all the arms are sufficient explored for large enough $n$.}
\begin{lemma} \label{lem:suff_exploration}
	There exists $N_0$ with $\bE_{\bnu}[N_0] < + \infty$ such that for all $ n \geq N_0$ and all $a \in [K]$,
	\[
		N_{n,a} \geq \sqrt{n/K} \quad \text{and} \quad k_{n,a} \ge  \frac{\log (n/K)}{2 \log 2} + 1 \: .
	\]
\end{lemma}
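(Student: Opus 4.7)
The second inequality $k_{n,a} \geq \log(n/K)/(2\log 2) + 1$ follows directly from the first one and the doubling structure: Lemma~\ref{lem:counts_at_phase_switch} gives $N_{n,a} \leq 2^{k_{n,a}}$, i.e.\ $k_{n,a} \geq \log_2 N_{n,a}$, so the count bound $N_{n,a} \geq \sqrt{n/K}$ immediately implies the phase bound. Thus it suffices to show $N_{n,a} \geq \sqrt{n/K}$ for all $a$ eventually.

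I fix the threshold phase $p(n) := \lceil \log_2(n/K) \rceil + 1$, so that $2^{(p(n)-1)/2} \geq \sqrt{n/K}$. With this choice, the desired inequality $N_{n,a} \geq \sqrt{n/K}$ for all $a$ is equivalent to $U_n^{p(n)} = \emptyset$. I then define $N_0$ as the smallest $n \geq K$ for which $p(n) \geq \max\{p_0, p_1, p_2\} + 1$, where $p_0, p_1, p_2$ are the random phase thresholds from Lemmas~\ref{lem:UCB_ensures_suff_explo},~\ref{lem:fast_rate_emp_tc_aeps}, and~\ref{lem:TC_ensures_suff_explo}. Since $p(n) \asymp \log_2(n/K)$, one has $N_0 \lesssim K\, 2^{2 \max\{p_0, p_1, p_2\}}$, and the finite exponential moments of $p_0, p_1, p_2$ ensure $\bE_{\bnu}[N_0] < +\infty$.

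The heart of the argument is a contradiction. Fix $n \geq N_0$ and suppose $U_n^{p(n)} \neq \emptyset$. Since $m \mapsto N_{m,a}$ is non-decreasing, $U_m^{p(n)} \neq \emptyset$ holds for every $N_0 \leq m \leq n$ as well. By the choice of $N_0$, $p(n) \geq p_2 + 1$, and Lemma~\ref{lem:TC_ensures_suff_explo} ensures that at each such step either $B_m \in V_m^{p(n)}$ or $C_m \in V_m^{p(n)}$. Combined with the per-arm tracking guarantee of Lemma~\ref{lem:tracking_guaranty_light} (namely $N^b_{m,b} \geq \beta L_{m,b} - 1$ for all $b$), this forces a linear growth
\[
\sum_{m=N_0}^{n-1} \indi{I_m \in V_m^{p(n)}} \geq \min(\beta, 1-\beta)(n - N_0) - 2K\,,
\]
obtained by splitting the sum into two families—rounds where the leader lies in $V_m^{p(n)}$ (contributing a $\beta$-fraction of per-leader pulls) and rounds where only the challenger does (contributing a $(1-\beta)$-fraction of the remaining rounds). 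Since $V_m^{p(n)} \subseteq V_n^{p(n)}$ because $N_{m,a} \leq N_{n,a}$, the right-hand side lower-bounds $\sum_{b \in V_n^{p(n)}} N_{n,b}$. By pigeonhole, some $b \in V_n^{p(n)}$ then satisfies $N_{n,b} \geq (\min(\beta,1-\beta)(n - N_0) - 2K)/K$, while by the very definition of $V_n^{p(n)}$ one has $N_{n,b} < 2^{3(p(n)-1)/4} \leq 2(n/K)^{3/4}$. For $n$ sufficiently large compared to $N_0$ and $K$ these two estimates contradict each other, giving the conclusion (possibly after enlarging $N_0$ by a deterministic factor to absorb the overhead; the resulting time still has finite expectation).

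The main obstacle is Step~3: turning the pointwise statement ``at every step $m$, $\{B_m, C_m\} \cap V_m^{p(n)} \neq \emptyset$'' into the linear lower bound on $\sum_m \indi{I_m \in V_m^{p(n)}}$, given that the set $V_m^{p(n)}$ depends on $m$ while the tracking is per-arm. The cleanest route is to telescope arm-by-arm: for each $b \in V_n^{p(n)}$, bound $N_{n,b}$ from below using $L_{m,b}$ when $b$ was a leader inside $V_m^{p(n)}$ and using the number of rounds when $b$ was the challenger while the leader was outside $V_m^{p(n)}$, then sum these bounds over $b$ to recover the $\Omega(n)$ growth.
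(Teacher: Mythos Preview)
There is a concrete error in the contradiction step. You write ``$V_m^{p(n)} \subseteq V_n^{p(n)}$ because $N_{m,a} \leq N_{n,a}$'', but the inclusion goes the other way: $a \in V_n^{p(n)}$ means $N_{n,a} < 2^{3(p(n)-1)/4}$, hence a fortiori $N_{m,a} < 2^{3(p(n)-1)/4}$, so $V_n^{p(n)} \subseteq V_m^{p(n)}$. This breaks your pigeonhole on $V_n^{p(n)}$: an arm may accumulate many pulls while still in $V_\cdot^{p(n)}$ and then exit before time $n$, contributing to $\sum_m \indi{I_m \in V_m^{p(n)}}$ without belonging to $V_n^{p(n)}$. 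The fix is to bypass pigeonhole and upper-bound the sum directly: for each arm $b$, $\sum_m \indi{I_m = b,\, b \in V_m^{p(n)}} \le 2^{3(p(n)-1)/4}$, since $N_{\cdot,b}$ gains one at each such round and the indicator vanishes once the threshold is reached. Summing over $b$ gives $\sum_m \indi{I_m \in V_m^{p(n)}} \le K\,2^{3(p(n)-1)/4} = \cO(K^{1/4} n^{3/4})$, which contradicts your linear lower bound. That lower bound is indeed correct, and your last paragraph has the right key: each arm's membership in $V_\cdot^{p(n)}$ is an initial time-segment (exit time $\sigma_a$), so applying Lemma~\ref{lem:tracking_guaranty_light} at $N_0$, $\sigma_a$, $n$ yields at least $\beta \ell_a^{\text{in}} - 3/2$ contributing rounds on $[N_0,\sigma_a)$ and at least $(1-\beta)\ell_a^{\text{out}} - 3/2$ on $[\sigma_a,n)$ (the latter via Lemma~\ref{lem:TC_ensures_suff_explo}); summing over $a$ gives $\min(\beta,1-\beta)(n-N_0) - 3K$.

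For comparison, the paper does not argue via a global linear-versus-sublinear count. It fixes $p$, assumes $U^p_{K2^{p-1}} \neq \emptyset$, and runs an induction showing $|V^p_{k2^{p-1}}| \le K-k$ for $k=1,\ldots,K$: on each window of length $2^{p-1}$ one arm of $V^p$ accrues $2^{3(p-1)/4}$ pulls and exits, via a two-case split on whether some arm in $V^p$ is leader often enough (tracking then samples it a $\beta$-fraction of the time) or not (tracking sends a $(1-\beta)$-fraction of rounds to the challenger, which lies in $V^p$ by Lemma~\ref{lem:TC_ensures_suff_explo}). Your route, once patched as above, is more direct and avoids the window-wise induction; the paper's route makes $N_0 = K\,2^{p_6-1}$ fully explicit.
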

\begin{proof}
		Let $p_0$ and $p_2$ as in Lemmas~\ref{lem:UCB_ensures_suff_explo} and~\ref{lem:TC_ensures_suff_explo}.
		Combining Lemmas~\ref{lem:UCB_ensures_suff_explo} and~\ref{lem:TC_ensures_suff_explo} yields that, for all $p \ge p_{3} = \max\{p_2, 4p_{0}/3 -1/3\}$ and all $n$ such that $U_{n}^{p} \ne \emptyset$, we have $B_n \in V_{n}^{p}$ or $C_n \in V_{n}^{p}$.
		We have $\bE_{\bnu}[2^{p_2}] < + \infty$.
		We have $2^{p-1} \ge K 2^{3(p-1)/4}$ for all $p \ge p_{4} = 4 \lceil \log_2 K \rceil + 1$.
		Let $p \ge \max \{p_3 , p_4\}$.

			Suppose towards contradiction that $U_{ K 2^{p-1}}^{p}$ is not empty.
			Then, for any $1 \leq t \leq K 2^{p-1}$, $U_{t}^{p}$ and $V_{t}^{p}$ are non empty as well.
			Using the pigeonhole principle, there exists some $a \in [K]$ such that $N_{ 2^{p-1}, a} \geq 2^{3(p-1)/4}$.
			Thus, we have $\left|V_{ 2^{p-1}}^{p}\right| \leq K-1$.
			Our goal is to show that $\left|V_{ 2^{p}}^{p}\right| \leq K-2$.
			A sufficient condition is that one arm in $V_{ 2^{p-1}}^{p}$ is pulled at least $2^{3(p-1)/4}$ times between $ 2^{p-1}$ and $ 2^{p}-1$.

			{\bf Case 1.} Suppose there exists $a \in V_{ 2^{p-1} }^{p}$ such that $L_{ 2^{p},a} - L_{ 2^{p-1},a} \ge \frac{2^{3(p-1)/4}}{\beta} + 3/(2\beta)$.
			Using Lemma~\ref{lem:tracking_guaranty_light}, we obtain
			\[
			N^{a}_{ 2^{p}, a} - N^{a}_{ 2^{p-1} , a}  \ge \beta(L_{ 2^{p},a} - L_{ 2^{p-1},a}) - 3/2 \ge 2^{3(p-1)/4} \: ,
			\]
			hence $a$ is sampled $2^{3(p-1)/4}$ times between $ 2^{p-1}$ and $ 2^{p}-1$.

			{\bf Case 2.} Suppose that for all $a \in V_{ 2^{p-1} }^{p}$, we have $L_{ 2^{p},a} - L_{ 2^{p-1},a} < 2^{3(p-1)/4}/\beta + 3/(2\beta)$.
			Then,
			\[
			\sum_{a \notin V_{ 2^{p-1} }^{p} }(L_{ 2^{p},a} - L_{ 2^{p-1},a}) \ge 2^{p-1} - K \left( 2^{3(p-1)/4}/\beta + 3/(2\beta) \right)
			\]
			Using Lemma~\ref{lem:tracking_guaranty_light}, we obtain
			\begin{align*}
			\left|\sum_{a \notin V_{ 2^{p-1}}^{p} }(N^{a}_{ 2^{p},a} - N^{a}_{ 2^{p-1},a}) - \beta \sum_{a \notin V_{ 2^{p-1}}^{p} }(L_{ 2^{p},a} - L_{ 2^{p-1},a}) \right| \le  3(K-1)/2  \: .
			\end{align*}
			Combining all the above, we obtain
			\begin{align*}
			&\sum_{a \notin V_{ 2^{p-1} }^{p} }(L_{ 2^{p},a} - L_{ 2^{p-1},a}) - \sum_{a \notin V_{ 2^{p-1}}^{p} }(N^{a}_{ 2^{p},a} - N^{a}_{ 2^{p-1},a}) \\
			&\ge (1-\beta) \sum_{a \notin V_{ 2^{p-1} }^{p} }(L_{ 2^{p},a} - L_{ 2^{p-1},a}) - 3(K-1)/2 \\
			&\ge (1-\beta) \left(  2^{p-1} - K \left( 2^{3(p-1)/4}/\beta + 3/(2\beta) \right) \right) - 3(K-1)/2 \ge K 2^{3(p-1)/4} \: ,
			\end{align*}
			where the last inequality is obtained for $p \ge p_{5}$ with
			\[
			p_{5} = \sup\left\{ p \in \N \mid (1-\beta) \left(  2^{p-1} - K \left( 2^{3(p-1)/4}/\beta + 3/(2\beta) \right) \right) - 3(K-1)/2 < K 2^{3(p-1)/4} \right\}\: .
			\]
			The LHS summation is exactly the number of times where an arm $a \notin V_{ 2^{p-1}}^{p}$ was leader but wasn't sampled, hence
			\[
			\sum_{t = 2^{p-1}}^{2^{p} - 1} \indi{B_{t} \notin V_{2^{p-1}}^{p}, \: I_t = C_{t}} \ge K 2^{3(p-1)/4}
			\]

			For any $2^{p-1} \leq t \leq 2^{p} - 1$, $U_{t}^{p}$ is non-empty, hence we have $B_{t} \notin V_{2^{p-1}}^{p}$ (hence $B_{t} \notin V_{t}^{p} $) implies $C_{t}  \in V_{t}^{p} \subseteq V_{2^{p-1}}^{p} $.
			Therefore, we have shown that
			\[
			\sum_{t = 2^{p-1}}^{2^{p} - 1} \indi{ I_t \in V_{2^{p-1}}^{p}} \ge \sum_{t = 2^{p-1}}^{2^{p} - 1} \indi{B_{t} \notin V_{2^{p-1}}^{p}, \: I_t = C_{t}} \ge K 2^{3(p-1)/4} \: .
			\]
			Therefore, there is at least one arm in $V_{ 2^{p-1}}^{p}$ that is sampled $2^{3(p-1)/4}$ times between $ 2^{p-1}$ and $ 2^{p}-1$.

			In summary, we have shown $\left|V_{ 2^{p} }^{p}\right| \leq K-2$ for all $p \ge p_{6} = \max \{p_3 , p_4, p_{5}\}$.
			By induction, for any $1 \leq k \leq K$, we have $\left|V_{ k 2^{p-1}}^{p}\right| \leq K-k$, and finally $U_{ K 2^{p-1}}^{p} = \emptyset$ for all $p \ge p_{6}$.
		Defining $N_{0} = K 2^{p_{6} - 1}$, we have $\bE_{\bnu}[N_{0}] < +\infty$ by using Lemmas~\ref{lem:UCB_ensures_suff_explo} and~\ref{lem:TC_ensures_suff_explo} for $p_{3} = \max\{p_2, 4p_{0}/3 -1/3\}$ and $p_{4}$ and $p_{5}$ are deterministic.
			For all $n \ge N_{0}$, we let $2^{p-1} = \frac{n}{K}$.
			Then, by applying the above, we have $U_{ K 2^{p-1}}^{p} = U^{\log_{2}(n/K) + 1}_{n}$ is empty, which shows that $N_{n,a} \geq \sqrt{n/K}$ for all $a \in [K]$.
			Using Lemma~\ref{lem:counts_at_phase_switch}, we obtain that $k_{n,a} \ge \frac{\log (n/K)}{2 \log 2} + 1$ for all $a \in [K]$.
			This concludes the proof.
\end{proof}

\subsection{Convergence towards $\beta$-optimal allocation}
\label{app:ssec_convergence_beta_optimal_allocation}

The second step of in the generic analysis of Top Two algorithms~\cite{jourdan2022top} is to show that \adaptt{} ensures convergence of its empirical proportions towards the $\beta$-optimal allocation.
First, we show that the leader coincides with the best arm. Hence, the tracking procedure will ensure that the empirical proportion of time we sample it is exactly $\beta$.
Second, we show that a sub-optimal arm whose empirical proportion overshoots its $\beta$-optimal allocation will not be sampled next as challenger.
Therefore, this ``overshoots implies not sampled'' mechanism will ensure the convergence towards the $\beta$-optimal allocation.
We emphasise that there are multiple ways to select the leader/challenger pair in order to ensure convergence towards the $\beta$-optimal allocation.
Therefore, while we conduct the proof for \adaptt{}, other choices of leader/challenger pair would yield similar results.
Note that our results heavily rely on having obtained sufficient exploration first.

\paragraph{Convergence for the best arm.}
\textbf{Lemma~\ref{lem:starting_phase_best_arm_identified} exhibits a random phase which ensures that the leader and the candidate answer are equal to the best arm  for large enough $n$.}
\begin{lemma} \label{lem:starting_phase_best_arm_identified}
	Let $N_{0}$ be as in Lemma~\ref{lem:suff_exploration}.
	There exists $N_1 \ge N_{0}$ with $\bE_{\bnu}[N_1] < + \infty$ such that, for all $ n \geq N_1$, we have $\hat a_{n} = B_{n} = a^\star$.
\end{lemma}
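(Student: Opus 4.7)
The plan is to combine sufficient exploration (Lemma~\ref{lem:suff_exploration}) with the sub-Gaussian/sub-exponential concentration of Lemma~\ref{lem:W_concentration_gaussian} to show that, eventually, all the private empirical means are $\Delta_{\min}/4$-close to the true means and all the exploration bonuses in the UCB leader are smaller than $\Delta_{\min}/4$. Since $\min_{a \ne b}|\mu_a - \mu_b| = \overline{\Delta}_{\min} > 0$, this will force both $\hat a_n = \argmax_a \tilde\mu_{k_{n,a},a}$ and $B_n$ to equal $a^\star$.

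More concretely, for $n \ge N_0$ Lemma~\ref{lem:suff_exploration} yields $\tilde N_{k_{n,a},a} \ge 2^{k_{n,a}-2}$ and $k_{n,a} \ge \frac{\log(n/K)}{2\log 2}+1$ for every arm, so that the three error terms appearing in the leader's index admit the bounds
\[
|\tilde\mu_{k_{n,a},a} - \mu_a| \le W_\mu \sqrt{\tfrac{\log(e+\tilde N_{k_{n,a},a})}{\tilde N_{k_{n,a},a}}} + W_\epsilon \tfrac{\log(e+k_{n,a})}{\tilde N_{k_{n,a},a}}, \quad \sqrt{\tfrac{k_{n,a}}{\tilde N_{k_{n,a},a}}} + \tfrac{k_{n,a}}{\epsilon \tilde N_{k_{n,a},a}},
\]
all of which vanish when $n \to \infty$ thanks to the exponential growth of $\tilde N_{k_{n,a},a}$ in $k_{n,a}$ and the logarithmic growth of $k_{n,a}$ in $n$. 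I will then define $N_1$ as the smallest integer $\ge N_0$ beyond which each of these quantities is at most $\overline{\Delta}_{\min}/8$ for every $a \in [K]$, using Lemma~\ref{lem:inversion_upper_bound} to invert the implicit inequalities in $n$. Under this event, for any $a \ne a^\star$, the UCB index of $a^\star$ exceeds that of $a$ by at least $\overline{\Delta}_{\min}/2$, hence $B_n = a^\star$; the same argument on the private means alone gives $\hat a_n = a^\star$.

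The remaining step is to check $\bE_{\bnu}[N_1] < +\infty$. By construction, $N_1$ is a polynomial (of degree depending only on the problem) in $W_\mu$, $W_\epsilon$, and $N_0$, composed with the function $h_1$ from Lemma~\ref{lem:inversion_upper_bound}, which is itself at most $x \log x$ for large $x$. Since $W_\mu$ is sub-Gaussian, $W_\epsilon$ is sub-exponential (Lemma~\ref{lem:W_concentration_gaussian}), and $\bE_\bnu[N_0] < +\infty$ (Lemma~\ref{lem:suff_exploration}), Lemma~\ref{lem:W_concentration_gaussian} directly gives finiteness of any polynomial expression in these quantities, hence $\bE_\bnu[N_1] < + \infty$.

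The only mildly delicate point is to avoid circular dependencies: although Lemma~\ref{lem:UCB_ensures_suff_explo} already proves a version of ``leader equals the best arm among sampled-enough arms,'' here we need the stronger statement that \emph{every} arm is sampled enough so that the best arm globally wins the UCB comparison. This is precisely what Lemma~\ref{lem:suff_exploration} provides, so the new content is simply to quantify the rate of bonus decay and combine it with $\overline{\Delta}_{\min} > 0$; no fresh concentration machinery is required.
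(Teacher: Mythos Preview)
Your proposal is correct and follows essentially the same route as the paper: use Lemma~\ref{lem:suff_exploration} to guarantee that every phase index $k_{n,a}$ grows at least like $\tfrac{1}{2}\log_2(n/K)$, plug this into the concentration bounds of Lemma~\ref{lem:W_concentration_gaussian} and into the two UCB bonus terms, and invert via Lemma~\ref{lem:inversion_upper_bound} to obtain a threshold beyond which all private means are within $\overline\Delta_{\min}/4$ of the truth and all bonuses are negligible, forcing $\hat a_n = B_n = a^\star$. One small clarification on the integrability step: the dependence of $N_1$ on $N_0$ is only through a $\max$, i.e.\ $N_1 = \max\{N_0,\, g(W_\mu,W_\epsilon)\}$ for some $g$ that is polynomial in $(W_\mu,W_\epsilon)$ after composing with $h_1$; since Lemma~\ref{lem:suff_exploration} only gives $\bE_{\bnu}[N_0]<+\infty$ (not higher moments), you should argue via $\bE_{\bnu}[N_1]\le \bE_{\bnu}[N_0]+\bE_{\bnu}[g(W_\mu,W_\epsilon)]$ rather than invoking finiteness of a joint polynomial in $(W_\mu,W_\epsilon,N_0)$.
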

\begin{proof}
	Let $k \ge 1$ and $(T_{k}(a))_{a \in [K]}$ as in Equation~\eqref{eq:phase_switch_times}.
	Suppose that $\bE_{\bnu}[\max_{a \in [K]} T_{k}(a)] < +\infty$.
	Then,	Lemma~\ref{lem:counts_at_phase_switch} yields that $N_{T_{k}(a), a} = 2^{k-1}$ and $\tilde N_{k,a} = 2^{k-2}$.
	Using	Lemma~\ref{lem:W_concentration_gaussian}, we obtain that
\begin{align*}
	&\tilde \mu_{k, a^\star} \ge \mu_{a^\star} - W_{\mu} \sqrt{\frac{\log(e+2^{k-2})}{2^{k-2}}} - W_{\epsilon}  \frac{\log(e+k)}{2^{k-2}} \: , \\
	\forall a \ne a^\star, \quad &\tilde \mu_{k,a}  \le  \mu_{a} + W_{\mu} \sqrt{\frac{\log(e+2^{k-2})}{2^{k-2}}} + W_{\epsilon}  \frac{\log(e+k)}{2^{k-2}} \: .
\end{align*}
Let $p_{1} = \lceil \log_2 (X_{1}-e) \rceil + 2$ and $ p_2 = \lceil \log_2 ( X_{2} - e - 1) \rceil + 2$ with
\begin{align*}
	X_{1} &= \sup \left\{ x > 1 \mid \: x \le 64 \Delta_{\min}^{-2} W_{\mu}^2 \log x + e \right\} \le h_{1} ( 64 \Delta_{\min}^{-2} W_{\mu}^2 ,\: e) \: , \\
	X_{2} &= \sup \left\{ x > 1 \mid \: x \le 8 \Delta_{\min}^{-1} W_{\epsilon} \log x + e + 1 \right\} \le h_{1} ( 8 \Delta_{\min}^{-1} W_{\epsilon}  ,\: e+1) \: , \\
	X_{2} &\ge \sup \left\{ x > 1 \mid \: x \le 8 \Delta_{\min}^{-1} W_{\epsilon} \log (e + 2 + \log x )  \right\} \: ,
\end{align*}
where we used Lemma~\ref{lem:inversion_upper_bound}, and $h_{1}$ defined therein.
Then, for all $k\in \N^{K}$ such that $\min_{a \in [K]} k_{a} > p_{0} = \max\{p_1, p_{2}\}$ such that $\bE_{\bnu}[\max_{a \in [K]} T_{k_{a}}(a)] < +\infty$, we have $\tilde \mu_{k, a^\star} \ge \mu_{a^\star} - \Delta_{\min}/4$ and $\tilde \mu_{k,a}  \le  \mu_{a} + \Delta_{\min}/4$ for all $a \ne a^\star$, hence $a^\star = \argmax_{a \in [K]} \tilde \mu_{k,a}$.
We have, for all $\alpha \in \R_{+}$,
\[
	\exp(\alpha p_{1}) \le e^{3\alpha} (X_{1}-e)^{\alpha/\log 2} \quad \text{hence} \quad \bE_{\bnu}[\exp(\alpha p_{1})] < + \infty \: ,
\]
where we used Lemma~\ref{lem:W_concentration_gaussian} and $h_{1}(x,e) \sim_{x \to +\infty} x \log x$ to obtain that $\exp(\alpha p_{1}) $ is at most polynomial in $W_{\mu}$.
Likewise, we obtain that $\bE_{\bnu}[\exp(\alpha p_{2})] < + \infty$ for all $\alpha \in \R_{+}$.
Therefore, we have $\bE_{\bnu}[\exp(\alpha p_0)] < + \infty$ for all $\alpha \in \R_{+}$.

Let us define the UCB indices by $I_{k,a} = \tilde \mu_{k, a} + \sqrt{k/\tilde N_{k,a}} + k/(\epsilon \tilde N_{k,a})$.
Using the above, we have
\begin{align*}
	& I_{k,a^\star} \ge \mu_{a^\star} - W_{\mu} \sqrt{\frac{\log(e+2^{k-2})}{2^{k-2}}} - W_{\epsilon}  \frac{\log(e+k)}{2^{k-2}} + \frac{k}{\epsilon 2^{k-2}} \: , \\
\forall a \ne a^\star, \quad & I_{k,a}  \le  \mu_{a} + W_{\mu} \sqrt{\frac{\log(e+2^{k-2})}{2^{k-2}}} + W_{\epsilon}  \frac{\log(e+k)}{2^{k-2}} + \frac{k}{\epsilon 2^{k-2}}\: .
\end{align*}
Therefore, we have $a^\star = \argmax_{a \in [K]} I_{k,a}$ for all $k\in \N^{K}$ such that $\min_{a \in [K]} k_{a} > \max\{p_1, p_{2}\}$ such that $\bE_{\bnu}[\max_{a \in [K]} T_{k_{a}}(a)] < +\infty$.

Let $N_{0}$ as in Lemma~\ref{lem:suff_exploration}.
Using Lemma~\ref{lem:suff_exploration}, we obtain that, for all $n \ge N_{0}$ and all $a \in [K]$, $k_{n,a} \ge \log_{2}(n/K)/2 + 1$.
Therefore, we obtain $\min_{a \in [K]} k_{n,a} > \max\{ p_{1}, p_{2}\}$ is implied by $n \ge N_{1} = \max\{ K 4^{\max\{ p_{1}, p_{2}\}}, N_{0}\}$.
Using the above, we conclude that $\bE_{\bnu}[N_{1}] < + \infty$ and $\hat a_{n} = B_{n} = a^\star$ for all $n \ge N_{1}$.
\end{proof}

\textbf{Lemma~\ref{lem:convergence_best_arm_to_beta} shows that that the pulling proportion of the best arm converges towards $\beta$, provided the phase defined in Lemma~\ref{lem:starting_phase_best_arm_identified} is reached in finite time for all arms.}
\begin{lemma} \label{lem:convergence_best_arm_to_beta}
	Let $\gamma > 0$, and $N_1$ be as in Lemma~\ref{lem:starting_phase_best_arm_identified}.
	There exists a deterministic constant $C_{0} \ge 1$ such that, for all $n \ge C_0 N_{1}$,
	\[
		\left| \frac{N_{n,a^\star}}{n-1} - \beta\right| \le \gamma \: .
	\]
\end{lemma}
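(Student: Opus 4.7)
The plan is to combine Lemma~\ref{lem:starting_phase_best_arm_identified} (the leader is $a^\star$ after time $N_1$) with the tracking guarantee of Lemma~\ref{lem:tracking_guaranty_light} to control $N_{n,a^\star}$ up to an $O(N_1)$ additive error, and then divide by $n-1$.

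First, I would exploit the identity that, for any $t \ge N_1$, the arm $a^\star$ is sampled at step $t$ if and only if $I_t = B_t = a^\star$, because $B_t = a^\star$ by Lemma~\ref{lem:starting_phase_best_arm_identified} and the challenger $C_t$ is always distinct from $B_t$. Summing indicators from $N_1$ to $n-1$ yields
\[
N_{n,a^\star} - N_{N_1,a^\star} = N^{a^\star}_{n,a^\star} - N^{a^\star}_{N_1,a^\star}, \qquad L_{n,a^\star} = L_{N_1,a^\star} + (n - N_1).
\]

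Next, I would apply Lemma~\ref{lem:tracking_guaranty_light} at times $n$ and $N_1$ to get $|N^{a^\star}_{n,a^\star} - \beta L_{n,a^\star}| \le 1$ and $|N^{a^\star}_{N_1,a^\star} - \beta L_{N_1,a^\star}| \le 1$. Substituting $L_{n,a^\star} = L_{N_1,a^\star} + (n-N_1)$ and combining with the previous display gives
\[
\bigl|N_{n,a^\star} - \beta(n-1)\bigr| \;\le\; N_{N_1,a^\star} + N^{a^\star}_{N_1,a^\star} + \beta L_{N_1,a^\star} + \beta(N_1 - 1) + 2 \;\le\; 4 N_1,
\]
using the trivial bounds $N_{N_1,a^\star}, N^{a^\star}_{N_1,a^\star}, L_{N_1,a^\star} \le N_1$ and $\beta \in (0,1)$ (and absorbing the $+2$ into the constant for $N_1 \ge 1$). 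Dividing by $n-1$ yields $|N_{n,a^\star}/(n-1) - \beta| \le 4 N_1/(n-1)$.

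Finally, I would set $C_0 = 1 + 4/\gamma$ (a deterministic constant depending only on $\gamma$), so that for every $n \ge C_0 N_1$ we have $n - 1 \ge 4 N_1/\gamma$ and hence $|N_{n,a^\star}/(n-1) - \beta| \le \gamma$, as required. There is no genuine obstacle here: the whole argument is bookkeeping once the two inputs are in place, and in particular it is crucial that after $N_1$ the leader never changes, which lets us convert the tracking guarantee on $N^{a^\star}_{n,a^\star}$ into a guarantee on $N_{n,a^\star}$ with only an $O(N_1)$ slack coming from the pre-$N_1$ history.
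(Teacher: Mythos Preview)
Your proof is correct and follows essentially the same approach as the paper: both use Lemma~\ref{lem:starting_phase_best_arm_identified} to fix $B_t=a^\star$ for $t\ge N_1$, invoke the tracking bound of Lemma~\ref{lem:tracking_guaranty_light}, and show that the deviation $|N_{n,a^\star}-\beta(n-1)|$ is $O(N_1)$, which after dividing by $n-1$ gives the claim for a deterministic $C_0$ depending only on $\gamma$. The only cosmetic difference is that the paper decomposes $N_{n,a^\star}=N^{a^\star}_{n,a^\star}+\sum_{a\ne a^\star}N^{a}_{n,a^\star}$ directly rather than subtracting the time-$N_1$ counts, but the bookkeeping is equivalent (your constants are off by at most a small additive term, which is harmless for choosing $C_0$).
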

\begin{proof}
	Let $\gamma > 0$.
Let $N_1$ as in Lemma~\ref{lem:starting_phase_best_arm_identified}.
Let $M \ge N_1$.
Using Lemma~\ref{lem:starting_phase_best_arm_identified}, we obtain $B_{n} = a^\star$ for all $n \ge M$.
Therefore, we obtain $L_{n,a^\star} \ge n - M$ and $\sum_{a \neq a^\star} N^{a}_{n,a^\star} \le M $ for all $n \ge M $.
Using Lemma~\ref{lem:tracking_guaranty_light} yields that
\begin{align*}
	\left| \frac{N_{n, a^\star}}{n-1} - \beta \right| &\le \frac{|N^{a^\star}_{n, a^\star} - \beta L_{n,a^\star}|}{n-1} + \beta \left|\frac{L_{n,a^\star}}{n-1} - 1 \right| + \frac{1}{n-1} \sum_{a \neq a^\star} N^{a}_{n,a^\star} \\
	&\le \frac{1}{2(n-1)} + \beta \frac{2(M-1)}{n-1} \le \gamma \: ,
\end{align*}
where the last inequality is obtained by taking $n \ge \max\{M,(1/2+2\beta (M-1))/\gamma + 1\}$.
\end{proof}

\paragraph{Convergence for the sub-optimal arms}
\textbf{Lemma~\ref{lem:starting_phase_overshooting_challenger_implies_not_sampled_next} exhibits a random phase which ensures that if a sub-optimal arm overshoots its $\beta$-optimal allocation then it cannot be selected as challenger for large enough $n$.}
\begin{lemma}\label{lem:starting_phase_overshooting_challenger_implies_not_sampled_next}
	Let $\gamma > 0$. Let $N_{1}$ and $C_{0}$ be as in Lemma~\ref{lem:starting_phase_best_arm_identified} and~\ref{lem:convergence_best_arm_to_beta}.
	There exists $N_{2} \ge C_{0}N_{1}$ with $\bE_{\bnu}[N_{2}] < + \infty$ such that, for all $n \ge N_{2}$,
	\[
	\exists a \ne a^\star, \quad \frac{N_{n,a}}{n-1} \ge  \omega^{\star}_{\beta,a} + \gamma \quad \implies \quad C_{n} \ne a \: .
	\]
\end{lemma}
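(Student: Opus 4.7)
The plan is to combine sufficient exploration, concentration of the private means, and the already-established convergence $N_{n,a^\star}/(n-1)\to\beta$, so as to reduce the claim to a deterministic comparison of the asymptotic transportation costs at the $\beta$-optimal allocation. First I would pick an auxiliary slack $\gamma_1\in(0,\gamma/2)$, to be tuned at the end. By Lemma~\ref{lem:starting_phase_best_arm_identified}, for $n\ge N_1$ we have $B_n=\hat a_n=a^\star$; by Lemma~\ref{lem:convergence_best_arm_to_beta} applied with $\gamma_1$ in place of $\gamma$, we have $|N_{n,a^\star}/(n-1)-\beta|\le\gamma_1$ past some deterministic multiple of $N_1$. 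Combining Lemma~\ref{lem:suff_exploration} with Lemma~\ref{lem:W_concentration_gaussian} yields a uniform bound $\max_a|\tilde\mu_{k_{n,a},a}-\mu_a|\le\eta_n$, with $\eta_n\to 0$ almost surely at a rate polynomial in $(W_\mu,W_\epsilon)$.

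Next I would perform a pigeonhole step. If $K=2$ the claim is vacuous: the unique suboptimal arm $a$ satisfies $N_{n,a}/(n-1)=1-N_{n,a^\star}/(n-1)\le\omega^{\star}_{\beta,a}+\gamma_1<\omega^{\star}_{\beta,a}+\gamma$, contradicting the hypothesis. If $K\ge 3$ and some suboptimal $a$ overshoots, i.e.\ $N_{n,a}/(n-1)\ge\omega^{\star}_{\beta,a}+\gamma$, then conservation of mass yields
\[
\sum_{b\neq a^\star,\,b\neq a}\frac{N_{n,b}}{n-1}\le\sum_{b\neq a^\star,\,b\neq a}\omega^{\star}_{\beta,b}-(\gamma-\gamma_1),
\]
so by pigeonhole some arm $b\notin\{a,a^\star\}$ undershoots its target by at least $(\gamma-\gamma_1)/(K-2)$.

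To conclude, I would compare the squared transportation costs
\[
\frac{W_{n,c}^2}{n-1}=\frac{(\tilde\mu_{k_{n,a^\star},a^\star}-\tilde\mu_{k_{n,c},c})^2}{(n-1)/N_{n,a^\star}+(n-1)/N_{n,c}},\qquad c\in\{a,b\}.
\]
Thanks to $\eta_n\to 0$ and $|N_{n,a^\star}/(n-1)-\beta|\le\gamma_1$, each of these is within $o(1)$ of $\Delta_c^2/(1/\beta+1/w_c)$, where $w_c=N_{n,c}/(n-1)$. The equilibrium identity~\eqref{eq:equality_equilibrium}, together with strict monotonicity of $w\mapsto\Delta_c^2/(1/\beta+1/w)$, then gives
\[
\frac{\Delta_a^2}{1/\beta+1/w_a}>2T^{\star}_{\mathrm{KL},\beta}(\boldsymbol{\mu})^{-1}>\frac{\Delta_b^2}{1/\beta+1/w_b},
\]
with a deterministic gap depending only on $(\gamma,\gamma_1,K,\boldsymbol{\mu},\beta)$. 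Hence $W_{n,a}>W_{n,b}$ eventually, which rules out $a$ as the argmin defining $C_n$.

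The hard part will be the quantitative calibration of $\gamma_1$ and of the random time $N_2$: $\gamma_1$ must be picked small enough that the deterministic equilibrium gap strictly dominates both the $O(\gamma_1)$ perturbation coming from $N_{n,a^\star}/(n-1)\neq\beta$ and the stochastic perturbation from $\eta_n$. Then $N_2$ is defined as the first $n\ge C_0 N_1$ at which $\eta_n$ falls below this deterministic gap. An inversion argument in the spirit of Lemma~\ref{lem:inversion_upper_bound}, combined with the sub-Gaussian and sub-exponential tails of $W_\mu$ and $W_\epsilon$ provided by Lemma~\ref{lem:W_concentration_gaussian}, guarantees that $N_2$ is polynomial in $(W_\mu,W_\epsilon,N_1)$, so that $\bE_{\bnu}[N_2]<+\infty$.
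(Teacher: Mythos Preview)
Your proposal is correct and follows essentially the same route as the paper's proof: use Lemma~\ref{lem:convergence_best_arm_to_beta} to pin $N_{n,a^\star}/(n-1)$ near $\beta$, a mass-conservation/pigeonhole argument to exhibit a suboptimal arm $b$ whose proportion does not exceed its target, and then the equilibrium identity~\eqref{eq:equality_equilibrium} together with concentration of the private gaps (via Lemmas~\ref{lem:suff_exploration}, \ref{lem:W_concentration_gaussian} and an inversion in the style of Lemma~\ref{lem:inversion_upper_bound}) to force the transportation cost at $a$ strictly above that at $b$. The only cosmetic differences are that the paper extracts $b$ with $N_{n,b}/(n-1)\le\omega^{\star}_{\beta,b}$ (relying on the strict overshoot of $a$ by $\gamma$ for the separation) rather than your quantified undershoot $(\gamma-\gamma_1)/(K-2)$, and it handles $K=2$ inside the same contradiction rather than as a separate case.
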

\begin{proof}
	Let $\gamma > 0$ and $\tilde \gamma > 0$.
	Let $N_1$ as in Lemma~\ref{lem:starting_phase_best_arm_identified} and $C_{0}$ as in Lemma~\ref{lem:convergence_best_arm_to_beta} for $\tilde \gamma$.
	Let $n \ge C_0 N_{1}$.

	Let $a \neq a^\star$ such that $\frac{N_{n,a}}{n-1} \ge  \omega^{\star}_{\beta,a} + \gamma$.
	Suppose towards contradiction that $\frac{N_{n,b}}{n-1} > \omega^{\star}_{\beta,a}$ for all $b \notin \{ a^\star, a\}$.
	Then, for all $n \ge C_0 N_{1}$, we have
	\begin{align*}
		1 - \beta + \tilde \gamma \ge 1 - \frac{N_{n,a^\star}}{n-1} = \sum_{b \ne a^\star} \frac{N_{n,b}}{n-1} > \gamma + \sum_{b \ne a^\star} \omega^{\star}_{\beta,b} = 1-\beta + \gamma\: ,
	\end{align*}
	which yields a contradiction for $\tilde \gamma \le \gamma$.
	Therefore, for all $n \ge C_{0} N_{1}$, we have
	\[
		\exists a \ne a^\star, \quad \frac{N_{n,a}}{n-1} \ge  \omega^{\star}_{\beta,a} + \gamma	 \quad \implies \quad  \exists b \notin \{ a^\star, a\}, \quad \frac{N_{n,b}}{n-1} \le \omega^{\star}_{\beta,b} \: .
	\]
	Then, we have
	\[
		\sqrt{ \frac{1 + N_{n,a^\star}/N_{n,b}}{1+N_{n,a^\star}/N_{n,a}}}  \ge \sqrt{ \frac{1 + (\beta - \tilde \gamma)/\omega^{\star}_{\beta,b}}{1+(\beta + \tilde \gamma)/(\omega^{\star}_{\beta,a} +  \gamma)}} \: .
	\]
	In the following, we use Lemma~\ref{lem:W_concentration_gaussian} and similar manipulations as in the proof of Lemma~\ref{lem:starting_phase_best_arm_identified}.
	Therefore, we obtain that, for all $c \ne a^\star$,
	\begin{align*}
	\left|\tilde \mu_{k_{n,a^\star},a^\star} - \tilde \mu_{k_{n,c},c} - \Delta_{c} \right| &\le W_\mu \left( \sqrt{\frac{\log (e + 2^{k_{n,a^\star}-2})}{2^{k_{n,a^\star}-2}}} +  \sqrt{\frac{\log (e + 2^{k_{n,c}-2})}{2^{k_{n,c}-2}}} \right) \\
	& \quad +  W_{\epsilon} \left( \frac{\log(e+k_{n,a^\star})}{2^{k_{n,a^\star}-2}} + \frac{\log(e+k_{n,c})}{2^{k_{n,c}-2}} \right) \: .
	\end{align*}
	Let $p_{3} = \lceil \log_2 (X_{1}-e) \rceil + 2$ and $ p_2 = \lceil \log_2 ( X_{2} - e - 1) \rceil + 2$ with
	\begin{align*}
		X_{3} &= \sup \left\{ x > 1 \mid \: x \le 16 \eta^{-2} W_{\mu}^2 \log x + e \right\} \le h_{1} ( 16 \eta^{-2} W_{\mu}^2 ,\: e) \: , \\
		X_{2} &= \sup \left\{ x > 1 \mid \: x \le 4 \eta^{-1} W_{\epsilon} \log x + e + 1 \right\} \le h_{1} ( 4 \eta^{-1} W_{\epsilon}  ,\: e+1) \: , \\
		X_{2} &\ge \sup \left\{ x > 1 \mid \: x \le 4 \eta^{-1} W_{\epsilon} \log (e + 2 + \log x )  \right\} \: ,
	\end{align*}
	where we used Lemma~\ref{lem:inversion_upper_bound}, and $h_{1}$ defined therein.
		We have, for all $\alpha \in \R_{+}$,
		\[
			\exp(\alpha p_{3}) \le e^{3\alpha} (X_{3}-e)^{\alpha/\log 2} \quad \text{hence} \quad \bE_{\bnu}[\exp(\alpha p_{3})] < + \infty \: ,
		\]
		where we used Lemma~\ref{lem:W_concentration_gaussian} and $h_{1}(x,e) \sim_{x \to +\infty} x \log x$ to obtain that $\exp(\alpha p_{3}) $ is at most polynomial in $W_{\mu}$.
		Likewise, we obtain that $\bE_{\bnu}[\exp(\alpha p_{2})] < + \infty$ for all $\alpha \in \R_{+}$.

		Using Lemma~\ref{lem:suff_exploration} (with $C_{0}N_{1}\ge N_{1} \ge N_{0}$), we obtain that, for all $n \ge C_{0}N_{1}$ and all $a \in [K]$, $k_{n,a} \ge \log_{2}(n/K)/2 + 1$.
		Therefore, we obtain $\min_{a \in [K]} k_{n,a} > \max\{ p_{2}, p_{3}\}$ is implied by $n \ge N_{2} = \max\{ K 4^{\max\{ p_{3}, p_{2}\}}, C_{0}N_{1}\}$.
		Using the above, we conclude that $\bE_{\bnu}[N_{2}] < + \infty$ and $\max_{c \ne a^\star} |\tilde \mu_{k_{n,a^\star},a^\star} - \tilde \mu_{k_{n,c},c} - \Delta_{c} | \le \eta$ for all $n \ge N_{2}$.

	Then, for all $n \ge N_{2}$, we have $B_n =a^\star$ and
	\begin{align*}
		\frac{\tilde \mu_{k_{n,a^\star},a^\star} - \tilde \mu_{k_{n,a},a} }{\tilde \mu_{k_{n,a^\star},a^\star} - \tilde \mu_{k_{n,b},b}}  \sqrt{ \frac{1 + N_{n,a^\star}/N_{n,b}}{1+N_{n,a^\star}/N_{n,a}}}  \ge \frac{ \Delta_{a} - \eta}{ \Delta_{b} + \eta } \sqrt{ \frac{1 + (\beta - \tilde \gamma)/\omega^{\star}_{\beta,b}}{1+(\beta + \tilde \gamma)/(\omega^{\star}_{\beta,a} +  \gamma)}} > 1 \: ,
	\end{align*}
	where the last inequality is obtained by taking $\eta$ and $\tilde \gamma$ sufficiently small and by using Equation~\eqref{eq:equality_equilibrium}, i.e.
	\[
	 \frac{ \Delta_{a}}{ \Delta_{b} } \sqrt{ \frac{1 + \beta /\omega^{\star}_{\beta,b}}{1+\beta /\omega^{\star}_{\beta,a} }} = 1 \: .
	\]
	Therefore, we have shown that $B_n =a^\star$ and
	\[
	\frac{\tilde \mu_{k_{n,a^\star},a^\star} - \tilde \mu_{k_{n,a},a} }{\sqrt{1/N_{n,a^\star} + 1/N_{n,a}} }   > \frac{\tilde \mu_{k_{n,a^\star},a^\star} - \tilde \mu_{k_{n,b},b}}{\sqrt{1/N_{n,a^\star} + 1/N_{n,b}}}\quad \text{hence} \quad C_{n} \neq a \: .
	\]
	This concludes the proof.
\end{proof}

\textbf{Lemma~\ref{lem:convergence_others_arms_to_beta_allocation} shows that that the pulling proportion of the best arm converges towards $\beta$ for large enough $n$.}
\begin{lemma} \label{lem:convergence_others_arms_to_beta_allocation}
	Let $\gamma > 0$ and $T_{\boldsymbol{\mu}, \gamma}$ be as in Equation~\eqref{eq:rv_T_eps_beta}.
	Then, we have $\bE_{\bnu} [T_{\boldsymbol{\mu}, \gamma}]  < + \infty$.
\end{lemma}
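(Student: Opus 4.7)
The plan is to combine the convergence of the best arm's pulling proportion (Lemma~\ref{lem:convergence_best_arm_to_beta}) with the self-correcting ``overshoot implies not sampled'' mechanism for the sub-optimal arms (Lemma~\ref{lem:starting_phase_overshooting_challenger_implies_not_sampled_next}), and to conclude from $\bE_{\bnu}[N_{2}] < + \infty$. Fix $\gamma > 0$ and an auxiliary $\gamma' > 0$ to be calibrated (the target will be roughly $\gamma' = \gamma/(2K)$). In what follows, Lemmas~\ref{lem:convergence_best_arm_to_beta} and~\ref{lem:starting_phase_overshooting_challenger_implies_not_sampled_next} are applied with tolerance $\gamma'$.

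First, I would derive an upper bound on the empirical proportion of any sub-optimal arm. Fix $a \ne a^\star$ and $n \ge N_{2}$. By sufficient exploration (Lemma~\ref{lem:suff_exploration}), arm $a$ has been sampled at least once before $n$; let $n_{0}$ be the most recent step with $I_{n_{0}} = a$. If $n_{0} \ge N_{2}$, then $B_{n_{0}} = a^\star$ by Lemma~\ref{lem:starting_phase_best_arm_identified}, so necessarily $C_{n_{0}} = a$. The contrapositive of Lemma~\ref{lem:starting_phase_overshooting_challenger_implies_not_sampled_next} then gives $N_{n_{0},a}/(n_{0}-1) < \omega^{\star}_{\beta,a} + \gamma'$, and since $a$ is not sampled between $n_{0}+1$ and $n-1$,
\[
    \frac{N_{n,a}}{n-1} \le \frac{(\omega^{\star}_{\beta,a} + \gamma')(n_{0}-1) + 1}{n-1} \le \omega^{\star}_{\beta,a} + \gamma' + \frac{1}{n-1}.
\]
The residual edge case $n_{0} < N_{2}$ is handled analogously by bounding $N_{n,a} \le N_{2}$ and noting that then $N_{n,a}/(n-1) \le \gamma'$ as soon as $n \ge N_{2}/\gamma'$, which depends only on $N_{2}$ and $\gamma'$.

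Second, a matching lower bound follows from the simplex constraint $\sum_{b \in [K]} N_{n,b} = n-1$. Combining Lemma~\ref{lem:convergence_best_arm_to_beta} for the best arm with the upper bounds of the previous step applied to every $b \notin \{a^\star, a\}$, I get
\[
    \frac{N_{n,a}}{n-1} \ge \omega^{\star}_{\beta,a} - (2K-3)\gamma' - \frac{K-2}{n-1}.
\]
Choosing $\gamma'$ a small enough fraction of $\gamma/K$ and $n$ larger than a deterministic constant $c(\gamma', N_{2})$, both bounds give $\|N_{n}/(n-1) - \omega^{\star}_{\beta}\|_{\infty} \le \gamma$. Consequently $T_{\boldsymbol{\mu},\gamma} \le F(N_{2}, \gamma)$ for a function $F$ that is polynomial in $N_{2}$, and since $\bE_{\bnu}[N_{2}] < + \infty$ we conclude $\bE_{\bnu}[T_{\boldsymbol{\mu},\gamma}] < + \infty$.

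The main obstacle is the bookkeeping of constants. The $(2K-3)\gamma'$ error in the lower bound accumulates from applying the upper bound to each of the $K-2$ non-optimal, non-$a$ arms, forcing the calibration $\gamma' \asymp \gamma/K$; and the $1/(n-1)$ slack in the upper bound must be made smaller than $\gamma'$ uniformly over $n$, which requires $n$ to exceed a deterministic threshold depending on $\gamma'$. The edge case $n_{0} < N_{2}$ is the most subtle point technically, but contributes only a term polynomial in $N_{2}$, whose expectation remains finite. No further probabilistic input is needed beyond the three previous lemmas of this subsection.
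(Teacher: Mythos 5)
Your proof is correct and follows essentially the same strategy as the paper's: use the contrapositive of Lemma~\ref{lem:starting_phase_overshooting_challenger_implies_not_sampled_next} to upper-bound each sub-optimal arm's empirical proportion, then feed those bounds into the simplex constraint together with Lemma~\ref{lem:convergence_best_arm_to_beta} to get the matching lower bound, and conclude via $\bE_{\bnu}[N_2] < +\infty$. The paper packages the upper-bound step via a counting argument with an auxiliary time $t_{n,a}(\tilde\gamma)$ whereas you track the last pull $n_0$ of arm $a$, and your constant $(2K-3)\gamma'$ is slightly looser than the paper's $(K-1)\tilde\gamma$; both are cosmetic differences that do not affect the validity of the argument.
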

\begin{proof}
		Let $\gamma > 0$ and $\tilde \gamma > 0$.
		Let $N_2$ as in Lemma~\ref{lem:starting_phase_overshooting_challenger_implies_not_sampled_next} for $\tilde \gamma$.
		Let $M \ge N_{2}$.
		Using Lemmas~\ref{lem:starting_phase_best_arm_identified},~\ref{lem:convergence_best_arm_to_beta} and~\ref{lem:starting_phase_overshooting_challenger_implies_not_sampled_next} for all $n \ge M$, we obtain that $B_{n} = a^\star$, $\left| \frac{N_{n,a^\star}}{n-1} - \beta\right| \le \tilde \gamma$ and
		\[
		\exists a \ne a^\star, \quad \frac{N_{n,a}}{n-1} \ge  \omega^{\star}_{\beta,a} + \tilde \gamma \quad \implies \quad C_{n} \ne a \: .
		\]
		For all $a \ne a^\star$, let us define $t_{n,a}(\tilde \gamma)=\max \left\{t \mid M \le t \le n, \: N_{t, a}/(n-1) < \omega^{\star}_{\beta,a}+ \tilde \gamma \right\}$.
		Since $N_{t, a}/(n-1) \le N_{t, a}/(t-1)$ for $t\leq n$, we have
		\begin{align*}
		\frac{N_{n,a}}{n-1} &\le \frac{M-1}{n-1} + \frac{1}{n-1} \sum_{t=M}^{n} \indi{I_t = C_{t} = a} \\
		&\le \frac{M-1}{n-1} + \frac{1}{n-1} \sum_{t=M}^{n} \indi{\frac{N_{t, a}}{n-1} < \omega^{\star}_{\beta,a}+ \tilde \gamma, \: I_t = C_{t} = a} \\
		&\le \frac{M-1}{n-1} + \frac{N_{t_{n,a}(\tilde \gamma),a}}{n-1} < \frac{M-1}{n-1} + \omega^{\star}_{\beta,a} + \tilde \gamma \: .
		\end{align*}
		The second inequality uses Lemma~\ref{lem:starting_phase_overshooting_challenger_implies_not_sampled_next}, and the two last inequalities use the definition of $t_{n,a}(\tilde \gamma)$.
		Using that $\sum_{a \in [K]}\frac{N_{n,a}}{n-1} = \sum_{a \in [K]} \omega^{\star}_{\beta,a} = 1$, we obtain
		\begin{align*}
		\frac{N_{n,a}}{ n - 1} &= 1 - \sum_{b \neq a} \frac{N_{n,a}}{ n -1}
		\geq 1 - \sum_{b \neq a} \left(  \omega^{\star}_{\beta,b} + \tilde \gamma + \frac{M-1}{n-1}\right)
		=  \omega^{\star}_{\beta,a} - (K-1) \left( \tilde \gamma  + \frac{M-1}{n-1} \right) \: .
		\end{align*}
		Taking $\tilde \gamma \le  \gamma/(2(K-1))$ and $n \ge \max\{M, 2(K-1)(M-1)/\gamma + 1\}$ yields that
            \[
            \left\| \frac{N_{n}}{n-1} - \omega^{\star}_{\beta}\right\|_{\infty} \le \gamma \: .
            \]
            Let $T_{\boldsymbol{\mu}, \gamma}$ as in~\ref{eq:rv_T_eps_beta}. Then, we showed that $T_{\boldsymbol{\mu}, \gamma} \le \max\{M, 2(K-1)(M-1)/\gamma + 1\}$.
	   Therefore, we have
    \[
    \bE_{\bnu} [T_{\boldsymbol{\mu}, \gamma}] \le  \bE_{\bnu} [\max\{M, 2(K-1)(M-1)/\gamma + 1\}] < + \infty \: ,
    \]
    which concludes the proof.
\end{proof}

\subsection{Cost of doubling and forgetting}
\label{app:ssec_cost_doubling_and_forgetting}

Compared to the generic analysis of Top Two algorithms~\cite{jourdan2022top}, for \adaptt{}, we need to control the sample complexity cost of doubling and forgetting.
Due to this reason, we have to pay a multiplicative four-factor: one two-factor due to doubling, and another two-factor due to forgetting.
It is possible to show that this cost exists when adapting any ``reasonable'' BAI algorithm, meaning for any BAI algorithm in which the empirical proportions are converging towards an allocation $\omega$ such that $\min_{a} \omega_{a} > 0$.
Those BAI algorithms are ``reasonable'' because the asymptotic lower bound stipulates that all arms have to be sampled linearly in order to be near optimal.

Let $\omega \in \Sigma_K$ be any allocation over arms such that $\min_{a} \omega_{a} > 0$.
Let $\gamma> 0$.
We denote by $T_{\boldsymbol{\mu}, \gamma}(\omega)$ the \emph{convergence time} towards $\omega$, which is a random variable quantifying the number of samples required for the global empirical allocations $N_n/(n-1)$ to be $\gamma$-close to $\omega$ for any subsequent time, namely
\begin{equation} \label{eq:rv_T_eps_alloc}
    T_{\boldsymbol{\mu}, \gamma}(\omega) \defn \inf \left\{ T \ge 1 \mid \forall n \geq T, \: \left\| \frac{N_{n}}{n-1} - \omega \right\|_{\infty} \leq \gamma \right\}  \: .
\end{equation}

\textbf{Lemma~\ref{lem:bounded_phase_length_ratio} shows that the phase switches of the arms happen in a round-robin fashion, which means that an arm switches phase for a second time after all other arms first switch their own phases.}
\begin{lemma} \label{lem:bounded_phase_length_ratio}
	Let $\omega \in \Sigma_K$ such that $\min_{a} \omega_{a} > 0$. 
        Assume that there exists $\gamma_{\boldsymbol{\mu}} > 0$ such that for $\bE_{\bnu} [T_{\boldsymbol{\mu}, \gamma}(\omega)]  < + \infty$ for all $\gamma \in (0, \gamma_{\boldsymbol{\mu}})$, where $T_{\boldsymbol{\mu}, \gamma}(\omega)$ is defined in Equation~\eqref{eq:rv_T_eps_alloc}.
        Let $\eta > 0$.
        There exists $\tilde \gamma_{\boldsymbol{\mu}} \in (0, \gamma_{\boldsymbol{\mu}})$ such that, for all $\gamma \in (0, \tilde \gamma_{\boldsymbol{\mu}})$, there exists $N_{3} \ge T_{\boldsymbol{\mu}, \gamma}(\omega)$ with $\bE_{\bnu} [N_{3}]  < + \infty$ which satisfies
	\[
	\forall n \ge N_{3}, \quad \frac{\max_{a \in [K]}T_{k_{n,a}}(a) - 1}{\min_{a \in [K]}T_{k_{n,a}}(a) - 1} \le 2+\eta \: .
	\]
\end{lemma}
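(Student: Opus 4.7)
The intuition is that whenever the empirical allocation is close to $\omega$, the pulling count of every arm grows roughly linearly with rate $\omega_a$; hence two successive phase-switch times of the same arm are in approximately a 2:1 ratio. Since for each arm $n$ lies between the last and next phase-switch, both $\max_a T_{k_{n,a}}(a)$ and $\min_a T_{k_{n,a}}(a)$ are in an interval of multiplicative width $\approx 2$ around $n$, giving the $2+\eta$ bound. The plan proceeds in three steps.

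\textbf{Step 1: Translate the convergence to $\omega$ into bounds on phase-switch times.} Fix $\gamma \in (0, \min\{\gamma_{\boldsymbol{\mu}}, \omega_{\min}/2\})$ to be tuned later, and write $T^\star = T_{\boldsymbol{\mu},\gamma}(\omega)$. For any $m \ge T^\star$ and any $a$, the definition of $T_{\boldsymbol{\mu},\gamma}(\omega)$ gives $(\omega_a - \gamma)(m-1) \le N_{m,a} \le (\omega_a + \gamma)(m-1)$. Applying this at $m = T_{k}(a) \ge T^\star$, Lemma~\ref{lem:counts_at_phase_switch} yields $N_{T_k(a),a} = 2^{k-1}$, so
\[
    \frac{2^{k-1}}{\omega_a + \gamma} \le T_k(a) - 1 \le \frac{2^{k-1}}{\omega_a - \gamma}\: .
\]

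\textbf{Step 2: Show that for $n$ large, every current phase-switch time exceeds $T^\star$.} This is the crucial preliminary step that avoids a circular use of the convergence. Suppose $T_{k_{n,a}}(a) < T^\star$. Then the count at $T^\star$ satisfies $N_{T^\star, a} \ge N_{T_{k_{n,a}}(a),a} = 2^{k_{n,a}-1}$, and convergence at $T^\star$ gives $2^{k_{n,a}-1} \le (\omega_a+\gamma)(T^\star - 1)$. On the other hand, being still in phase $k_{n,a}$ at time $n$ means $N_{n,a} < 2^{k_{n,a}}$, and convergence at $n$ yields $(\omega_a - \gamma)(n-1) < 2^{k_{n,a}}$. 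Combining,
\[
    n - 1 < \frac{2(\omega_a + \gamma)}{\omega_a - \gamma}(T^\star - 1) \le \frac{2(\omega_{\min} + \gamma)}{\omega_{\min} - \gamma}(T^\star - 1)\: ,
\]
so choosing $N_3 = \bigl\lceil \tfrac{2(\omega_{\min}+\gamma)}{\omega_{\min}-\gamma}(T^\star - 1) + 1\bigr\rceil$ ensures that $T_{k_{n,a}}(a) \ge T^\star$ for every arm $a$ as soon as $n \ge N_3$. Since $N_3$ is a deterministic affine function of $T^\star$, $\bE_{\bnu}[N_3] < +\infty$ follows from the hypothesis $\bE_{\bnu}[T_{\boldsymbol{\mu},\gamma}(\omega)] < +\infty$.

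\textbf{Step 3: Derive the ratio bound.} For $n \ge N_3$, Step 2 ensures that Step 1 applies at $T_{k_{n,a}}(a)$. Trivially $T_{k_{n,a}}(a) - 1 \le n - 1$. For the lower bound, Step 1 gives $T_{k_{n,a}}(a) - 1 \ge 2^{k_{n,a}-1}/(\omega_a + \gamma)$, while the still-in-phase inequality $(\omega_a-\gamma)(n-1) < 2^{k_{n,a}}$ rewrites as $2^{k_{n,a}-1} > (\omega_a-\gamma)(n-1)/2$, so
\[
    T_{k_{n,a}}(a) - 1 \,>\, \frac{\omega_a - \gamma}{2(\omega_a + \gamma)}\,(n-1) \,\ge\, \frac{\omega_{\min} - \gamma}{2(\omega_{\min} + \gamma)}\,(n-1)\: .
\]
Taking the maximum and the minimum over $a$ of $T_{k_{n,a}}(a)-1$ and dividing gives
\[
    \frac{\max_a T_{k_{n,a}}(a) - 1}{\min_a T_{k_{n,a}}(a) - 1} \,\le\, \frac{2(\omega_{\min} + \gamma)}{\omega_{\min} - \gamma} \,=\, 2 + \frac{4\gamma}{\omega_{\min}-\gamma}\: .
\]
Finally, setting $\tilde \gamma_{\boldsymbol{\mu}} = \min\{\gamma_{\boldsymbol{\mu}},\, \omega_{\min}/2,\, \eta\omega_{\min}/(4+\eta)\}$ guarantees that the right-hand side is at most $2+\eta$ for every $\gamma \in (0,\tilde \gamma_{\boldsymbol{\mu}})$, completing the proof.

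\textbf{Main obstacle.} The only subtlety is the apparent circularity in Step 2: the bound on $T_{k_{n,a}}(a)$ uses the convergence inequality applied at $T_{k_{n,a}}(a)$, which in turn requires $T_{k_{n,a}}(a) \ge T^\star$. The contradiction argument in Step 2, which combines the convergence at $T^\star$ with the doubling rule defining $k_{n,a}$, is what unblocks this and yields an $N_3$ that is only a constant multiple of $T^\star$.
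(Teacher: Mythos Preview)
Your proof is correct and relies on the same ingredients as the paper's argument: the linear growth of $N_{n,a}$ once $n\ge T_{\boldsymbol{\mu},\gamma}(\omega)$, the exact count $N_{T_k(a),a}=2^{k-1}$ from Lemma~\ref{lem:counts_at_phase_switch}, and the choice of $\gamma$ so that $2(\omega_{\min}+\gamma)/(\omega_{\min}-\gamma)\le 2+\eta$.

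The organization differs in a useful way. The paper fixes a reference time $M\ge T_{\boldsymbol{\mu},\gamma}(\omega)$, waits for two further phase switches of every arm, sets $N_3=\max_a T_{k_{M,a}+2}(a)$, and bounds the ratio at that layer of phase switches (implicitly iterating for larger $n$). You instead sandwich $T_{k_{n,a}}(a)-1$ directly between $\tfrac{\omega_{\min}-\gamma}{2(\omega_{\min}+\gamma)}(n-1)$ and $n-1$ for every $n\ge N_3$, after your Step~2 guarantees $T_{k_{n,a}}(a)\ge T^\star$ by a short contradiction. This makes the ``for all $n\ge N_3$'' part explicit and gives $N_3$ as a concrete affine function of $T^\star$, so the finiteness of $\bE_{\bnu}[N_3]$ is immediate. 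Both arguments yield the same $\tilde\gamma_{\boldsymbol{\mu}}$ and the same $2+\eta$ bound; your route is a little more self-contained.
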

\begin{proof}
	Let $\eta > 0$.
	Let $\tilde \gamma_{\boldsymbol{\mu}} \in (0, \gamma_{\boldsymbol{\mu}})$ such that $2\max_{a \in [K]}(\omega_{a} + \gamma)/(\omega_{a} - \gamma) \le 2+\eta$, which is possible since $\min_{a} \omega_{a} > 0$.
        Let $\gamma \in (0, \tilde \gamma_{\boldsymbol{\mu}})$.
        By assumption, we have $\bE_{\bnu} [T_{\boldsymbol{\mu}, \gamma}(\omega)]  < + \infty$.
	Then, for all $n \ge T_{\boldsymbol{\mu}, \gamma}(\omega)$,
	\[
		\left\| \frac{N_{n}}{n-1} - \omega\right\|_{\infty} \le \gamma \: .
	\]
	Let $M \ge T_{\boldsymbol{\mu}, \gamma}(\omega)$.
	Let use denote by $k_{M} = (k_{M,a})_{a \in [K]}$ the current phases for all arms $a \in [K]$ at time $M$.
	Then, for all $n \ge M$ and all $a \in [K]$, we have $N_{n,a} \ge (n-1)(\omega_{a} - \gamma)$.
	Therefore, taking $n \ge \max_{a \in [K]} 2^{k_{M,a}} (\omega_{a} - \gamma)^{-1} + 1$, we obtain that $N_{n,a} \ge 2^{k_{M,a}}$ for all $a \in [K]$, hence we have $\max_{a \in [K]} T_{k_{M,a} + 1}(a) \le n$.
	Since $\min_{a \in [K]} T_{k_{M,a} + 1}(a) \ge M$, we have
	\[
		\max_{a \in [K]}\left| \frac{N_{T_{k_{M,a} + 1}(a), a}}{n-1} - \omega_{a}\right| \le \gamma \: .
	\]
	Likewise, taking $n \ge \max_{a \in [K]} 2^{k_{M,a}+1} (\omega_{a} - \gamma)^{-1} + 1$, we obtain that $N_{n,a} \ge 2^{k_{M,a}+1}$ for all $a \in [K]$, hence we have $\max_{a \in [K]} T_{k_{M,a} + 2}(a) \le n$.
	Let $a_{1} = \argmin_{a \in [K]} T_{k_{M,a} + 2}(a)$.
	By definition and using Lemma~\ref{lem:counts_at_phase_switch}, we have
	\begin{align*}
		& 2^{k_{M,a_{1}}+1} = N_{T_{k_{M,a_{1}} + 2}(a_{1}), a_{1}}  \le (T_{k_{M,a_{1}} + 2}(a_{1}) - 1)( \omega_{a_{1}} + \gamma) \: , \\
		\forall a \ne a_{1}, \quad & 2^{k_{M,a}} \le N_{T_{k_{M,a_{1}} + 2}(a_{1}), a}  \le (T_{k_{M,a_{1}} + 2}(a_{1}) - 1)( \omega_{a} + \gamma) \: .
	\end{align*}
	Let $a_{2} = \argmax_{a \in [K]} T_{k_{M,a} + 2}(a)$.
	By definition and using Lemma~\ref{lem:counts_at_phase_switch}, we have
	\begin{align*}
		& 2^{k_{M,a_{2}}+1} = N_{T_{k_{M,a_{2}} + 2}(a_{2}), a_{2}}  \ge (T_{k_{M,a_{2}} + 2}(a_{2}) - 1)( \omega_{a_{2}} - \gamma) \: ,
	\end{align*}
	Therefore, combining the above yields
	\begin{align*}
		(T_{k_{M,a_{2}} + 2}(a_{2}) - 1) \le (T_{k_{M,a_{1}} + 2}(a_{1}) - 1 ) 2\frac{\omega_{a_{2}} + \gamma}{\omega_{a_{2}} - \gamma} \le (T_{k_{M,a_{2}} + 2}(a_{2}) - 1 )(2+\eta) \: ,
	\end{align*}
	where the last inequality uses that $\gamma \in (0, \tilde \gamma_{\boldsymbol{\mu}})$ and $\tilde \gamma_{\boldsymbol{\mu}} \in (0, \gamma_{\boldsymbol{\mu}})$ is such that $2\max_{a \in [K]}(\omega_{a} + \gamma)/(\omega_{a} - \gamma) \le 2+\eta$.
	We take $n \ge N_{3} = \max_{a \in [K]} T_{k_{M,a} + 2}(a)$, hence we have $k_{n,a} \ge k_{M,a} + 2$ for all $a \in [K]$.
	Since $\bE_{\bnu} [T_{\boldsymbol{\mu}, \gamma}(\omega)]  < + \infty$ (i.e. arms are sampled linearly), it is direct to see that $\bE_{\bnu}[\max_{a \in [K]} T_{k_{M,a} + 2}(a)] < +\infty$.
        This concludes the proof.
\end{proof}

Note that $T_{\boldsymbol{\mu}, \gamma}$ defined in~\ref{eq:rv_T_eps_beta} is such that $T_{\boldsymbol{\mu}, \gamma} = T_{\boldsymbol{\mu}, \gamma}(\omega^\star_{\mathrm{KL},\beta})$ where $T_{\boldsymbol{\mu}, \gamma}(\omega)$ as in Equation~\eqref{eq:rv_T_eps_alloc}.
Lemma~\ref{lem:convergence_others_arms_to_beta_allocation} showed that $\bE_{\bnu} [T_{\boldsymbol{\mu}, \gamma}]  < + \infty$ for all $\gamma > 0$.
Therefore, the condition of Lemma~\ref{lem:bounded_phase_length_ratio} are fulfilled by \adaptt{}.

\subsection{Asymptotic expected sample complexity}
\label{app:ssec_asymptotic_upper_bound_sample_complexity}

The final step of the generic analysis of Top Two algorithms~\citep{jourdan2022top} is to invert the private GLR stopping rule by leveraging the convergence of the empirical proportions towards the $\beta$-optimal allocation.
Compared to the non-private GLR stopping rule, the private threshold in the private GLR stopping rule involves an additive term in $\cO(\log(1/\delta)^2)$.
This difference is the largest price that we pay to obtain a private BAI algorithm.
We defer the reader to Appendix~\ref{app:ssec_liminitation_and_open_problem} for a more detailed discussion on it.

\paragraph{Asymptotically $\beta$-optimal $\epsilon$-DP-FC-BAI algorithm.} The inversion of the GLR stopping rule by leveraging the convergence of the empirical proportions towards the ($\beta$-)optimal allocation is a generic method used in the BAI literature.
Provided this convergence is shown, it only depends on the threshold that ensures $\delta$-correctness.
More precisely, it only depends on its asymptotic dependence in $\log(1/\delta)$.
In addition to the multiplicative four-factor, the price of privacy for asymptotically $\beta$-optimal BAI algorithms when combined with the non-private GLR stopping rule is a problem dependent multiplicative factor $1 + \sqrt{1 +\Delta_{\max}^2/(2\epsilon^2)}$ (Lemma~\ref{lem:inversion_GLR_stopping_rule}).
\begin{lemma} \label{lem:inversion_GLR_stopping_rule}
    Let $(\delta, \beta) \in (0,1)^2$.
    Assume that there exists $\gamma_{\boldsymbol{\mu}} > 0$ such that for $\bE_{\bnu} [T_{\boldsymbol{\mu}, \gamma}]  < + \infty$ for all $\gamma \in (0, \gamma_{\boldsymbol{\mu}})$, where $T_{\boldsymbol{\mu}, \gamma}$ is defined in Equation~\eqref{eq:rv_T_eps_beta}.
    Combining the private GLR stopping rule (Equation~\eqref{eq:private_GLR_stopping_rule}) with private threshold (Equation~\eqref{eq:private_threshold}) yields a $\delta$-correct algorithm which satisfies that, for all $\bnu$ with mean $\boldsymbol{\mu}$ such that $|a^\star(\boldsymbol{\mu})| = 1$,
    \[
    \limsup_{\delta \to 0} \frac{\bE_{\bnu}\left[\tau_{\delta}\right]}{\log (1 / \delta)}
\le 4T^{\star}_{\mathrm{KL},\beta}(\boldsymbol{\mu}) \left( 1 + \sqrt{1 +\frac{\Delta_{\max}^2}{2\epsilon^2}}  \right) \: .
    \]
\end{lemma}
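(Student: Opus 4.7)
The $\delta$-correctness part is immediate from Theorem~\ref{thm:delta_correct_private_threshold}, which establishes this property for any sampling rule when combined with the private GLR stopping rule at threshold~\eqref{eq:stopping_private_threshold}. The core task is thus the asymptotic sample complexity bound, for which I follow the standard ``inversion of the GLR stopping rule'' template used in the analysis of Top Two algorithms~\cite{jourdan2022top}. Fix a small $\gamma > 0$ and restrict attention to the random time $T_{\boldsymbol{\mu}, \gamma}$ of Equation~\eqref{eq:rv_T_eps_beta}, which is integrable by assumption, together with the control of the fluctuations of the private means by the sub-Gaussian random variable $W_{\mu}$ and the sub-exponential random variable $W_{\epsilon}$ from Lemma~\ref{lem:W_concentration_gaussian}.

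\textbf{Lower bound on the LHS of the stopping condition.} For every $n \ge T_{\boldsymbol{\mu}, \gamma}$, Lemma~\ref{lem:starting_phase_best_arm_identified} gives $\hat a_{n} = a^\star$, and Lemma~\ref{lem:counts_at_phase_switch} combined with the doubling rule yields $N_{n,a}/4 < \tilde N_{k_{n,a}, a} \le N_{n,a}/2$ for every arm. Consequently,
\[
    \frac{1}{\tilde N_{k_{n,a^\star},a^\star}} + \frac{1}{\tilde N_{k_{n,b},b}} \le \frac{4(1 + o_{\gamma}(1))}{n-1}\left(\frac{1}{\beta} + \frac{1}{\omega^{\star}_{\beta,b}}\right).
\]
Plugging Lemma~\ref{lem:W_concentration_gaussian} into Lemma~\ref{lem:suff_exploration} ensures $\tilde \mu_{k_{n,a}, a} - \mu_{a} = o(1)$ almost surely, so the numerator of the transportation cost $\mathcal{T}_{n,b} \defn (\tilde \mu_{k_{n,a^\star},a^\star} - \tilde \mu_{k_{n,b},b})^2 / (1/\tilde N_{k_{n,a^\star},a^\star} + 1/\tilde N_{k_{n,b},b})$ converges to $\Delta_{b}^2$. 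Combining these with the equilibrium identity $\Delta_{b}^2 / (1/\beta + 1/\omega^\star_{\beta,b}) = 2/T^\star_{\mathrm{KL},\beta}(\boldsymbol{\mu})$ from Equation~\eqref{eq:equality_equilibrium}, I obtain
\[
    \mathcal{T}_{n,b} \ge \frac{(n-1)(1 - o_{\gamma}(1))}{2\, T^\star_{\mathrm{KL},\beta}(\boldsymbol{\mu})} \quad \text{for every } b \ne a^\star.
\]

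\textbf{Upper bound on the RHS and inversion.} As $\delta \to 0$, the private threshold~\eqref{eq:stopping_private_threshold} satisfies $2 c_{\epsilon, k_{1}, k_{2}}(\tilde N_{1}, \tilde N_{2}, \delta) = 4\log(1/\delta)(1 + o_{\delta}(1)) + \frac{2 \log^2(1/\delta)(1 + o_{\delta}(1))}{\epsilon^2}(1/\tilde N_{1} + 1/\tilde N_{2})$, using that $\mathcal C_{G}(x) = x + o(x)$ and the $\log\log$ terms are lower order. Reapplying the same proportion estimate as above, the second term is bounded above by $\frac{4 \Delta_{\max}^2\, T^\star_{\mathrm{KL},\beta}(\boldsymbol{\mu})\, \log^2(1/\delta)(1 + o(1))}{(n-1)\, \epsilon^2}$. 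Matching the lower bound on $\mathcal{T}_{n,b}$ with the upper bound on the RHS, setting $x = n-1$, $T = T^\star_{\mathrm{KL},\beta}(\boldsymbol{\mu})$ and $L = \log(1/\delta)$, the stopping condition is implied by
\[
    \frac{x^2}{2T} \ge 4 L x + \frac{4 L^2 T \Delta_{\max}^2}{\epsilon^2},
\]
whose positive root is $x = 4 T L \bigl(1 + \sqrt{1 + \Delta_{\max}^2/(2\epsilon^2)}\bigr)$ by Lemma~\ref{lem:inversion_upper_bound} or the explicit quadratic formula.

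\textbf{Conclusion.} The above yields $\tau_{\delta} \le 4 T^\star_{\mathrm{KL},\beta}(\boldsymbol{\mu})(1 + \sqrt{1 + \Delta_{\max}^2/(2\epsilon^2)}) \log(1/\delta)(1 + o_{\gamma,\delta}(1)) + R_{\gamma}$, where the remainder $R_{\gamma}$ depends only on $(T_{\boldsymbol{\mu}, \gamma}, W_{\mu}, W_{\epsilon})$ and has finite expectation by Lemma~\ref{lem:W_concentration_gaussian} and the integrability assumption. Taking expectations, dividing by $\log(1/\delta)$, sending $\delta \to 0$ and then $\gamma \to 0$ delivers the claimed $\limsup$ bound.

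\textbf{Main obstacle.} The delicate point is the inversion step: one must carefully balance the non-private contribution of order $\log(1/\delta)$ and the private contribution of order $\log^2(1/\delta)/((n-1)\epsilon^2)$ in the threshold, while keeping the $o_{\gamma}(1)$ and $o_{\delta}(1)$ errors from $\cC_{G}$, from the $\log\log$ terms in $c_{k}$ and from the proportion fluctuations sufficiently small so that the leading multiplicative constant $4(1 + \sqrt{1 + \Delta_{\max}^2/(2\epsilon^2)})$ is preserved. The factor-four blow-up from doubling and forgetting enters precisely through the bound $\tilde N_{k_{n,a},a} > N_{n,a}/4$, and the square root term emerges from the quadratic created by the mismatch between the linear-in-$n$ LHS and the extra $1/n$ decay of the private term in the RHS.
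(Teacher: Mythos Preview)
Your proposal is correct and follows the same inversion-of-the-stopping-rule template as the paper, but the mechanism by which you extract the factor $4$ is genuinely different and more elementary than the paper's.

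The paper does not use the pointwise bound $\tilde N_{k_{n,a},a} > N_{n,a}/4$. Instead it works at the level of phase-switch \emph{times}: it introduces $T^{+}_{k_n} = \max_{a}T_{k_{n,a}}(a)$ and $T^{-}_{k_n} = \min_{a}T_{k_{n,a}}(a)$, relies on Lemma~\ref{lem:bounded_phase_length_ratio} to establish $T^{+}_{k_n}/T^{-}_{k_n} \le 2+\eta$, and then applies this ratio \emph{twice} to bound $T^{+}_{k_{\delta}+2} \le (2+\eta)^2 T^{+}_{k_{\delta}+1} \le (2+\eta)^2 T_{\zeta}(\delta)$, where $T_{\zeta}(\delta)$ solves a quadratic with leading term $1+\sqrt{1+\Delta_{\max}^2/(2\epsilon^2)}$ (no factor $4$). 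Letting $\eta\to 0$ yields the $4$. Your route collapses both the ``forgetting'' half and the ``wait-for-next-doubling'' half into the single inequality $\tilde N_{k_{n,a},a} > N_{n,a}/4$, so the $4$ already sits inside the quadratic you solve. This is cleaner and avoids Lemma~\ref{lem:bounded_phase_length_ratio} entirely; the paper's decomposition, on the other hand, makes the separate $2\times 2$ origin of the factor explicit (matching the intuitive ``Consequences of Theorem~5'' discussion in Section~\ref{sec:algorithms}).

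Two minor remarks. First, citing Lemma~\ref{lem:starting_phase_best_arm_identified} for $\hat a_n = a^\star$ is slightly off: that lemma is proved for the specific \adaptt{} leader, whereas Lemma~\ref{lem:inversion_GLR_stopping_rule} is stated under the bare hypothesis $\bE_{\bnu}[T_{\boldsymbol{\mu},\gamma}]<\infty$. The conclusion you need still follows, but from the assumption directly (linear sampling $\Rightarrow \tilde N_{k_{n,a},a}\to\infty$ $\Rightarrow \tilde\mu\to\mu$ via Lemma~\ref{lem:W_concentration_gaussian}), exactly as the paper argues when it constructs $N_4$. Second, your $R_\gamma$ must absorb a polynomial in $(W_\mu,W_\epsilon)$ coming from the time after which $\tilde\mu$ is $\gamma$-close to $\mu$; you note this, and Lemma~\ref{lem:W_concentration_gaussian} does guarantee finite expectation, but the paper makes this step explicit via an inversion of the concentration bound (Lemma~\ref{lem:inversion_upper_bound}).
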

\begin{proof}
Theorem~\ref{thm:delta_correct_private_threshold} yields the $\delta$-correctness.

Let $a^\star$ be the unique best arm, i.e. $a^\star(\boldsymbol{\mu}) = \{a^\star\}$.
Let $\zeta > 0$.
Using Equation~\eqref{eq:equality_equilibrium} and the continuity of
\[
	(\boldsymbol{\mu}, w) \mapsto \min_{a \ne a^\star(\boldsymbol{\mu})} \frac{(\mu_{a^\star(\boldsymbol{\mu})} - \mu_{a})^2}{2(1/w_{a^\star(\boldsymbol{\mu})} + 1/w_{a})}
\]
yields that there exists $\gamma_\zeta > 0$ such that $\left\| \frac{N_{n}}{n-1} - \omega^{\star}_{\beta} \right\|_{\infty} \le \gamma_\zeta$ and $\max_{a \in [K]}| \tilde \mu_{k_{n,a}+1, a} - \mu_{a} | \le \gamma_\zeta$ implies that
\begin{align*}
	\forall a \ne a^\star , \quad &\frac{(\tilde \mu_{k_{n,a^\star}+1, a^\star} - \tilde \mu_{k_{n,a}+1, a})^2 }{(n-1)/ N_{n, a^\star}+ (n-1)/ N_{n,a}} \ge \frac{2(1-\zeta)}{T^{\star}_{\mathrm{KL},\beta}(\boldsymbol{\mu})} \\
	&\frac{n-1}{N_{n, a^\star}}+ \frac{n-1}{N_{n,a}} \le \frac{\Delta_{a}^2}{2} (1+\zeta) T^{\star}_{\mathrm{KL},\beta}(\boldsymbol{\mu}) \: .
\end{align*}
We choose such a $\gamma_\zeta $.
Let $\gamma_{\boldsymbol{\mu}} > 0$ be such that for $\bE_{\bnu} [T_{\boldsymbol{\mu}, \gamma}]  < + \infty$ for all $\gamma \in (0, \gamma_{\boldsymbol{\mu}})$, where $T_{\boldsymbol{\mu}, \gamma}$ is defined in Equation~\eqref{eq:rv_T_eps_beta}.
Let $\eta > 0$.
Let $\tilde \gamma_{\boldsymbol{\mu}} \in (0, \gamma_{\boldsymbol{\mu}})$ as in Lemma~\ref{lem:bounded_phase_length_ratio} for this $\eta$.
In the following, let us consider $\gamma \in (0, \min\{\tilde \gamma_{\boldsymbol{\mu}}, \gamma_\zeta, \beta/4, \Delta_{\min}/4\} )$.

Let $N_{3} \ge T_{\boldsymbol{\mu}, \gamma}$ with $\bE_{\bnu} [N_{3}]  < + \infty$ as Lemma~\ref{lem:bounded_phase_length_ratio} for those $(\gamma, \eta)$.
Then, we have $\bE_{\bnu} [T_{\boldsymbol{\mu}, \gamma}]  < + \infty$ and
	\[
	\forall n \ge N_{3}, \quad \frac{\max_{a \in [K]}T_{k_{n,a}}(a) - 1}{\min_{a \in [K]}T_{k_{n,a}}(a) - 1} \le 2+\eta \: .
	\]
Since arms are sampled linearly, it is direct to construct $N_{4} \ge N_{3}$ with $\bE_{\bnu}[N_{4}] < + \infty$ such that, for all $n \ge N_{4}$, we have $\max_{a \in [K]} \max_{k \in \{k_{n,a}, k_{n,a}+1\}}| \tilde \mu_{k, a} - \mu_{a} | \le \gamma$, 
Therefore, we have $\hat a_n = a^\star$.

Let $\kappa \in (0,1)$.
Let $n \ge N_{4}/\kappa$ and $(k_{n,a})_{a \in [K]}$ be the current phases at time $n$.
Combining the above, we have $\hat a_n = a^\star$ and
\begin{align*}
	&\max_{a \in [K]}| \tilde \mu_{k_{n,a}+1, a} - \mu_{a} | \le \gamma  \quad ,  \quad \left\| \frac{N_{n}}{n-1} - \omega^{\star}_{\beta}\right\|_{\infty} \le \gamma   \quad \text{and} \quad \frac{\max_{a \in [K]}T_{k_{n,a}}(a) - 1}{\min_{a \in [K]}T_{k_{n,a}}(a) - 1} \le 2+\eta \: .
\end{align*}
Let $a_{1} = \argmin_{a \in [K]}T_{k_{n,a}}(a)$ and $a_{2} = \argmax_{a \in [K]}T_{k_{n,a}}(a)$.
Therefore, we obtain, for all $a \ne a^\star$,
\begin{align*}
		\frac{(\tilde \mu_{k_{n,\hat a_{n}}+1,\hat a_{n}}  - \tilde \mu_{k_{n,a}+1, a})^2}{1/\tilde N_{k_{n,\hat a_{n}}+1, \hat a_{n}}+ 1/\tilde N_{k_{n,a}+1,a}} &= \frac{(\tilde \mu_{k_{n,a^\star}+1,a^\star}  - \tilde \mu_{k_{n,a}+1, a})^2}{1/N_{T_{k_{n,a^\star}}(a^\star), a^\star}+ 1/N_{T_{k_{n,a}}(a),a}} \\
		&\ge \frac{(\tilde \mu_{k_{n,a^\star}+1,a^\star}  - \tilde \mu_{k_{n,a}+1, a})^2}{1/N_{T_{k_{n,a_{1}}}(a_{1}), a^\star}+ 1/N_{T_{k_{n,a_{1}}}(a_{1}),a}} \\
		&\ge (\min_{a \in [K]} T_{k_{n,a}}(a) -1)\frac{2(1-\zeta)}{T^{\star}_{\mathrm{KL},\beta}(\boldsymbol{\mu})}  \: .
\end{align*}
Similarly, we can show that, for all $a \ne a^\star$,
\begin{align*}
	\frac{1}{\tilde N_{k_{n,a^\star}+1,a^\star} } + \frac{1}{\tilde N_{k_{n,a}+1,a}} &= \frac{1}{N_{T_{k_{n,a^\star}}(a^\star),a^\star} } + \frac{1}{ N_{T_{k_{n,a}}(a),a}} \\
	&\le \frac{1}{N_{T_{k_{n,a_{1}}}(a_{1}),a^\star} } + \frac{1}{ N_{T_{k_{n,a_{1}}}(a_{1}),a}} \\
	&\le \frac{1}{\min_{a \in [K]} T_{k_{n,a}}(a) - 1} \frac{\Delta_{a}^2}{2} (1+\zeta) T^{\star}_{\mathrm{KL},\beta}(\boldsymbol{\mu}) \\
	&\le \frac{1}{\min_{a \in [K]} T_{k_{n,a}}(a) - 1} \frac{\Delta_{\max}^2}{2} (1+\zeta) T^{\star}_{\mathrm{KL},\beta}(\boldsymbol{\mu})  \: .
\end{align*}
Let $(c_{k})_{k \in \N}$ as in Equation~\eqref{eq:non_private_stopping_threshold}.
Using Lemma~\ref{lem:counts_at_phase_switch}, we obtain, for all $a \ne a^\star$,
\begin{align*}
	&2 c_{(k_{n,a^\star}+1) (k_{n,a}+1)}(\tilde N_{k_{n,a^\star}+1,a^\star}, \tilde N_{k_{n,a}+1,a}, \delta/2) \le 8 \log(4 + (\max_{b \in [K]}k_{n,b}-1) \log 2 )  \\
	&\quad + 4 \cC_{G} \left(\log(1/\delta)/2 +  s\log(\max_{b \in [K]}k_{n,b}-1) +  \log(2(K-1)\zeta (s)^2)/2\right)
\end{align*}
Likewise, we obtain, for all $a \in [K]$,
\begin{align*}
	&\frac{1}{\tilde N_{k_{n,a^\star}+1,a^\star} \epsilon^2} \log \left(\frac{2 K (k_{n,a^\star}+1)^{s} \zeta(s)}{\delta} \right)^2 + \frac{1}{\tilde N_{k_{n,a}+1,a} \epsilon^2} \log \left(\frac{2 K (k_{n,a}+1)^{s} \zeta(s)}{\delta} \right)^2 \\
	&\le \frac{\Delta_{\max}^2}{2\epsilon^2} \frac{(1+\zeta) T^{\star}_{\mathrm{KL},\beta}(\boldsymbol{\mu})}{\min_{b \in [K]} T_{k_{n,b}}(b) - 1}  \left( \log(1/\delta) + s \log (\max_{b \in [K]} k_{n,b}+1) + \log (2 K \zeta(s) ) \right)^2
\end{align*}

Let us denote by $T^{+}_{k_n+1} = \max_{b \in [K]} T_{k_{n,b}+1}(b)$, $T^{+}_{k_n+2} = \max_{b \in [K]} T_{k_{n,b}+2}(b)$, $T^{-}_{k_n+1} = \min_{b \in [K]} T_{k_{n,b}+1}(b)$, $T^{-}_{k_n} = \min_{b \in [K]} T_{k_{n,b}}(b)$.
Let $T$ be a time such that $T \ge T^{+}_{k_n+1} \ge \kappa T$.
Using Lemmas~\ref{lem:counts_at_phase_switch} and~\ref{lem:bounded_phase_length_ratio}, we have
\begin{align*}
	(k_{n,b} - 1) \log 2 = \log N_{ T_{k_{n,b}}(b), b} \le \log T_{k_{n,b}}(b) \le \log T^{+}_{k_n} \le \log T^{-}_{k_n} + \log (2+\eta)\: .
\end{align*}
Using the private GLR stopping rule (Equation~\eqref{eq:private_GLR_stopping_rule}), we have
\begin{align*}
&\min \left\{\tau_{\delta}, T\right\} -\kappa T \leq   \sum_{T \ge T^{+}_{k_n} \ge \kappa T} (T^{+}_{k_n+2} - T^{+}_{k_n+1}) \indi{\tau_{\delta} > T^{+}_{k_n+1}} \\
 &\leq  \sum_{T \ge T^{+}_{k_n} \ge \kappa T} (T^{+}_{k_n+2} - T^{+}_{k_n+1})  \mathds{1}\left( \exists a \ne a^\star, \: \frac{(\tilde \mu_{k_{n,a^\star}+1,a^\star}  - \tilde \mu_{k_{n,a}+1, a})^2}{1/\tilde N_{k_{n,a^\star}+1, a^\star}+ 1/\tilde N_{k_{n,a}+1,a}} \right.\\
 &\qquad \qquad \left. < 2c_{\epsilon,k_{n,a^\star}+1,k_{n,a}+1}(\tilde N_{k_{n,a^\star}+1,a^\star}, \tilde N_{k_{n,a}+1,a},\delta) \right)\\
&\leq \sum_{T \ge T^{+}_{k_n} \ge \kappa T} (T^{+}_{k_n+2} - T^{+}_{k_n+1}) \mathds{1}\left( (T^{-}_{k_{n}} -1)\frac{1-\zeta}{T^{\star}_{\mathrm{KL},\beta}(\boldsymbol{\mu})}  < 8 \log( 4 + \log T^{-}_{k_{n}}+ \log(2+\eta)  ) \right.\\
&\quad + 4 \cC_{G} \left(\log(1/\delta)/2 +  s\log(2 + \log_{2} T^{-}_{k_{n}} + \log_{2} (2+\eta)) + \log(2(K-1)\zeta (s)^2)/2\right)   \\
&\quad \left. +  \frac{\Delta_{\max}^2}{2\epsilon^2} \frac{(1+\zeta) T^{\star}_{\mathrm{KL},\beta}(\boldsymbol{\mu})}{T^{-}_{k_{n}} - 1}  \left( \log(1/\delta) + s \log ( 2 + \log_{2} T^{-}_{k_{n}} + \log_{2} (2+\eta) ) + \log (2 K \zeta(s) ) \right)^2 \right) \: ,
\end{align*}
Let $T_{\zeta}(\delta)$ defined as the largest deterministic time such that the above condition is satisfied when replacing $T^{-}_{k_n}$ by $(1-\kappa)T$.
Let $k_{\delta}$ be the largest random vector of phases such that that $T^{+}_{k_{\delta}+1} \le T_{\zeta}(\delta)$ almost surely, hence $T^{+}_{k_{\delta}+2} > T_{\zeta}(\delta)$ almost surely.
Then, using the above yields that $\tau_{\delta} \le T^{+}_{k_{\delta}+2} $ almost surely, hence
\[
	\limsup_{\delta \to 0} \frac{\bE_{\bnu}[\tau_{\delta}]}{\log(1/\delta)} \le \limsup_{\delta \to 0} \frac{\bE_{\bnu}[T^{+}_{k_{\delta}+2}]}{\log(1/\delta)} \le (2+\eta)^2\limsup_{\delta \to 0} \frac{\bE_{\bnu}[T^{+}_{k_{\delta}+1}]}{\log(1/\delta)} \le (2+\eta)^2\limsup_{\delta \to 0} \frac{ T_{\zeta}(\delta)}{\log(1/\delta)} \: ,
\]
where the second inequality uses Lemma~\ref{lem:bounded_phase_length_ratio} twice, i.e. $T^{+}_{k_{\delta}+2} \le (2+\eta) T^{-}_{k_{\delta}+2} \le (2+\eta)^2 T^{+}_{k_{\delta}+1}$, and the last one used the definition of $k_{\delta}$ and that $T_{\zeta}(\delta)$ is deterministic.

Since we are only interested in upper bounding $\limsup_{\delta \to 0} \frac{ T_{\zeta}(\delta)}{\log(1/\delta)}$, we can safely drop the second orders terms in $T$ and $\log(1/\delta)$.
This allows us to remove the terms in $\cO(\log \log T)$ and in $\cO(\log \log (1/\delta))$.
Using that $ \cC_{G}(x) = x + \cO(\log x)$, tedious manipulations yields that
\[
	\limsup_{\delta \to 0} \frac{ T_{\zeta}(\delta)}{\log(1/\delta)} \le \frac{T^{\star}_{\mathrm{KL},\beta}(\boldsymbol{\mu})}{1-\kappa} D_{\zeta}(\mu, \epsilon) \: ,
\]
where
\begin{align*}
	D_{\zeta}(\mu, \epsilon) = \sup \left\{ x \mid  x^2 < \frac{2}{1-\zeta} x + \frac{1+\zeta}{1-\zeta}\frac{\Delta_{\max}^2}{2\epsilon^2} \right\} \le \frac{1}{1-\zeta} \left( 1 + \sqrt{1 + (1-\zeta^2)\frac{\Delta_{\max}^2}{2\epsilon^2}}  \right) \: .
\end{align*}
The last inequality uses that $x^2 - 2bx - c < 0$ for all $x \in [0, b(1 + \sqrt{1 + c/b^2}))$.
Therefore, we have shown that
\begin{align*}
	\limsup_{\delta \to 0} \frac{\bE_{\bnu}[\tau_{\delta}]}{\log(1/\delta)} \le (2+\eta)^2 \frac{T^{\star}_{\mathrm{KL},\beta}(\boldsymbol{\mu})}{(1-\kappa)(1-\zeta)} \left( 1 + \sqrt{1 + (1-\zeta^2)\frac{\Delta_{\max}^2}{2\epsilon^2}}  \right) \: .
\end{align*}
Letting $\kappa$, $\eta$ and $\zeta$ goes to zero yields that
\[
\limsup_{\delta \to 0} \frac{\bE_{\bnu}\left[\tau_{\delta}\right]}{\log (1 / \delta)}
\le 4T^{\star}_{\mathrm{KL},\beta}(\boldsymbol{\mu}) \left( 1 + \sqrt{1 +\frac{\Delta_{\max}^2}{2\epsilon^2}}  \right) \: .
\]
\end{proof}

\paragraph{Concluding the proof of Theorem~\ref{thm:sample_complexity}.} Combining Lemmas~\ref{lem:suff_exploration},~\ref{lem:convergence_others_arms_to_beta_allocation},~\ref{lem:bounded_phase_length_ratio} and~\ref{lem:inversion_GLR_stopping_rule} concludes the proof of Theorem~\ref{thm:sample_complexity}.
We restrict the result to instances such that $\min_{a \ne b}|\mu_{a} - \mu_{b}| > 0$ in order for Lemma~\ref{lem:suff_exploration} to hold.
Note that this is an artifact of the asymptotic proof which could be alleviated with more careful considerations.

\paragraph{Asymptotically optimal $\epsilon$-DP-FC-BAI algorithm.}
While Lemma~\ref{lem:inversion_GLR_stopping_rule} is derived for BAI algorithms converging towards the $\beta$-optimal allocation $\omega_{\mathrm{KL}, \beta}^\star(\boldsymbol{\mu})$, it is direct to see that a similar inversion results can be obtained for BAI algorithms that converge towards the unique optimal allocation $\omega_{\mathrm{KL}}^\star(\boldsymbol{\mu}) = \{w^\star\}$ defined as
\begin{equation} \label{eq:optimal_allocation}
	\omega^{\star}_{\mathrm{KL}}(\boldsymbol{\mu}) \defn \argmax_{\omega \in \Sigma_K} \min_{a \ne a^\star} \frac{\Delta_{a}^2}{1/\omega_{a^\star} + 1/\omega_{a}}  \: .
\end{equation}
At equilibrium, we have equality of the transportation costs (see~\cite{jourdan2022non} for example), namely
\begin{equation} \label{eq:equality_equilibrium_bis}
	\forall a \ne a^\star, \quad \frac{\Delta_{a}^2}{1/\omega^{\star}_{a^\star} + 1/\omega^{\star}_{a}} = 2 T^{\star}_{\mathrm{KL}}(\boldsymbol{\mu})^{-1} \: .
\end{equation}
In addition to the multiplicative four-factor, the price of privacy for asymptotically optimal BAI algorithms when combined with the non-private GLR stopping rule is a problem dependent multiplicative factor $1 + \sqrt{1 +\Delta_{\max}^2/(2\epsilon^2)}$ (Lemma~\ref{lem:inversion_GLR_stopping_rule_optimal}).
We omit the proof since it is the same as the one of Lemma~\ref{lem:inversion_GLR_stopping_rule}.
\begin{lemma} \label{lem:inversion_GLR_stopping_rule_optimal}
    Let $\delta \in (0,1)$
    Assume that there exists $\gamma_{\boldsymbol{\mu}} > 0$ such that for $\bE_{\bnu} [T_{\boldsymbol{\mu}, \gamma}(\omega^\star)]  < + \infty$ for all $\gamma \in (0, \gamma_{\boldsymbol{\mu}})$, where $T_{\boldsymbol{\mu}, \gamma}(w)$ is defined in Equation~\eqref{eq:rv_T_eps_alloc} and $\omega^\star$ is defined in Equation~\eqref{eq:optimal_allocation}.
    Combining the private GLR stopping rule (Equation~\eqref{eq:private_GLR_stopping_rule}) with private threshold (Equation~\eqref{eq:private_threshold}) yields a $\delta$-correct algorithm which satisfies that, for all $\bnu$ with mean $\mu$ such that $|a^\star(\boldsymbol{\mu})| = 1$,
    \[
    \limsup_{\delta \to 0} \frac{\bE_{\bnu}\left[\tau_{\delta}\right]}{\log (1 / \delta)}
\le 4T^{\star}_{\mathrm{KL}}(\boldsymbol{\mu}) \left( 1 + \sqrt{1 +\frac{\Delta_{\max}^2}{2\epsilon^2}}  \right) \: .
    \]
\end{lemma}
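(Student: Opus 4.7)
The plan is to follow the standard blueprint for analyzing Top Two algorithms, as laid out in~\cite{jourdan2022top}, but adapted to accommodate the three modifications introduced by \adaptt{}: per-arm doubling, forgetting, and Laplace noise. The $\delta$-correctness part is immediate from Theorem~\ref{thm:delta_correct_private_threshold}, so the real work is the asymptotic upper bound. I would proceed in four stages. Stage one: \emph{sufficient exploration}. I show that there is a random time $N_0$ with finite expectation after which every arm $a$ satisfies $N_{n,a}\ge \sqrt{n/K}$ (and hence $k_{n,a}=\Omega(\log n)$ by Lemma~\ref{lem:counts_at_phase_switch}). This is proved by contradiction on the undersampled set: if an arm is very undersampled, the private UCB leader must lie in a ``mildly undersampled'' set, and the transportation-cost challenger lives there too. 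Concentration of $\hat\mu_{k,a}$ uses sub-Gaussianity, while concentration of the Laplace perturbation $Y_{k,a}$ uses that $\tilde N_{k,a}|Y_{k,a}|\sim\cE(\epsilon)$ (Lemma~\ref{lem:W_concentration_gaussian}). Because $k_{n,a}$ grows only logarithmically while $\tilde N_{k,a}$ grows as $2^{k-2}$, the Laplace bonus $k_{n,a}/(\epsilon\tilde N_{k_{n,a},a})$ is negligible asymptotically, so the UCB leader behaves essentially as in the non-private case.

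Stage two: \emph{convergence to the $\beta$-optimal allocation}. Once every arm is sufficiently explored, the private empirical means concentrate around the true means, so the private UCB leader coincides with $a^\star$ for $n$ large enough. The $K$-independent tracking procedure (Lemma~\ref{lem:tracking_guaranty_light}) then forces $N_{n,a^\star}/(n-1)\to\beta$. For sub-optimal arms, I use the classical ``overshoot implies not sampled'' argument: if $N_{n,a}/(n-1)\ge \omega^\star_{\beta,a}+\gamma$, then by the equilibrium identity~\eqref{eq:equality_equilibrium}, the transportation cost from $a^\star$ to $a$ strictly exceeds that to some other overshooting arm, so $C_n\ne a$. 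Iterating gives $\bE_{\bnu}[T_{\boldsymbol\mu,\gamma}]<\infty$ for every $\gamma>0$.

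Stage three: \emph{cost of doubling and forgetting}. Since empirical proportions converge to an allocation with $\min_a\omega^\star_{\beta,a}>0$, the phase-switch times of different arms are interleaved in a round-robin fashion, so $\max_a T_{k_{n,a}}(a)/\min_a T_{k_{n,a}}(a)\le 2+\eta$ after a finite random time (this is essentially Lemma~\ref{lem:bounded_phase_length_ratio}). The factor of two comes from the doubling (we must wait until the end of the current episode before evaluating the stopping rule) and another factor of two from forgetting (only half the pulls on an arm are used in its current empirical mean because we discard all earlier episodes).

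Stage four: \emph{inverting the private GLR stopping rule}. On the event of convergence, for all $a\ne a^\star$,
\[
\frac{(\tilde\mu_{k_{n,a^\star}+1,a^\star}-\tilde\mu_{k_{n,a}+1,a})^2}{1/\tilde N_{k_{n,a^\star}+1,a^\star}+1/\tilde N_{k_{n,a}+1,a}}\gtrsim (\min_b T_{k_{n,b}}(b)-1)\cdot\frac{2(1-\zeta)}{T^\star_{\mathrm{KL},\beta}(\boldsymbol\mu)},
\]
while $1/\tilde N_{k_{n,a^\star}+1,a^\star}+1/\tilde N_{k_{n,a}+1,a}\lesssim (1+\zeta)\Delta_{\max}^2T^\star_{\mathrm{KL},\beta}(\boldsymbol\mu)/(2(\min_b T_{k_{n,b}}(b)-1))$. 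Plugging these into the private threshold~\eqref{eq:private_threshold} reduces the stopping condition (at leading order in $\delta$) to an inequality of the form
\[
\frac{n}{T^\star_{\mathrm{KL},\beta}(\boldsymbol\mu)}\gtrsim 2\log(1/\delta)+\frac{\Delta_{\max}^2}{2\epsilon^2}\cdot\frac{T^\star_{\mathrm{KL},\beta}(\boldsymbol\mu)}{n}\log^2(1/\delta).
\]
Solving the quadratic $x^2-2bx-c<0$ with $b=\log(1/\delta)$ and $c=(\Delta_{\max}^2/(2\epsilon^2))\log^2(1/\delta)$ gives $x\le b(1+\sqrt{1+c/b^2})$, which yields the desired factor $1+\sqrt{1+\Delta_{\max}^2/(2\epsilon^2)}$. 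Combining with the doubling/forgetting factor of $4$ and letting $\zeta,\eta\to 0$ gives the claim.

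\textbf{Main obstacle.} The technical bottleneck is stage four, specifically handling the quadratic-in-$\log(1/\delta)$ inversion induced by the private threshold: the non-private proof only needs a linear inversion, whereas here the additive $(1/n+1/m)\log^2(1/\delta)/\epsilon^2$ term in $c_{\epsilon,k_1,k_2}$ couples the stopping time $n$ nonlinearly with the confidence $\delta$. A secondary subtlety is that the Laplace noise forces us to track two independent sources of randomness (sub-Gaussian $W_\mu$ and sub-exponential $W_\epsilon$) throughout the sufficient-exploration stage, and to repeatedly invoke Lemma~\ref{lem:inversion_upper_bound} to convert implicit concentration bounds into phase-index bounds with finite exponential moments, which is needed for taking expectations at the end.
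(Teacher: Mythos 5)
Your stage-four quadratic inversion is the right core argument and is exactly the mechanism in the paper's proof of Lemma~\ref{lem:inversion_GLR_stopping_rule}, which the paper then states this lemma inherits verbatim. Two remarks on scope and target, though. First, Lemma~\ref{lem:inversion_GLR_stopping_rule_optimal} already assumes $\bE_{\bnu}[T_{\boldsymbol{\mu},\gamma}(\omega^\star)]<+\infty$ as a hypothesis, so your stages one and two (sufficient exploration and convergence) are not part of this lemma at all — they belong to the proof of Theorem~\ref{thm:sample_complexity}, which is where that hypothesis gets discharged for the specific \adaptt{} sampling rule. Only the phase-interleaving control (your stage three, Lemma~\ref{lem:bounded_phase_length_ratio}) and the threshold inversion (your stage four) are needed here. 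Second, and more substantively, you write throughout in terms of the $\beta$-optimal allocation $\omega^\star_\beta$, the $\beta$-characteristic time $T^\star_{\mathrm{KL},\beta}(\boldsymbol{\mu})$, and the $\beta$-equilibrium identity~\eqref{eq:equality_equilibrium}; the lemma you were asked to prove is about the optimal allocation $\omega^\star$ of Equation~\eqref{eq:optimal_allocation}, the characteristic time $T^\star_{\mathrm{KL}}(\boldsymbol{\mu})$, and the corresponding equilibrium identity~\eqref{eq:equality_equilibrium_bis}. The algebra is identical after this substitution — which is precisely why the paper omits the proof — so your argument transfers, but as written you have proven Lemma~\ref{lem:inversion_GLR_stopping_rule} (embedded in a proof of Theorem~\ref{thm:sample_complexity}), not the stated lemma. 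Your "main obstacle" diagnosis — that the private threshold's $(1/n+1/m)\log^2(1/\delta)/\epsilon^2$ term forces a quadratic-in-$\log(1/\delta)$ inversion yielding the factor $1+\sqrt{1+\Delta_{\max}^2/(2\epsilon^2)}$ — is exactly right and is the heart of the matter.
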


\subsection{Connection to the lower bound}
\label{app:ssec_connection_to_lower_bound}

In this section, we compare the sample complexity lower bound of Corollary~\ref{crl:lb} with the sample complexity upper bound of Theorem~\ref{thm:sample_complexity}.

\paragraph{Simplification of the upper bound.} The asymptotic expected sample complexity of \adaptt{} (Theorem~\ref{thm:sample_complexity}) is upper bounded by
\begin{align*}
    \limsup_{\delta \to 0} \frac{\bE_{\bnu}\left[\tau_{\delta}\right]}{\log (1 / \delta)}
    &\leq 4T^{\star}_{\mathrm{KL},\beta}(\boldsymbol{\mu}) \left( 1 + \sqrt{1 +\frac{\Delta_{\max}^2}{2\epsilon^2}}  \right)\\
    &\underset{(a)}{\leq} 4T^{\star}_{\mathrm{KL},\beta}(\boldsymbol{\mu})  \left( 2 + \frac{\Delta_{\max}}{\sqrt{2} \epsilon}  \right)
\end{align*}
where $T^{\star}_{\mathrm{KL},\beta}(\boldsymbol{\mu})$ is the $\beta$-characteristic time for Gaussian bandits, and (a) is due to the sub-additivity of the square root.

For $\beta = 1/2$, \cite{russo2016simple} showed that $T^{\star}_{\mathrm{KL},1/2}(\boldsymbol{\mu}) \le 2 T^{\star}_{\mathrm{KL}}(\boldsymbol{\mu})$.

On the other hand, \cite{garivier2016optimal} showed that $H(\boldsymbol{\mu}) \le T^{\star}_{\mathrm{KL}}(\boldsymbol{\mu}) \le 2 H(\boldsymbol{\mu})$, where $H(\boldsymbol{\mu}) \defn \sum_{a \in [K]} 2 \Delta_{a}^{-2}$ with $\Delta_{a^\star} = \Delta_{\min}$.

Plugging these two inequalities in the upper bound of Theorem~\ref{thm:sample_complexity} with $\beta = 1/2$ gives that
\begin{align*}
     \limsup_{\delta \to 0} \frac{\bE_{\bnu}\left[\tau_{\delta}\right]}{\log (1 / \delta)}
    &\leq 8T^{\star}_{\mathrm{KL},1/2}(\boldsymbol{\mu}) + 16 H(\boldsymbol{\mu}) \frac{\Delta_{\max}}{\sqrt{2} \epsilon}
\end{align*}


Since we consider Bernoulli distributions, we know that $0 < \Delta_{\min} \le \Delta_{\max} < 1$.
If we restrict ourselves to instances such that all the gaps have the same order of magnitude ({Condition 1}): there exists a constant $C \ge 1$ such that $\Delta_{\max} \le C \Delta_{\min}$. 

For such instances, we obtain
\begin{align*}
     \limsup_{\delta \to 0} \frac{\bE_{\bnu}\left[\tau_{\delta}\right]}{\log (1 / \delta)}
    &\leq 8T^{\star}_{\mathrm{KL},1/2}(\boldsymbol{\mu}) + 16 H(\boldsymbol{\mu}) \frac{C \Delta_{\min}}{\sqrt{2} \epsilon}\\
    &\leq 8T^{\star}_{\mathrm{KL},1/2}(\boldsymbol{\mu}) + 16 \sqrt{2} \frac{C}{ \epsilon} \left(\frac{1}{\Delta_\text{min}} + \sum_{a=2}^\arms \frac{1}{\Delta_a} \right)
\end{align*}
where the last inequality is due to $H(\boldsymbol{\mu}) \Delta_{\min} \leq \frac{2}{\Delta_\text{min}} + \sum_{a=2}^\arms \frac{2}{\Delta_a}  $.

Finally using that $a + b \leq 2 \max(a, b)$, we get that
\begin{align*}
     \limsup_{\delta \to 0} \frac{\bE_{\bnu}\left[\tau_{\delta}\right]}{\log (1 / \delta)}
    &\leq c \max\left\{T^{\star}_{\mathrm{KL},1/2}(\boldsymbol{\mu}), \frac{C}{\epsilon} \left(\frac{1}{\Delta_\text{min}} + \sum_{a=2}^\arms \frac{1}{\Delta_a} \right) \right\}
\end{align*}
for the universal constant $c = 45.26$.


\paragraph{The lower bound for Bernoulli instances.}
For Bernoulli instances, Corollary~\ref{crl:lb} gives that the lower bound of the expected sample complexity of any $\delta$-correct $\epsilon$-global DP BAI strategy is
 \[
  \limsup_{\delta \to 0} \frac{\bE_{\bnu}\left[\tau_{\delta}\right]}{\log (1 / \delta)} \ge \max\left\{ T^{\star}_{\mathrm{KL}}(\bnu) , \: \frac{1}{6 \epsilon}  \left(\frac{1}{\Delta_\text{min}} + \sum_{a=2}^\arms \frac{1}{\Delta_a} \right)\right\} \: .
 \]
where we use Proposition~\ref{prop:tv} to replace $T^{\star}_{\mathrm{TV}}(\bnu)$ and  $T^{\star}_{\mathrm{KL}}(\bnu)$ is the characteristic time for \textbf{Bernoulli bandits}.

\paragraph{Upper-lower bound discussion for the two privacy regimes.}
In the low privacy regime, our upper bound retrieves $T^{\star}_{\mathrm{KL},1/2}(\boldsymbol{\mu})$ for Gaussian distributions. Since the rewards in the analysis are supposed Bernoulli, the mismatch from \textbf{exact} optimality is coming from the mismatch between the KL divergence of Bernoulli distributions and that of Gaussian, which is generally controllable in most instances where the means are far from the borders, i.e. $0$, and  $1$. This is in essence, similar to the mismatch between UCB and KL-UCB in the regret-minimization literature (Chapter 10 in~\cite{lattimore2018bandit}). To overcome this mismatch, it is necessary to adapt the transportation costs to the family of distributions considered. In our setting, this can be done by using the Bernoulli KL rather than Gaussian KL in lines 12 and 16 of the algorithm. While the Top Two algorithms for Bernoulli distributions have been studied in~\cite{jourdan2022top}, the analysis is more involved. Therefore, it would obfuscate where and how privacy is impacting the expected sample complexity.

In the high privacy regime, and for instances verifying Condition 1, our upper bound matches the lower bound $ \epsilon^{-1} T^{\star}_{\mathrm{TV}}(\bnu)$ up to a constant. Having matching upper and lower bounds \textit{only} for high privacy regimes is an interesting phenomenon that appears in different settings of differential privacy literature, such as regret minimization~\cite{azize2022privacy}, parameter estimation~\cite{cai2021cost} and hidden probabilistic graphical models~\cite{nikolakakis2019optimal}. 
We speculate two facets of this phenomenon:

1. Explicit bounds in high and low privacy regimes: We have matching bounds only in the high or low privacy regimes because the lower bounds are generally harder to explicit and understand in transitional phases. Thus, it is harder to claim optimality in those phases.

2. Information-theoretic roots: There might be a more profound information-theoretic reason in relation to~\cite{nikolakakis2019optimal}. Indeed, there seems to be a link between the privacy budget $\epsilon$ and the information thresholds introduced in~\cite{nikolakakis2019optimal}. Specifically, if the randomized mapping $\mathcal{F}$ in [3] satisfies DP, then the noisy information threshold and noiseless information threshold can be similarly written as a function of $\epsilon$ and the total variation. Finding a rigorous link between these two quantities is an interesting question to explore.

\subsection{Limitation and open problem}
\label{app:ssec_liminitation_and_open_problem}

In the previous section (Appendix~\ref{app:ssec_connection_to_lower_bound}), we argue that the upper and lower bounds match up to multiplicative constants in both privacy regimes, provided we restrict ourselves to instances satisfying Condition 1 (i.e. there exists $C \ge 1$ such that $\Delta_{\max} \le C \Delta_{\min}$).

While this holds for numerous instances, it does not account for instances in which the gaps have different orders of magnitude.
One example would be the regime where $\Delta_{\min} \to 0$ while $\Delta_{\max}$ is fixed, hence yielding $T^{\star}_{\mathrm{KL}}(\boldsymbol{\mu}) \to + \infty$.

In Appendix~\ref{app:ssec_asymptotic_upper_bound_sample_complexity} (see Lemma~\ref{lem:inversion_GLR_stopping_rule} and~\ref{lem:inversion_GLR_stopping_rule_optimal}), we show that combining the private GLR stopping rule (Equation~\eqref{eq:private_GLR_stopping_rule}) with any BAI algorithms whose empirical proportions converge towards the $\beta$-optimal allocation $\omega^{\star}_{\mathrm{KL}, \beta}(\boldsymbol{\mu})$ for Gaussian bandits will incur a problem dependent multiplicative cost 
\begin{equation}\label{eq:the_curse}
    1 + \sqrt{1 +\frac{\Delta_{\max}^2}{2\epsilon^2}}
\end{equation}
in addition to the multiplicative four-factor due to doubling and forgetting.

In order to have matching upper and lower bounds, we would need to have $\Delta_{\min}$ instead of $\Delta_{\max}$ in Equation~\eqref{eq:the_curse}.
Unfortunately, our results show that it is not possible when using the private GLR stopping rule (Equation~\eqref{eq:private_GLR_stopping_rule}) and a sampling rule that is tailored to asymptotic ($\beta$-)optimality for Gaussian bandits.
Therefore, this impossibility result holds for a large class of BAI sampling rules when adapting them to tackle private BAI by using the private GLR stopping rule (Equation~\eqref{eq:private_GLR_stopping_rule}).

\paragraph{Origin of this limitation.}
The term $\Delta_{\max}$ appears due to the private stopping threshold that ensures $\delta$-correctness of the private GLR stopping rule (Equation~\eqref{eq:private_GLR_stopping_rule}).

For asymptotically ($\beta$-)optimal algorithms, the additive term due to privacy in the threshold is of the order
\[
    \frac{1}{\epsilon^2} \left(\frac{1}{N_{n,a}} + \frac{1}{N_{n,a^\star}} \right)\log(1/\delta)^2 \approx_{n \to + \infty} \frac{T^{\star}_{\mathrm{KL}}(\boldsymbol{\mu})}{n}\frac{\Delta_{a}^2}{2\epsilon^2} \log(1/\delta)^2 \: .
\]

Therefore, the private GLR stopping rule (Equation~\eqref{eq:private_GLR_stopping_rule}) will stop when, for all $a \ne a^\star$,
\[
    \frac{n}{T^{\star}_{\mathrm{KL}}(\boldsymbol{\mu})} \ge 2 \log(1/\delta) + \frac{T^{\star}_{\mathrm{KL}}(\boldsymbol{\mu})}{n}\frac{\Delta_{a}^2}{2\epsilon^2} \log(1/\delta)^2 \: ,
\]
which yields the problem-dependent multiplicative cost $1 + \sqrt{1 +\frac{\Delta_{\max}^2}{2\epsilon^2}}$.

\paragraph{Open problem.}
This impossibility result is specific to the way we derive an $\epsilon$-DP version of the GLR stopping rule.
Therefore, a natural question is whether it is possible to derive a better private GLR stopping rule to match the lower bound for all Bernoulli instances.

\paragraph{A two-phase algorithm.} \adaptt{} tracks the non-private lower bound (i.e. $T^\star_{KL}$) as a non-private algorithm would do, and the additional cost tracking that privately is shown to be $T^\star_{TV}/\epsilon$ (up to constants for bandits verifying Condition 1), where the additional cost comes from the added Laplace noise. Another approach would be to first perform a test to determine in which privacy regime the policy resides, and then track (privately) the corresponding characteristic time. Such an algorithm would empirically estimate both $T^{\star}_{TV}$ and $T^{\star}_{KL}$ in the first phase. If the privacy budget $\epsilon$ is bigger than the empirical estimate of $T^{\star}_{TV}/T^{\star}_{KL}$, then this means that we are in the low privacy regime and the algorithm tracks (privately) the KL characteristic time in the second phase (as \adaptt{} does). However, if the privacy budget $\epsilon$ is smaller than the empirical estimate of $T^{\star}_{TV}/T^{\star}_{KL}$, then it is the high privacy regime and the algorithm tracks (privately) the TV characteristic time in the second phase. For such a two-phase algorithm to achieve optimality, properly tuning the amount of time spent in its first phase is a crucial step. Whether it is possible to analyze such an algorithm and quantify the proper tuning (if it even exists) is an interesting direction for future work.

\paragraph{Non-asymptotic sample complexity of \adaptt{}.} 
In the \textit{non-private FC-BAI literature} (i.e. $\epsilon = + \infty$), \textit{there is no tight lower bound in the non-asymptotic regime (i.e. for any value of $\delta$)}. This is the main open problem in FC-BAI, and hence in DP-FC-BAI too. 
In the class of asymptotically ($\beta$-)optimal algorithms, TTUCB~\cite{jourdan2022non} is one of the few to have non-asymptotic guarantees.
Adapting the non-asymptotic analysis of~\cite{jourdan2022non} to the private AdaP-TT is an interesting direction for research. 
We conjecture that such an adaptation is possible (up to technicalities) by adding concentration terms linked to Laplace distribution and losing at least a multiplicative four-factor compared to TTUCB due to doubling and forgetting.

\subsection{Analysis of non-private \adaptt{}: Proof of Theorem~\ref{thm:non_private_sample_complexity}}
\label{app:proof_non_private_sample_complexity}

The proofs detailed in Appendices~\ref{app:ssec_sufficient_exploration} and~\ref{app:ssec_convergence_beta_optimal_allocation} can easily by adapted to provide guarantees on the non-private \adaptt{}, which relies on the non-private leader/challenger defined in Equation~\eqref{eq:non_private_UCB_leader} and Equation~\eqref{eq:non_private_TC_challenger}.

The key difference between the non-private \adaptt{} and the private \adaptt{} lies in the definition of the stopping threshold.
While the private GLR stopping rule (Equation~\eqref{eq:private_GLR_stopping_rule}) has an additive terms in $\cO(\log(1/\delta)^2/\epsilon)$ to cope for the uncertainty due to the Laplace noise, the non-private GLR stopping rule (Equation~\eqref{eq:non_private_GLR_stopping_rule}) scales simply as $\log(1/\delta)$.
This has drastic consequences in terms of asymptotic upper bound on the expected sample complexity.

It is direct to see that most manipulations from Appendix~\ref{app:ssec_asymptotic_upper_bound_sample_complexity} still holds for the non-private \adaptt{}.
Therefore, we use the notations and conditions defined therein.
We show that the non-private GLR stopping rule (Equation~\eqref{eq:non_private_GLR_stopping_rule}) yields
\begin{align*}
&\min \left\{\tau_{\delta}, T\right\} -\kappa T\\ 
&\leq   \sum_{T \ge T^{+}_{k_n} \ge \kappa T} (T^{+}_{k_n+2} - T^{+}_{k_n+1}) \indi{\tau_{\delta} > T^{+}_{k_n+1}} \\
 &\leq  \sum_{T \ge T^{+}_{k_n} \ge \kappa T} (T^{+}_{k_n+2} - T^{+}_{k_n+1})~~\mathds{1}\Bigg( \exists a \ne a^\star, \: \frac{(\hat \mu_{k_{n,a^\star}+1,a^\star}  - \hat \mu_{k_{n,a}+1, a})^2}{1/\tilde N_{k_{n,a^\star}+1, a^\star}+ 1/\tilde N_{k_{n,a}+1,a}} \\
 &\qquad\qquad\qquad\qquad\qquad\qquad\qquad\quad  < 2c_{(k_{n,a^\star}+1)(k_{n,a}+1)}(\tilde N_{k_{n,a^\star}+1,a^\star}, \tilde N_{k_{n,a}+1,a},\delta) \Bigg) \\
&\leq \sum_{T \ge T^{+}_{k_n} \ge \kappa T} (T^{+}_{k_n+2} - T^{+}_{k_n+1})~~\mathds{1}\Bigg( (T^{-}_{k_{n}} -1)\frac{1-\zeta}{T^{\star}_{\mathrm{KL},\beta}(\boldsymbol{\mu})}  < 4 \log( 4 + \log T^{-}_{k_{n}}+ \log(2+\eta)  ) \\
&\qquad + 2 \cC_{G} \left(\log(1/\delta)/2 +  s\log(2 + \log_{2} T^{-}_{k_{n}} + \log_{2} (2+\eta)) + \log((K-1)\zeta (s)^2)/2\right) \Bigg) \: ,
\end{align*}
Let $T_{\zeta}(\delta)$ defined as the largest deterministic time such that the above condition is satisfied when replacing $T^{-}_{k_n}$ by $(1-\kappa)T$.
Let $k_{\delta}$ be the largest random vector of phases such that that $T^{+}_{k_{\delta}+1} \le T_{\zeta}(\delta)$ almost surely, hence $T^{+}_{k_{\delta}+2} > T_{\zeta}(\delta)$ almost surely.
Then, using the above yields that $\tau_{\delta} \le T^{+}_{k_{\delta}+2} $ almost surely, hence
\[
	\limsup_{\delta \to 0} \frac{\bE_{\bnu}[\tau_{\delta}]}{\log(1/\delta)} \le \limsup_{\delta \to 0} \frac{\bE_{\bnu}[T^{+}_{k_{\delta}+2}]}{\log(1/\delta)} \le (2+\eta)^2\limsup_{\delta \to 0} \frac{\bE_{\bnu}[T^{+}_{k_{\delta}+1}]}{\log(1/\delta)} \le (2+\eta)^2\limsup_{\delta \to 0} \frac{ T_{\zeta}(\delta)}{\log(1/\delta)} \: ,
\]

{\bf First-order terms.}
Since we are only interested in upper bounding $\limsup_{\delta \to 0} \frac{ T_{\zeta}(\delta)}{\log(1/\delta)}$, we can safely drop the second orders terms in $T$ and $\log(1/\delta)$.
This allows us to remove the terms in $\cO(\log \log T)$ and in $\cO(\log \log (1/\delta))$.
Using that $ \cC_{G}(x) = x + \cO(\log x)$, tedious manipulations yields that
\[
	\limsup_{\delta \to 0} \frac{ T_{\zeta}(\delta)}{\log(1/\delta)} \le \frac{T^{\star}_{\mathrm{KL},\beta}(\boldsymbol{\mu})}{(1-\kappa)(1 - \zeta)} \: .
\]
Letting $\kappa$, $\eta$ and $\zeta$ goes to zero yields that
\[
		\limsup_{\delta \to 0} \frac{\bE_{\bnu}\left[\tau_{\delta}\right]}{\log (1 / \delta)}
\le 4T^{\star}_{\mathrm{KL},\beta}(\boldsymbol{\mu})  \: .
\]

\section{Extended experimental analysis}\label{app:experiment}


In this section, we perform additional experiments to compare \adaptt{} and DP-SE for FC-BAI. We test the two algorithms in six bandit environments with Bernoulli distributions, as defined by~\citep{dpseOrSheffet}, namely
\begin{align*}
&\mu_1 = (0.95, 0.9, 0.9, 0.9, 0.5),~~~&&\mu_2 = (0.75, 0.7, 0.7, 0.7, 0.7),\\
&\mu_3 = (0, 0.25, 0.5, 0.75, 1),~~~&&\mu_4 = (0.75, 0.625, 0.5, 0.375, 0.25)  \},\\
&\mu_5 = (0.75, 0.53125, 0.375, 0.28125, 0.25),~~~&&\mu_6 = (0.75, 0.71875, 0.625, 0.46875, 0.25)  \}.
\end{align*}
For each Bernoulli instance, we implement the algorithms with \[\epsilon \in \{0.001, 0.005, 0.01, 0.05, 0.1, 0.2, 0.3, 0.4, 0.5, 0.6, 0.7, 0.8, 0.9, 1, 10 \}, \] 
and a risk level $\delta = 0.01$. We verify empirically that the algorithms are $\delta$-correct by running each algorithm $100$ times. In Figure~\ref{fig:ext_experiments}, we plot the evolution of the average stopping time and standard deviation with respect to the privacy budget $\epsilon$. All the algorithms are implemented in Python (version $3.8$) and are tested with an 8-core 64-bits Intel i5@1.6 GHz CPU.

All the experiments validate the same conclusions as the ones reached in Section~\ref{sec:experiments}, i.e.
\begin{enumerate}
    \item \adaptt{} requires fewer samples to provide a $\delta$-correct answer,
    \item there exists two privacy regimes, and 
    in the low-privacy regime, the sample complexity is independent of the privacy budget.
\end{enumerate}

\begin{figure}[p]
    \centering
    \begin{subfigure}[b]{0.48\textwidth}
        \centering
        \scalebox{1.25}{\includegraphics[width=0.8\textwidth]{img/img2.pdf}}
        \caption{$\mu_1$}    
    \end{subfigure}
    \hfill
    \begin{subfigure}[b]{0.48\textwidth}  
        \centering
        \scalebox{1.25}{\includegraphics[width=0.8\textwidth]{img/img3.pdf}}
        \caption{$\mu_2$}    
    \end{subfigure}
    \vskip\baselineskip
    \begin{subfigure}[b]{0.48\textwidth}
        \centering
        \scalebox{1.25}{\includegraphics[width=0.8\textwidth]{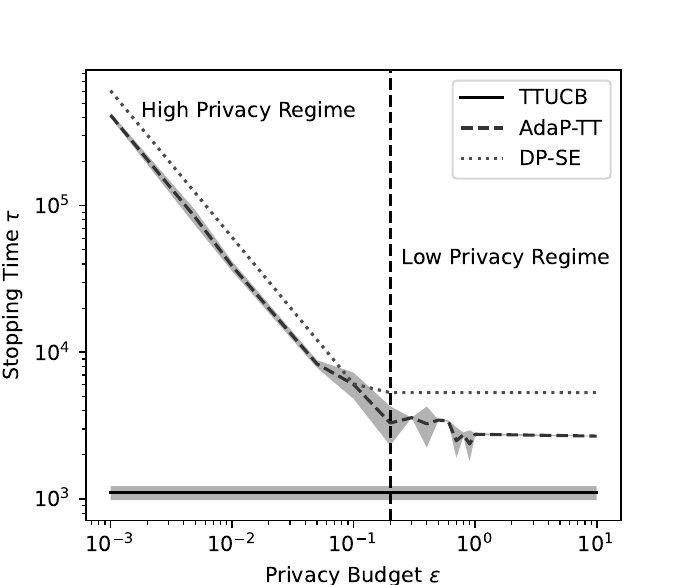}}
        \caption{$\mu_3$}    
    \end{subfigure}
    \hfill
    \begin{subfigure}[b]{0.48\textwidth}  
        \centering
        \scalebox{1.25}{\includegraphics[width=0.8\textwidth]{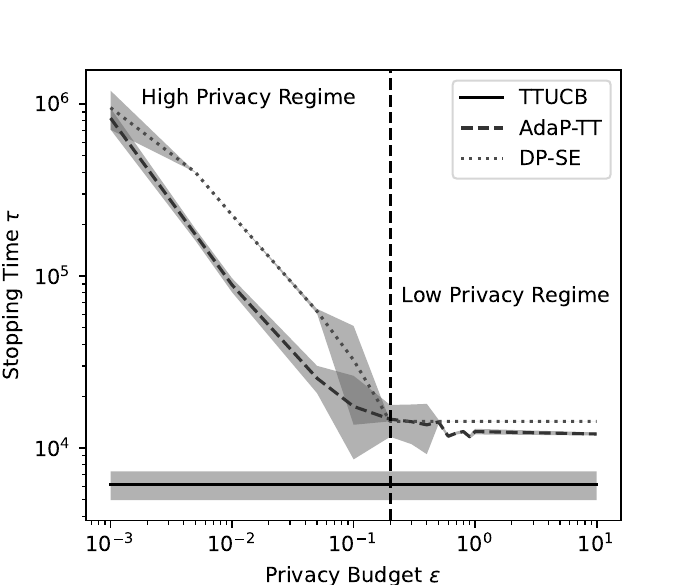}}
        \caption{$\mu_4$}    
    \end{subfigure}
    \vskip\baselineskip
    \begin{subfigure}[b]{0.48\textwidth}
        \centering
        \scalebox{1.25}{\includegraphics[width=0.8\textwidth]{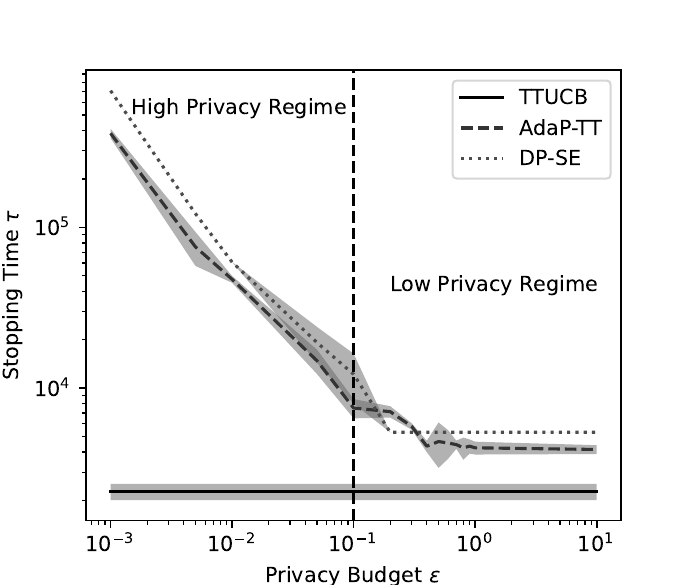}}
        \caption{$\mu_5$}    
    \end{subfigure}
    \hfill
    \begin{subfigure}[b]{0.48\textwidth}  
        \centering
        \scalebox{1.25}{\includegraphics[width=0.8\textwidth]{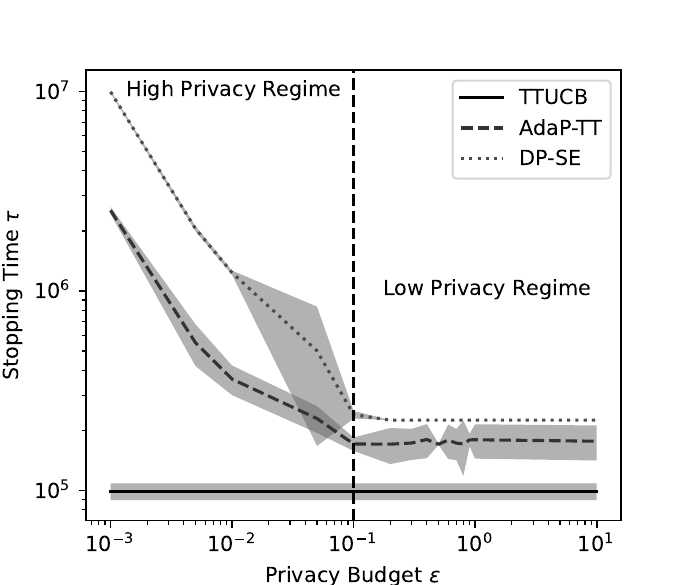}}
        \caption{$\mu_6$}    
    \end{subfigure}
    \caption{Evolution of the stopping time $\tau$ (mean $\pm$ std. over 100 runs) of \adaptt{}, DP-SE, and TTUCB with respect to the privacy budget $\epsilon$ for $\delta = 10^{-2}$ on different Bernoulli instances. The shaded vertical line separates the two privacy regimes. \adaptt{} outperforms DP-SE.}  \label{fig:ext_experiments}
\end{figure}

\end{document}